\documentclass[preprint,12pt]{article}
\usepackage[top=1.1in, bottom=1.2in, left=0.8in, right=0.8in]{geometry}
\usepackage{amsmath,amsfonts,amssymb,amsthm,mathtools}
\usepackage{mathrsfs}
\usepackage{graphicx}
\usepackage{xcolor}
\usepackage{colortbl,dcolumn}
\usepackage{psfrag}
\usepackage{booktabs,multirow,array,longtable,tabularx}
\usepackage{cite}
\usepackage{url}
\usepackage{hyperref}
\hypersetup{
  colorlinks=false,
  pdfborder={0 0 0.7},
  linkbordercolor={1 0 0},
  citebordercolor={0 0.75 0},
  urlbordercolor={0 0 1}
}
\usepackage{enumitem}
\usepackage{caption}
\usepackage{subcaption}
\usepackage{float}
\usepackage[ruled,vlined,linesnumbered]{algorithm2e}
\usepackage{microtype}
\allowdisplaybreaks[4]
\numberwithin{equation}{section}

\newcommand{\R}{\mathbb{R}}
\newcommand{\E}{\mathbb{E}}

\newcommand{\TUSLA}{\mathrm{TUSLA}}
\newcommand{\dd}{\mathrm{d}}
\newcommand{\ip}[2]{\left\langle #1,#2\right\rangle}

\newtheorem{theorem}{Theorem}[section]

\newtheorem{lemma}[theorem]{Lemma}
\newtheorem{proposition}[theorem]{Proposition}
\newtheorem{corollary}[theorem]{Corollary}
\newtheorem{remark}[theorem]{Remark}
\newtheorem{example}[theorem]{Example}
\newtheorem{assumption}{Assumption}

\setlist[itemize]{leftmargin=2em}
\setlist[enumerate]{leftmargin=2em}
\DontPrintSemicolon
\SetKwInput{KwInput}{Input}
\SetKwInput{KwOutput}{Output}
\SetKw{KwReturn}{return}
\SetAlgoNlRelativeSize{-1}

\begin{document}
\title{RELTA-SGLD: Relative-Growth Localized Taming for Nonconvex Stochastic-Gradient Langevin Learning}

\author{
Yiwei Zhou\thanks{School of Mathematics and Statistics, Yunnan University, Kunming, Yunnan 650500, China. Email: \texttt{yiwei.zhou@utexas.edu}.}
\and
Ziheng Chen\thanks{School of Mathematics and Statistics, Yunnan University, Kunming, Yunnan 650500, China. Email: \texttt{12024113103@stu.ynu.edu.cn}.}
}

\date{}
\maketitle

\begin{abstract}

We introduce RELTA-SGLD, a taming scheme that stabilizes superlinear stochastic-gradient updates while reducing unnecessary suppression of the original learning drift. A threshold determines where the taming turns on, while a relative-growth principle derived from the one-step Lyapunov stability condition determines the required taming strength. Together, they produce a lighter $\lambda$-scale denominator and preserve a nonvanishing far-tail return. As a consequence, we prove polynomial moment stability and first-order stationary accuracy in both $W_1$ and $W_2$ for nonconvex SGLD with superlinearly growing stochastic-gradient oracles, improving the corresponding half-order and quarter-order bounds for comparable stochastic-gradient tamed schemes. On Fashion-MNIST under active stabilization pressure, RELTA improves the mean learning metrics over both untamed SGLD and TUSLA and remains competitive with a tuned AdamW reference. In an ordinary-training regime, its lighter localized denominator reduces unnecessary perturbation of the original update and maintains nearly untamed learning dynamics.

\textbf{Key Words: }{\rm\small stochastic-gradient Langevin dynamics; tamed Langevin algorithms; nonconvex learning; stationary Wasserstein accuracy; stochastic stability.}

\textbf{MSC 2020: }{\rm\small Primary 60J22; Secondary 65C30, 60H35, 68T07.}

\end{abstract}

\section{Introduction}
\label{sec:intro}

Modern learning problems often require minimizing nonconvex objectives using stochastic-gradient evaluations.  We write the mean learning objective as
\[
u(\theta)=\mathbb E[\ell(\theta,X)],\qquad \theta\in\mathbb R^d,
\]
and consider algorithms that use an unbiased stochastic-gradient oracle
\(H(\theta,Z)\) for \(\nabla u(\theta)\).  The goal is to construct an output
\(\hat\theta\) with small expected excess risk,
\[
\mathbb E[u(\hat\theta)]-\inf_{\theta\in\mathbb R^d}u(\theta).
\]
Stochastic gradient Langevin dynamics (SGLD) is a noise-perturbed variant of
mini-batch gradient descent and has been used as a tool for nonconvex learning,
approximate optimization, and Langevin sampling
\cite{welling2011bayesian,vollmer2016exploration,raginsky2017nonconvex,xu2018global}.
Given a step size \(\lambda>0\), its explicit update is
\[
\theta_{n+1}
=
\theta_n
-
\lambda H(\theta_n,Z_{n+1})
+
\sqrt{2\lambda/\beta}\,\xi_{n+1}.
\]

The explicit nature of SGLD makes it attractive for large-scale learning, but
it also creates a stability problem when the stochastic-gradient oracle grows
faster than linearly.  In this regime, Euler-type updates may lose moment
control or even diverge numerically
\cite{hutzenthaler2011strong,sabanis2013tamed,brosse2019tamed}.  Tamed SGLD
methods address this instability by replacing the drift update with a
normalized version,
\[
\theta_{n+1}
=
\theta_n
-
\lambda
\frac{H(\theta_n,Z_{n+1})}{D_\lambda(\theta_n)}
+
\sqrt{2\lambda/\beta}\,\xi_{n+1},
\qquad
D_\lambda(\theta)\ge 1,
\]
where \(D_\lambda\) is a taming denominator, which improves
stability by shrinking large stochastic-gradient updates.

Taming, however, is not free.  It changes the dynamics of the chain and introduces bias in exchange for stability.  The main risk of instability usually comes from rare excursions to large-norm states, while most learning takes place in a much smaller region.  A denominator that is active everywhere therefore uses a global modification to control a tail risk.  It may stabilize the chain, but it may also slow ordinary learning more than necessary.

Existing tamed SGLD methods provide the starting point for this work.  TUSLA was introduced to stabilize nonconvex neural-network learning when superlinear gradients make explicit stochastic-gradient updates unreliable~\cite{lovas2023tusla}.  Subsequent developments improved this line through adaptive polygonal variants and nonsmooth-gradient extensions, including TheoPouLa, e-TH$\varepsilon$O POULA, and ReLU-network TUSLA~\cite{lim2024theopoula,lim2025etheopoula,lim2023relu}.  The TUSLA-style globally active \(\sqrt\lambda\)-scale denominator is robust and requires little detailed knowledge of the tail geometry.  At the same time, because the denominator is calibrated from absolute oracle growth, it does not distinguish the tail region where stabilization is needed from the moderate region where learning takes place.  This leaves open a more local design question: how much taming is required by the stochastic-gradient Lyapunov balance, and where should it turn on?

While our earlier work developed a local-taming viewpoint for stochastic-gradient sampling~\cite{zhou2026deterministic,zhou2026localized}, the present work specializes it to learning with SGLD.  In a one-step Lyapunov estimate, the positive quadratic oracle term \(Q\) must be controlled by the inward drift \(I\).  We therefore choose the tail strength from the growth of \(Q\) relative to \(I\), rather than from the absolute growth of the stochastic gradient.  A threshold then localizes this correction: below the threshold, the denominator is inactive and the method follows the untamed stochastic-gradient dynamics; above it, the relative-growth score supplies the needed tail control.  Together, these choices lead to the quadratic \(\lambda\)-scale denominator
\[
D_\lambda(\theta)
=
\sqrt{1+\bigl(K\lambda S_{s_0}(\theta)\bigr)^2}.
\]
We call the resulting method RELTA-SGLD, where RELTA stands for Relative-Growth Localized Taming.

Under explicit nonconvex SGLD assumptions, RELTA achieves first-order stationary accuracy simultaneously in \(W_1\) and \(W_2\).  This improves the half-order \(W_1\) and quarter-order \(W_2\) step-size accuracies available for comparable stochastic-gradient tamed schemes.  The \(W_2\) result uses the same fixed-time approximation estimates as the \(W_1\) analysis, and the SGLD target structure supplies the long-time transfer needed to turn those estimates into a stationary first-order bound.  The resulting finite-time excess-risk complexity is \(\widetilde O(\varepsilon^{-1})\), up to the fixed-temperature Gibbs bias.

Our assumptions are designed for SGLD rather than exact-gradient sampling.  The analysis uses radial tail dissipativity instead of the pairwise convexity at infinity assumed by mTULA~\cite{neufeld2022mtula}, and it requires only the averaged drift to be \(C^2\); the samplewise stochastic-gradient oracle may be nonsmooth.  Recent functional-inequality and entropy analyses of tamed Langevin samplers obtain sharper KL, total-variation, or Wasserstein consequences under exact-drift or related sampling formulations~\cite{lytrasmertikopoulos2025weaker,lytrassabanis2025isoperimetry,lytras2025ktula,ju2025modified,lytrasntousis2026tamed}.  These results are complementary to our setting: under explicit nonconvex SGLD assumptions, RELTA treats superlinear noisy oracles and gives simultaneous first-order stationary bounds in both \(W_1\) and \(W_2\).

The experiments test the learning mechanism in the full-network setting that originally motivated TUSLA.  On Fashion-MNIST under active stabilization pressure, RELTA improves the mean learning metrics over both untamed SGLD and TUSLA and remains competitive with a tuned AdamW reference.  In an ordinary-training regime, its lighter localized denominator minimally perturbs the original update and maintains nearly untamed learning dynamics.  A stationary sampling experiment in a quartic potential confirms the predicted first-order \(W_1/W_2\) behavior of RELTA and its separation from the \(\sqrt\lambda\)-scale baseline.

\section{Model and the RELTA denominator design}
\label{sec:model}

\subsection{Stochastic-gradient model and one-step balance}

Let $(Z_n)$ be i.i.d., let $(\xi_n)$ be i.i.d. standard Gaussian vectors in $\R^d$, and assume that the two sequences are independent.  We study the explicit update
\begin{equation}\label{eq:general-chain}
\theta_{n+1}
=
\theta_n
-
\lambda\frac{H(\theta_n,Z_{n+1})}{D_\lambda(\theta_n)}
+
\sqrt{2\lambda\beta^{-1}}\,\xi_{n+1},
\end{equation}
where $\lambda>0$ is the step size, $\beta>0$ is the inverse temperature, $H(\theta,Z)$ is a stochastic-gradient oracle, and $D_\lambda(\theta)\ge1$ is deterministic conditional on the current state.  We assume that the oracle is unbiased and write
\[
h(\theta)=\E[H(\theta,Z)],
\qquad
I(\theta)=\ip{\theta}{h(\theta)},
\qquad
Q(\theta)=\E\|H(\theta,Z)\|^2.
\]
Here $I$ measures the available inward mean drift, while $Q$ is the conditional quadratic size of the stochastic-gradient update.  Let $R_\lambda$ denote the Markov kernel of \eqref{eq:general-chain}; explicitly, for every measurable $f$ for which the expectation is finite,
\[
(R_\lambda f)(\theta)
=
\mathbb E\!\left[
f(\Theta_{n+1})
\,\middle|\,
\Theta_n=\theta
\right],
\]
where
\[
\Theta_{n+1}
=
\theta
-\lambda\frac{H(\theta,Z_{n+1})}{D_\lambda(\theta)}
+\sqrt{2\lambda\beta^{-1}}\,\xi_{n+1}.
\]

For $V_2(\theta)=1+\|\theta\|^2$, direct conditioning gives
\begin{equation}\label{eq:V2-exact}
R_\lambda V_2(\theta)-V_2(\theta)
=
-2\lambda\frac{I(\theta)}{D_\lambda(\theta)}
+
\lambda^2\frac{Q(\theta)}{D_\lambda(\theta)^2}
+
2\lambda\beta^{-1}d.
\end{equation}
If \(I(\theta)\) is positive outside a compact set, then the first term in \eqref{eq:V2-exact} provides the negative inward contribution needed for a Foster--Lyapunov argument.  Stability requires this term to absorb the other two positive contributions---the quadratic Euler term and the Gaussian term---outside this compact set.  The Gaussian contribution is only an \(O(\lambda)\) state-independent term, so it imposes no additional growth-order requirement on the denominator and is naturally absorbed by the residual inward drift.

Therefore, the central design question is how to choose the denominator so that the inward term can absorb the quadratic Euler term reasonably.

\subsection{TUSLA's stability mechanism and its costs}

To make the standard mechanism explicit, consider a stochastic-gradient
oracle with radial-leading structure
\begin{equation}\label{eq:radial-leading-preview}
H(\theta,u)
=
\mathsf G(\theta,u)
+
\eta\theta\|\theta\|^{2r},
\end{equation}
where $\eta>0$ and $r>0$, and
\begin{equation}\label{eq:lower-order-preview}
\|\mathsf G(\theta,u)\|
\le
\mathcal K(u)(1+\|\theta\|^q),
\qquad
q<2r+1,
\end{equation}
and $\E\mathcal K(Z)^2<\infty$.  The high-order radial term supplies
exterior dissipativity.

\begin{lemma}[Exterior growth of $I$ and $Q$]\label{lem:exterior-radial-balance}
Under \eqref{eq:radial-leading-preview}--\eqref{eq:lower-order-preview},
there exist $R<\infty$ and constants $c_I,C_Q>0$ such that, whenever
$\|\theta\|\ge R$,
\begin{equation}\label{eq:radial-exterior-IQ}
I(\theta)
\ge
c_I\|\theta\|^{2r+2},
\qquad
Q(\theta)
\le
C_Q\|\theta\|^{4r+2}.
\end{equation}
\end{lemma}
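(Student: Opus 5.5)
The plan is to write $h(\theta)=g(\theta)+\eta\theta\|\theta\|^{2r}$, where $g(\theta)=\E[\mathsf G(\theta,Z)]$, and to treat the radial term as the leading contribution. The lower-order part is then absorbed once $\|\theta\|$ is large. Everything rests on the strict inequality $q<2r+1$ and on $\E\mathcal K(Z)^2<\infty$. Note that $\E\mathcal K(Z)\le(\E\mathcal K(Z)^2)^{1/2}<\infty$ by Jensen, so $g$ is well defined. By \eqref{eq:lower-order-preview},
\[
\|g(\theta)\|\le \E\mathcal K(Z)\,(1+\|\theta\|^q).
\]

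\textbf{Lower bound on $I$.} Expanding the inner product gives
\[
I(\theta)=\eta\|\theta\|^{2r+2}+\ip{\theta}{g(\theta)}.
\]
By Cauchy--Schwarz,
\[
|\ip{\theta}{g(\theta)}|\le \E\mathcal K(Z)\,(\|\theta\|+\|\theta\|^{q+1}).
\]
We have $q+1<2r+2$, and also $1<2r+2$ since $r>0$. Hence for $\|\theta\|\ge1$ the error is at most $2\E\mathcal K(Z)\,\|\theta\|^{\max(q+1,1)}$, and
\[
\frac{2\E\mathcal K(Z)\,\|\theta\|^{\max(q+1,1)}}{\|\theta\|^{2r+2}}\to0 .
\]
We therefore choose $R_1\ge1$ so that this error is at most $\tfrac{\eta}{2}\|\theta\|^{2r+2}$ for $\|\theta\|\ge R_1$. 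This gives $c_I=\eta/2$.

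\textbf{Upper bound on $Q$.} Using $\|a+b\|^2\le2\|a\|^2+2\|b\|^2$ samplewise,
\[
\|H(\theta,Z)\|^2\le 2\mathcal K(Z)^2(1+\|\theta\|^q)^2+2\eta^2\|\theta\|^{4r+2}.
\]
Taking expectations,
\[
Q(\theta)\le 4\E\mathcal K(Z)^2\,(1+\|\theta\|^{2q})+2\eta^2\|\theta\|^{4r+2}.
\]
Since $2q<4r+2$ and $0<4r+2$, for $\|\theta\|\ge1$ we have $1+\|\theta\|^{2q}\le 2\|\theta\|^{4r+2}$. If $q$ is negative or zero this is trivial, as $\|\theta\|^{2q}\le1$. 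This yields
\[
C_Q=8\E\mathcal K(Z)^2+2\eta^2 .
\]
Finally take $R=R_1$.

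\textbf{Main obstacle.} The only point requiring care is that the absorption in $I$ uses the strict inequality $q<2r+1$, so the threshold $R$ depends on $\eta$, $r$, $q$ and $\E\mathcal K(Z)$. The $Q$ bound, by contrast, holds with no threshold beyond $\|\theta\|\ge1$. One should also record that $R$ is explicit, for instance
\[
R=\max\Bigl\{1,\bigl(4\E\mathcal K(Z)/\eta\bigr)^{1/(2r+1-\max(q,0))}\Bigr\}.
\]
This explicitness is useful for the later choice of the taming threshold $s_0$.
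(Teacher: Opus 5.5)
Your proof is correct and follows the same route as the paper: you use Cauchy--Schwarz to absorb $\langle\theta,g(\theta)\rangle$ into $\eta\|\theta\|^{2r+2}$ via $q+1<2r+2$, and $\|a+b\|^2\le 2\|a\|^2+2\|b\|^2$ together with $2q<4r+2$ for $Q$. Beyond the paper's version, you make $c_I=\eta/2$, $C_Q=8\E\mathcal K(Z)^2+2\eta^2$ and $R$ explicit, and you handle explicitly the lower-order $\|\theta\|$ term that the paper absorbs silently.
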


\begin{proof}
By \eqref{eq:radial-leading-preview} and Cauchy--Schwarz,
\[
\begin{aligned}
I(\theta)
&=
\eta\|\theta\|^{2r+2}
+
\E\ip{\theta}{\mathsf G(\theta,Z)}
\\
&\ge
\eta\|\theta\|^{2r+2}
-
\E\mathcal K(Z)
\bigl(\|\theta\|+\|\theta\|^{q+1}\bigr).
\end{aligned}
\]
Since $q+1<2r+2$, the radial term dominates outside a sufficiently
large ball.  Moreover,
\[
Q(\theta)
\le
2\E\|\mathsf G(\theta,Z)\|^2
+
2\eta^2\|\theta\|^{4r+2}
\le
C\bigl(1+\|\theta\|^{2q}+\|\theta\|^{4r+2}\bigr).
\]
Since $2q<4r+2$, the second bound follows after increasing $R$ and
$C_Q$ if necessary.
\end{proof}

Thus the numerator-side regularization identifies a compact set beyond which \(I(\theta)\) is positive and grows at least as
\(\|\theta\|^{2r+2}\).  The next question is how TUSLA uses the denominator \(D_\lambda\) to make this negative inward contribution absorb the quadratic Euler remainder.

The full stochastic gradient grows at order $2r+1$.  The TUSLA
denominator
\begin{equation}\label{eq:tusla-denominator}
D_{\TUSLA,\lambda}(\theta)
=
1+\sqrt\lambda\,\|\theta\|^{2r}
\end{equation}
reduces the tamed drift to at most linear growth, with a prefactor of
order $\lambda^{-1/2}$.  Consequently, the positive quadratic term in
\eqref{eq:V2-exact} is reduced to the standard $O(\lambda)$ Lyapunov
scale and can be absorbed by the negative term outside a compact set.

Because the quadratic term carries \(D_\lambda^{-2}\) while the inward term carries only \(D_\lambda^{-1}\), this denominator stabilizes the one-step Lyapunov balance.  Its limitation is that it controls the quadratic term largely in isolation and does not exploit the inward drift available at the same state.

This reveals two distinct sources of over-taming.  On the compact
region, the Foster--Lyapunov argument imposes no tail-stability
requirement, so any taming there only perturbs the original transition.
Outside this region, taming is necessary, but
Lemma~\ref{lem:exterior-radial-balance} gives
\[
I(\theta)\gtrsim \|\theta\|^{2r+2},
\qquad
Q(\theta)\lesssim \|\theta\|^{4r+2}.
\]
Balancing the two state-dependent terms in
\eqref{eq:V2-exact} can already be achieved with a
\(\lambda\|\theta\|^{2r}\)-scale denominator, whereas
\eqref{eq:tusla-denominator} uses the larger
\(\sqrt{\lambda}\|\theta\|^{2r}\) scale.

This \(\sqrt{\lambda}\) scale has a direct dynamical consequence.  In the far
tail,
\[
H(\theta,u)\sim\eta\theta\|\theta\|^{2r},
\qquad
D_{\mathrm{TUSLA},\lambda}(\theta)\sim\sqrt{\lambda}\|\theta\|^{2r},
\qquad
\lambda\frac{H(\theta,u)}{D_{\mathrm{TUSLA},\lambda}(\theta)}
\sim\eta\sqrt{\lambda}\,\theta.
\]
Thus, in the far tail, the superlinear radial drift is converted into a return whose effective coefficient is only of order \(\sqrt{\lambda}\). As \(\lambda\to0\), this coefficient vanishes, so the tamed update no longer retains the full restoring scale of the original drift. At the same time, on bounded sets, the same denominator creates an \(O(\sqrt{\lambda})\) relative perturbation even though no stabilization is needed there. This leading taming error enters the stationary Wasserstein analysis directly and limits the resulting step-size orders.

Each of these limitations has a direct learning interpretation.  First, an overly large denominator suppresses the stochastic-gradient update on ordinary training states and can slow finite-time learning. Second, as \(\lambda\downarrow0\), the vanishing far-tail return slows the recovery from large stochastic-gradient excursions. Third, weak stationary Wasserstein orders imply a larger long-run discretization bias at practical step sizes.  

\subsection{Relative-growth design}

These limitations suggest that the denominator should be designed
from the balance between the positive Euler remainder and the
available inward drift. Up to a fixed absorption constant, the required tail balance is
\begin{equation}\label{eq:relative-requirement}
\frac{\lambda^2Q(\theta)}{D_\lambda(\theta)^2}
\lesssim
\frac{\lambda I(\theta)}{D_\lambda(\theta)},
\qquad\text{equivalently}\qquad
\lambda\frac{Q(\theta)}{D_\lambda(\theta)}
\lesssim
I(\theta).
\end{equation}
Thus the denominator need not follow the absolute growth of $H$.  It needs only to compensate for the growth of $Q$ relative to $I$.  This is the $Q/I$ design principle.

Lemma~\ref{lem:exterior-radial-balance} identifies the relative-growth scale directly.  Outside a sufficiently large ball,
\[
Q(\theta)
\le
\frac{C_Q}{c_I}I(\theta)\|\theta\|^{2r}.
\]
Thus the quadratic Euler term grows relative to the available inward drift at order $\|\theta\|^{2r}$.  We therefore use the threshold score
\begin{equation}\label{eq:threshold-score}
S_{s_0}(\theta)
=
[\|\theta\|^{2r}-s_0]_+,
\qquad s_0\ge0.
\end{equation}
The threshold level \(s_0\) keeps taming inactive in the moderate region and activates it only in the far tail.  In practice, we calibrate it from a train-only untamed pilot run.  Let \(R_0\) be an empirical quantile of the recorded parameter-norm snapshots and set \(s_0=R_0^{2r}\).  In both Fashion-MNIST experiments below, \(R_0\) is chosen as the empirical \(0.8\)-quantile of the pilot norm snapshots.  For notational simplicity, we write \(S(\theta):=S_{s_0}(\theta)\) whenever the threshold \(s_0\) is fixed.

The preceding lemma gives, outside a sufficiently large ball, for another constant
\(C_{QI}>0\),
\[
Q(\theta)
\le
C_{QI} I(\theta)S_{s_0}(\theta).
\]
Substituting this relation into the one-step Lyapunov balance yields
\[
-2\lambda\frac{I(\theta)}{D_\lambda(\theta)}
+
\lambda^2\frac{Q(\theta)}{D_\lambda(\theta)^2}
\le
-\left(
2-
C_{QI}\lambda
\frac{S_{s_0}(\theta)}{D_\lambda(\theta)}
\right)
\lambda
\frac{I(\theta)}{D_\lambda(\theta)}.
\]
The displayed inequality shows that it suffices, at the level of the
one-step Lyapunov balance, to keep
\(\lambda S_{s_0}(\theta)/D_\lambda(\theta)\) uniformly below the
absorption threshold.  Equivalently, in the active tail it suffices to
choose a denominator with scale
\[
D_\lambda(\theta)
\gtrsim
\lambda S_{s_0}(\theta).
\]
This conclusion follows from an upper bound on \(Q\), and should be read
as a sufficient balance condition rather than a universal necessity
statement for arbitrary stochastic-gradient oracles.  For the radial
polynomial-growth models that motivate the construction, however, the
same scale is sharp.  Indeed, if
\[
H(\theta,Z)=\eta\theta\|\theta\|^{2r}+\zeta(\theta,Z),
\qquad
\E[\zeta(\theta,Z)\mid \theta]=0,
\]
and
\begin{equation*}
  \E[\|\zeta(\theta,Z)\|^2\mid \theta]
  =o(\|\theta\|^{4r+2}),
\end{equation*}
then
\[
I(\theta)
=\langle \theta,\E H(\theta,Z)\rangle
=\eta\|\theta\|^{2r+2},
\qquad
Q(\theta)
=\eta^2\|\theta\|^{4r+2}(1+o(1)),
\]
as \(\|\theta\|\to\infty\).  Since
\(S_{s_0}(\theta)\asymp \|\theta\|^{2r}\) in the active tail, this gives
\[
Q(\theta)/I(\theta)\asymp S_{s_0}(\theta).
\]
Thus \(\lambda S_{s_0}\) is the matching tail scale in this radial class.
We therefore set the target tail scale to \(K\lambda S_{s_0}(\theta)\),
where \(K\) controls the coefficient needed for absorption.  The
quadratic denominator introduced below satisfies
\[
D_\lambda(\theta)\ge K\lambda S_{s_0}(\theta),
\qquad
\lambda\frac{S_{s_0}(\theta)}{D_\lambda(\theta)}\le\frac{1}{K},
\qquad
2-C_{QI}\lambda\frac{S_{s_0}(\theta)}{D_\lambda(\theta)}
\ge 2-\frac{C_{QI}}{K}.
\]
Thus $S_{s_0}$ determines the active-tail stabilizing scale, while $K$ is chosen sufficiently large to keep \(2-\frac{C_{QI}}{K}\) positive.

We connect the untamed regime $D_\lambda\approx1$ to the tail-tamed regime
$D_\lambda\approx K\lambda S_{s_0}$ through the fixed quadratic denominator
\begin{equation}\label{eq:quadratic-denom}
D_\lambda(\theta)
=
\sqrt{1+\bigl(K\lambda S_{s_0}(\theta)\bigr)^2}.
\end{equation}
We refer to this as the relative-growth denominator and to the corresponding stochastic-gradient scheme as RELTA-SGLD.  Here RELTA is short for Relative-Growth Localized Taming.  The name separates the algorithmic structure from its design principle: the denominator is calibrated by the exterior $Q/I$ relative-growth law and is localized by the threshold.  The square-root form smoothly interpolates between the inactive and far-tail regimes while retaining the required far-tail scale:
\[
D_\lambda(\theta)
\sim
K\lambda S_{s_0}(\theta),
\qquad
\|\theta\|\to\infty.
\]
At the same time, it remains substantially lighter before the tail regime is reached.
This design yields two immediate improvements over the conventional
\(\sqrt{\lambda}\)-scale denominator.

First, it substantially reduces the attenuation of the learning update.
For the conventional TUSLA and relative-growth denominators,
respectively,
\[
D_\lambda^{\mathrm{T}}(\theta)=1+\sqrt{\lambda}\,\|\theta\|^{2r},
\qquad
D_\lambda^{\mathrm{QI}}(\theta)
=\sqrt{1+\bigl(K\lambda S_{s_0}(\theta)\bigr)^2},
\]
the corresponding relative attenuations satisfy
\[
0\le
1-\frac{1}{D_\lambda^{\mathrm{T}}(\theta)}
=
\frac{\sqrt{\lambda}\,\|\theta\|^{2r}}
     {1+\sqrt{\lambda}\,\|\theta\|^{2r}}
\le
\sqrt{\lambda}\,\|\theta\|^{2r},
\qquad
0\le
1-\frac{1}{D_\lambda^{\mathrm{QI}}(\theta)}
\le
\frac{K^2}{2}\lambda^2S_{s_0}(\theta)^2.
\]
Thus, before the far-tail regime is reached, the proposed denominator
replaces the leading \(O(\sqrt{\lambda})\) attenuation by a
second-order \(O(\lambda^2)\) relative attenuation. Moreover, because
\(S_{s_0}(\theta)=0\) in the untamed region, it leaves the learning
update completely unchanged there.

Second, the proposed scaling prevents the one-step far-tail return from
vanishing as \(\lambda\downarrow0\). Let
\[
H_{\mathrm{rad}}(\theta)=\eta\,\theta\|\theta\|^{2r},
\qquad
S_{s_0}(\theta)\sim\|\theta\|^{2r}
\quad (\|\theta\|\to\infty),
\]
where \(H_{\mathrm{rad}}\) denotes the leading radial regularization term.
The two one-step radial displacements satisfy
\[
-\lambda\frac{H_{\mathrm{rad}}(\theta)}
              {D_\lambda^{\mathrm{T}}(\theta)}
\sim
-\eta\sqrt{\lambda}\,\theta,
\qquad
-\lambda\frac{H_{\mathrm{rad}}(\theta)}
              {D_\lambda^{\mathrm{QI}}(\theta)}
\sim
-\frac{\eta}{K}\,\theta,
\qquad
\|\theta\|\to\infty.
\]
Hence the conventional far-tail return per step degenerates as the
stepsize decreases, whereas the \(Q/I\)-calibrated return remains of
linear order in \(\|\theta\|\), with a coefficient independent of
\(\lambda\).
This design also improves the weak stationary Wasserstein orders relative to the conventional
\(\sqrt{\lambda}\)-scale denominator. We show this in Section~\ref{sec:wasserstein}.

\section{Relative-growth stability theory}
\label{sec:stability}

This section provides the stability inputs for the stationary Wasserstein analysis.  We use the standard Lyapunov--minorization route for ergodicity of SDE approximations~\cite{mattingly2002ergodicity}.  We first derive a quadratic Foster--Lyapunov bound from the exterior \(Q/I\) balance and then extend the argument to all fixed even moments.  These estimates yield uniform moment bounds and a unique invariant law satisfying the corresponding stationary moment bounds.

\subsection{Quadratic drift from the exterior \texorpdfstring{$Q/I$}{Q/I} balance}

The quadratic case isolates the stability mechanism behind the denominator:
the inward term must absorb the quadratic Euler remainder in the one-step
Lyapunov balance.

\begin{theorem}[Quadratic stability under relative growth]\label{thm:quadratic-relative-growth}
Under \eqref{eq:radial-leading-preview}--\eqref{eq:lower-order-preview}, let
\[
S_{s_0}(\theta)=[\|\theta\|^{2r}-s_0]_+,
\qquad
D_\lambda(\theta)
=
\sqrt{1+\bigl(K\lambda S_{s_0}(\theta)\bigr)^2},
\qquad
V_2(\theta)=1+\|\theta\|^2.
\]
Then there exists a relative-growth constant $C_{\rm rel}>0$ such that, whenever $K>C_{\rm rel}/2$, one can choose
$\lambda_0\in(0,1]$ for which
\begin{equation}\label{eq:global-V2-drift-main}
R_\lambda V_2(\theta)
\le
(1-c\lambda)V_2(\theta)+B\lambda,
\qquad
0<\lambda\le\lambda_0,
\end{equation}
for constants $c>0$ and $B<\infty$ independent of $\lambda$.
\end{theorem}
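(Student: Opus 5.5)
Start from the exact one-step identity \eqref{eq:V2-exact},
\[
R_\lambda V_2(\theta)-V_2(\theta)
=
-2\lambda\frac{I(\theta)}{D_\lambda(\theta)}
+\lambda^2\frac{Q(\theta)}{D_\lambda(\theta)^2}
+2\lambda\beta^{-1}d,
\]
and bound the right-hand side separately on a large compact ball and on its exterior. We take the radius $R_1$ larger than the $R$ of Lemma~\ref{lem:exterior-radial-balance} and large enough that $S_{s_0}(\theta)\ge\tfrac12\|\theta\|^{2r}$ for $\|\theta\|\ge R_1$; this is possible since $S_{s_0}(\theta)=\|\theta\|^{2r}-s_0$ once $\|\theta\|^{2r}\ge s_0$. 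On $\{\|\theta\|\ge R_1\}$, Lemma~\ref{lem:exterior-radial-balance} gives
\[
Q(\theta)\le \frac{C_Q}{c_I}I(\theta)\|\theta\|^{2r}\le C_{\rm rel}\,I(\theta)S_{s_0}(\theta),
\qquad C_{\rm rel}:=\frac{2C_Q}{c_I}.
\]
This defines the relative-growth constant. Using $D_\lambda\ge K\lambda S_{s_0}$, we get $\lambda^2Q/D_\lambda^2\le (C_{\rm rel}/K)\,\lambda I/D_\lambda$. The $I$-terms therefore combine into $-(2-C_{\rm rel}/K)\lambda I/D_\lambda$, with $\delta:=2-C_{\rm rel}/K>0$ exactly when $K>C_{\rm rel}/2$.

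The main obstacle is the exterior step: converting $-\delta\lambda I/D_\lambda$ into $-c\lambda V_2$ with $c$ independent of $\lambda$. The denominator can be as large as order $\lambda\|\theta\|^{2r}$, so $I/D_\lambda$ is not uniformly comparable to $\|\theta\|^2$ for large norms. We handle this with the elementary bound $D_\lambda\le 1+K\lambda S_{s_0}\le 1+K\lambda\|\theta\|^{2r}$ and two subcases.

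\emph{Subcase $K\lambda\|\theta\|^{2r}\le1$.} Here $D_\lambda\le 2$, so
\[
\frac{I}{D_\lambda}\ge \frac{c_I}{2}\|\theta\|^{2r+2}\ge \frac{c_I R_1^{2r}}{2}\|\theta\|^2 .
\]

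\emph{Subcase $K\lambda\|\theta\|^{2r}>1$.} Here $D_\lambda\le 2K\lambda\|\theta\|^{2r}$, so
\[
\lambda\frac{I}{D_\lambda}\ge \frac{c_I}{2K}\|\theta\|^2\ge \frac{c_I}{2K}\,\lambda\|\theta\|^2 \qquad\text{for }\lambda\le1 .
\]
In this regime the restoring effect is actually of order one rather than $O(\lambda)$; it is the nonvanishing far-tail return, and we only need the weaker $\lambda$-multiple. In both subcases $\delta\lambda I/D_\lambda\ge c_0\lambda\|\theta\|^2$ with $c_0$ independent of $\lambda$. Writing $\|\theta\|^2=V_2-1$, the exterior bound becomes $R_\lambda V_2\le(1-c_0\lambda)V_2+(c_0+2\beta^{-1}d)\lambda$.

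On the ball $\{\|\theta\|<R_1\}$, no absorption is needed. The lemma's proof gives $|I(\theta)|\le C(1+\|\theta\|^{2r+2})$ and $Q(\theta)\le C(1+\|\theta\|^{4r+2})$, and $D_\lambda\ge1$. Hence
\[
R_\lambda V_2-V_2\le \lambda\bigl(2\sup_{\|\theta\|<R_1}|I|+\lambda\sup_{\|\theta\|<R_1}Q+2\beta^{-1}d\bigr)\le B_1\lambda \qquad\text{for }\lambda\le\lambda_0\le1 .
\]
Since $V_2\le 1+R_1^2$ on the ball, we can write $B_1\lambda\le -c\lambda V_2+(B_1+c(1+R_1^2))\lambda$.

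Taking $c=c_0$ and $B=\max\{c_0+2\beta^{-1}d,\;B_1+c_0(1+R_1^2)\}$ yields \eqref{eq:global-V2-drift-main} for all $\theta$. For the factor $1-c\lambda$ to be a contraction we set $\lambda_0=\min\{1,1/(2c)\}$. No other smallness of $\lambda$ is required, because the tail absorption came from $K$ rather than from $\lambda$.
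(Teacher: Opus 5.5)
Your proof is correct and follows essentially the paper's route: the exact identity \eqref{eq:V2-exact}, exterior absorption of the quadratic term via $\lambda S_{s_0}/D_\lambda\le 1/K$, and a crude bound on the compact ball. The differences are minor. Calibrating $C_{\rm rel}$ against $S_{s_0}$ rather than $1+S_{s_0}$ removes the paper's extra $C_{\rm rel}\lambda_0$ smallness term. Your two subcases are unnecessary, since for $\lambda\le1$ one has $D_\lambda\le 1+KS_{s_0}$, which already gives $I/D_\lambda\ge c_1\|\theta\|^2$ uniformly in $\lambda$; this single bound is how the paper argues.
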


\begin{proof}
Outside a sufficiently large ball,
\[
1+S_{s_0}(\theta)\asymp \|\theta\|^{2r}.
\]
Hence Lemma~\ref{lem:exterior-radial-balance} gives, for suitable constants
$C_{\rm rel},c_0>0$,
\begin{equation}\label{eq:radial-relative-growth}
Q(\theta)
\le
C_{\rm rel} I(\theta)\bigl(1+S_{s_0}(\theta)\bigr),
\qquad
\frac{I(\theta)}{1+S_{s_0}(\theta)}
\ge
c_0\|\theta\|^2.
\end{equation}
Using \eqref{eq:V2-exact}, \eqref{eq:radial-relative-growth}, and
\[
\frac{\lambda S_{s_0}}{D_\lambda}\le \frac1K,
\qquad
D_\lambda\ge1,
\]
we obtain outside that ball
\[
\begin{aligned}
R_\lambda V_2-V_2
&\le
-2\lambda\frac{I}{D_\lambda}
+C_{\rm rel}\lambda^2\frac{I(1+S_{s_0})}{D_\lambda^2}
+2\lambda\beta^{-1}d
\\
&\le
-\left(2-\frac{C_{\rm rel}}{K}-C_{\rm rel}\lambda_0\right)
\lambda\frac{I}{D_\lambda}
+2\lambda\beta^{-1}d.
\end{aligned}
\]
Choose $\lambda_0$ so that the coefficient in parentheses is positive.
Since $D_\lambda\le1+K\lambda S_{s_0}\le1+KS_{s_0}$,
\eqref{eq:radial-relative-growth} yields
\[
\frac{I}{D_\lambda}
\ge
\frac{I}{1+KS_{s_0}}
\ge
c_1\|\theta\|^2
\]
outside a sufficiently large ball.  Hence, for some $c_2>0$ and
$B_1<\infty$,
\[
R_\lambda V_2(\theta)-V_2(\theta)
\le
-c_2\lambda\|\theta\|^2+B_1\lambda
\]
throughout this exterior region.  On this large ball, the polynomial
growth assumptions imply
\[
R_\lambda V_2(\theta)-V_2(\theta)
\le
B_2\lambda
\]
uniformly for $0<\lambda\le\lambda_0$.  Combining the interior and exterior
bounds and using $V_2=1+\|\theta\|^2$ proves
\eqref{eq:global-V2-drift-main}.
\end{proof}

\subsection{Polynomial moments and uniform moment bounds}

The Wasserstein estimates in Section~\ref{sec:wasserstein} involve
polynomially weighted one-step errors and weighted transport costs.  We
therefore need moment control above order two.  The next result provides the
required Foster--Lyapunov bound for every fixed even order.

Here $K$ is the same denominator coefficient as in
\eqref{eq:quadratic-denom}.  The quadratic $Q/I$ mechanism involves the
relative-growth constant from Theorem~\ref{thm:quadratic-relative-growth},
while the radial polynomial-moment argument below requires the explicit
coefficient condition $K>\eta/2$.  Throughout, $K$ is chosen large enough
to satisfy the relevant finite collection of coefficient conditions.

For $s_0\ge0$, consider the RELTA chain
\begin{equation}\label{eq:threshold-radial-chain}
\theta_{n+1}
=
\theta_n
-
\lambda
\frac{\mathsf G(\theta_n,Z_{n+1})+\eta\theta_n\|\theta_n\|^{2r}}
{\sqrt{1+\bigl(K\lambda[\|\theta_n\|^{2r}-s_0]_+\bigr)^2}}
+
\sqrt{2\lambda\beta^{-1}}\,\xi_{n+1}.
\end{equation}

\begin{theorem}[Polynomial Foster--Lyapunov bounds]\label{thm:polynomial-foster-lyapunov}
Fix an integer $\ell\ge1$.  Suppose
\[
\|\mathsf G(\theta,u)\|
\le
\mathcal K(u)(1+\|\theta\|^q),
\qquad
q<2r+1,
\qquad
\E\mathcal K(Z_1)^{2\ell}<\infty.
\]
If
\begin{equation}\label{eq:K-threshold-main}
K>\eta/2,
\end{equation}
then there exist $\lambda_0\in(0,1]$, $a_\ell>0$, and $b_\ell<\infty$ such
that
\begin{equation}\label{eq:V2p-main}
R_\lambda V_{2\ell}(\theta)
\le
(1-a_\ell\lambda)V_{2\ell}(\theta)+b_\ell\lambda,
\qquad
V_{2\ell}(\theta)=1+\|\theta\|^{2\ell},
\end{equation}
for every $0<\lambda\le\lambda_0$.
\end{theorem}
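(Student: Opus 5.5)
We prove \eqref{eq:V2p-main} by expanding $\|\theta_{n+1}\|^{2\ell}$ around $\|\theta\|^{2\ell}$ and splitting the state space into a large ball and its complement. Write
\[
\Delta=-\lambda\frac{H(\theta,Z)}{D_\lambda(\theta)}+\sqrt{2\lambda\beta^{-1}}\,\xi,
\qquad
\|\theta+\Delta\|^2=\|\theta\|^2+2\ip{\theta}{\Delta}+\|\Delta\|^2=: \|\theta\|^2+\rho .
\]
Then $\|\theta+\Delta\|^{2\ell}=(\|\theta\|^2+\rho)^\ell=\sum_{k=0}^{\ell}\binom{\ell}{k}\|\theta\|^{2(\ell-k)}\rho^k$. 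The $k=1$ term carries the inward drift $-2\lambda\ell\|\theta\|^{2\ell-2}I(\theta)/D_\lambda$. The terms with $k\ge2$ must be absorbed by it.

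\textbf{Exterior estimate.} Let $\|\theta\|$ be large, with $a:=\lambda\eta\|\theta\|^{2r}/D_\lambda(\theta)$. Because $D_\lambda\ge K\lambda S_{s_0}$ and $S_{s_0}\sim\|\theta\|^{2r}$, we have $a\le \eta/K+o(1)$. The deterministic radial part of the update is $\theta\mapsto(1-a)\theta$. Its squared norm is $(1-a)^2\|\theta\|^2=(1-2a+a^2)\|\theta\|^2$, which is strictly contractive exactly when $a<2$. Since $a\to\eta/K$ in the far tail, the condition $K>\eta/2$ makes $\eta/K<2$. We therefore get $|1-a|\le 1-\delta$ uniformly outside a ball, for some $\delta>0$ independent of $\lambda$. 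This also handles the regime where $a$ is small, that is, $\lambda S_{s_0}\ll1$: there $(1-a)^2\le 1-a$, a contraction of order $a\gtrsim\lambda\|\theta\|^{2r}$. So in all regimes
\[
\|(1-a)\theta\|^{2\ell}\le(1-c\min(a,1))\|\theta\|^{2\ell},
\qquad
\min(a,1)\ge c'\lambda .
\]
This is cleaner than working with the $Q/I$ form used for Theorem~\ref{thm:quadratic-relative-growth}. Write $\theta+\Delta=(1-a)\theta+E$, where
\[
E=-\lambda\,\mathsf G(\theta,Z)/D_\lambda+\sqrt{2\lambda\beta^{-1}}\,\xi .
\]
Expand $\E\|(1-a)\theta+E\|^{2\ell}$ binomially in the same way. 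The main term contracts as above. The first-order cross term is $2\ell\|(1-a)\theta\|^{2\ell-2}\ip{(1-a)\theta}{\E E}$. The Gaussian part of $\E E$ vanishes, and the rest is bounded by $\lambda\,\E\mathcal K\,(1+\|\theta\|^{q})\|\theta\|^{2\ell-1}/D_\lambda$. Compared with the gain $\min(a,1)\|\theta\|^{2\ell}$, this is smaller by the factor $\|\theta\|^{q-2r-1}\to0$, since $q<2r+1$. Higher-order terms involve $\E\|E\|^k$ for $k\le 2\ell$, which is finite by $\E\mathcal K^{2\ell}<\infty$. Each such term is bounded by $\|\theta\|^{2\ell-k}$ times $\lambda^{k/2}$ (Gaussian part) or $(\lambda\|\theta\|^q/D_\lambda)^k$ (gradient part). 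These are again lower order relative to the contraction, up to $O(\lambda)$ constants.

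\textbf{Main obstacle.} The delicate point is making the comparison uniform across the transition region, where $\lambda\|\theta\|^{2r}$ is of order one. There the contraction rate $\min(a,1)$ interpolates between $\lambda\|\theta\|^{2r}$ and a constant. I would handle this by checking that every error term carries either a factor $a\cdot\|\theta\|^{q-2r-1}$ or a factor $\lambda$ times lower powers of $\|\theta\|$. Concretely, I would bound each ratio error$/(a\|\theta\|^{2\ell})$ using $\lambda\|\theta\|^{q}/D_\lambda\le a\,\|\theta\|^{q-2r}/\eta$. The Gaussian contributions are of order $\lambda\|\theta\|^{2\ell-2}$, and these are absorbed because $a\ge c\lambda\|\theta\|^{2r}$ on the set where $S_{s_0}$ is comparable to $\|\theta\|^{2r}$. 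The outcome is $R_\lambda V_{2\ell}\le V_{2\ell}-c\lambda\|\theta\|^{2\ell}$ outside a ball of radius $R$, with $R$ independent of $\lambda\le\lambda_0$.

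\textbf{Interior estimate and conclusion.} On $\{\|\theta\|\le R\}$ the drift is bounded and $D_\lambda\ge1$. The same binomial expansion then gives $R_\lambda V_{2\ell}-V_{2\ell}\le B\lambda$, because every term beyond the zeroth carries at least a factor $\lambda$ (using $\lambda\le1$). Combining the interior and exterior bounds, and writing $-c\lambda\|\theta\|^{2\ell}=-c\lambda V_{2\ell}+c\lambda$, yields \eqref{eq:V2p-main} with $a_\ell=c$ and $b_\ell=B+c$.
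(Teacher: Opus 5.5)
Your proposal is correct and follows essentially the same route as the paper: the decomposition $(1-a)\theta+E$, the uniform bound $a\le \eta/K+o(1)<2$ outside a ball (uniform in $\lambda$) coming from $K>\eta/2$, a contraction factor linear in $a$ together with $a\ge c'\lambda$ on the exterior, and error terms smaller by the factor $\|\theta\|^{q-2r-1}\to0$. The differences are minor: you fold the Gaussian increment into $E$ instead of adding it afterwards via a Taylor bound and Young's inequality, and you use a case split on $\min(a,1)$ instead of the paper's compactness argument for $g_m$; two small slips should be fixed, namely that the intermediate claim $|1-a|\le 1-\delta$ uniformly outside a ball fails when $a$ is small (your $\min(a,1)$ bound is the correct statement), and that covering the ball $\{\|\theta\|\le R\}$ requires $b_\ell\ge B+c(1+R^{2\ell})$ rather than $B+c$.
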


\begin{proof}
Let \(m=2\ell\), \(t=\|\theta\|\), and write \(R_0=s_0^{1/(2r)}\) for the fixed threshold.  Then
\[
D_{\lambda,R_0}(t)
=
\sqrt{1+\bigl(K\lambda[t^{2r}-R_0^{2r}]_+\bigr)^2}.
\]
Before adding the Gaussian increment, define
\[
Y
=
\theta
-
\lambda
\frac{\mathsf G(\theta,Z)+\eta\theta t^{2r}}
{D_{\lambda,R_0}(t)}.
\]
We first prove the global bound
\begin{equation}\label{eq:pre-noise-global-main}
\E_U\|Y\|^m
\le
(1-c_m\lambda)t^m+B_m\lambda.
\end{equation}
On every fixed ball, this follows from the polynomial moment bound on
\(\mathsf G\); the details are given in
Appendix~\ref{app:high-moment-details}.  It remains to identify the negative
drift for large \(t\).

Define
\[
a_\lambda(t)
=
\frac{\lambda t^{2r}}{D_{\lambda,R_0}(t)},
\qquad
E
=
-\frac{\lambda\mathsf G(\theta,Z)}{D_{\lambda,R_0}(t)}.
\]
Thus
\begin{equation}\label{eq:Y-decomp-main}
Y
=
\bigl(1-\eta a_\lambda(t)\bigr)\theta+E.
\end{equation}
The quantity \(a_\lambda(t)\) is the effective one-step coefficient of the
leading radial term.

Since \(K>\eta/2\), choose \(A\) such that
\[
\frac1K<A<\frac2\eta.
\]
Because
\[
a_\lambda(t)
\le
\frac1K
\frac{t^{2r}}{t^{2r}-R_0^{2r}}
\qquad (t>R_0),
\]
there exists \(R\ge R_0\) such that
\[
0\le a_\lambda(t)\le A,
\qquad
t\ge R,
\qquad
0<\lambda\le1.
\]
For \(a>0\), let
\[
g_m(a)
=
\frac{1-|1-\eta a|^m}{a},
\qquad
g_m(0)=m\eta.
\]
The function \(g_m\) is continuous and strictly positive on \([0,A]\).
Hence
\[
\chi_m:=\min_{0\le a\le A}g_m(a)>0,
\]
and therefore
\begin{equation}\label{eq:radial-margin-main}
|1-\eta a_\lambda(t)|^m
\le
1-\chi_m a_\lambda(t),
\qquad
t\ge R.
\end{equation}

The lower-order stochastic-gradient displacement satisfies
\[
\|E\|
\le
a_\lambda(t)\mathcal K(Z)\delta(t)t,
\qquad
\delta(t)
=
t^{-2r-1}+t^{q-2r-1}.
\]
The condition \(q<2r+1\) gives \(\delta(t)\to0\).  Expanding the \(m\)-th
power in \eqref{eq:Y-decomp-main} and taking expectation in \(Z\) yields
\begin{equation}\label{eq:pre-noise-tail-main}
\E_U\|Y\|^m
\le
|1-\eta a_\lambda(t)|^m t^m
+
C_m a_\lambda(t)\delta(t)t^m,
\end{equation}
where \(C_m<\infty\) depends only on the fixed model parameters and the
moments of \(\mathcal K(Z)\); the expansion and the constant are recorded in
Appendix~\ref{app:high-moment-details}.  Enlarging \(R\) so that
\(C_m\delta(t)\le\chi_m/2\) for \(t\ge R\), and combining
\eqref{eq:radial-margin-main}--\eqref{eq:pre-noise-tail-main}, gives
\[
\E_U\|Y\|^m
\le
\left(1-\frac{\chi_m}{2}a_\lambda(t)\right)t^m,
\qquad
t\ge R.
\]
Moreover, this effective radial coefficient has a uniform
\(\lambda\)-order lower bound on the same exterior region.  Indeed, with
\(s=t^{2r}\) and \(s\ge R^{2r}\),
\[
D_{\lambda,R_0}(t)
=
\sqrt{1+\bigl(K\lambda[s-R_0^{2r}]_+\bigr)^2}
\le
1+K\lambda s,
\]
and therefore
\[
a_\lambda(t)
\ge
\frac{\lambda s}{1+K\lambda s}.
\]
The function \(s\mapsto \lambda s/(1+K\lambda s)\) is increasing on
\([0,\infty)\).  Hence, for \(t\ge R\) and \(0<\lambda\le\lambda_0\),
\[
a_\lambda(t)
\ge
\frac{\lambda R^{2r}}
{1+K\lambda R^{2r}}
\ge
\frac{\lambda R^{2r}}
{1+K\lambda_0 R^{2r}}
=:c_R\lambda.
\]
This proves the required negative drift outside the ball.  Combining it with
the compact-region estimate gives \eqref{eq:pre-noise-global-main}.

Finally, the centered Gaussian increment satisfies
\[
\E_\xi
\left[
\left\|Y+\sqrt{2\lambda\beta^{-1}}\,\xi\right\|^m
\,\middle|\,
Y
\right]
\le
\|Y\|^m
+
C_m^{\mathrm G}\lambda\bigl(1+\|Y\|^{m-2}\bigr).
\]
Using Young's inequality to control \(\|Y\|^{m-2}\) by an arbitrarily small
multiple of \(\|Y\|^m\) plus a constant, and then applying
\eqref{eq:pre-noise-global-main}, proves \eqref{eq:V2p-main}.  The remaining
algebra is given in Appendix~\ref{app:high-moment-details}.
\end{proof}

The condition \(K>\eta/2\) can be understood from the
large-\(\|\theta\|\) behavior of the leading radial term. In this regime, its update is approximately
\[
\theta^+
\approx
\left(1-\frac{\eta}{K}\right)\theta.
\]
Hence \(\|\theta^+\|<\|\theta\|\) if and only if \(\left|1-\frac{\eta}{K}\right|<1\) or equivalently, \(K>\eta/2\).

\begin{corollary}[Uniform moment bounds]\label{cor:uniform-moments}
Under the assumptions of Theorem~\ref{thm:polynomial-foster-lyapunov}, if
$\E V_{2\ell}(\theta_0)<\infty$, then
\[
\E V_{2\ell}(\theta_n)
\le
(1-a_\ell\lambda)^n\E V_{2\ell}(\theta_0)
+\frac{b_\ell}{a_\ell},
\qquad n\ge0.
\]
In particular, \(\sup_{n\ge0}\E V_{2\ell}(\theta_n)<\infty\).
\end{corollary}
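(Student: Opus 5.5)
The plan is to iterate the one-step Foster--Lyapunov inequality \eqref{eq:V2p-main} of Theorem~\ref{thm:polynomial-foster-lyapunov} along the chain. We fix $0<\lambda\le\lambda_0$, so that $a_\ell\lambda\le a_\ell\lambda_0$. If necessary we shrink $\lambda_0$ so that $a_\ell\lambda_0\le1$, which makes the contraction factor $1-a_\ell\lambda$ lie in $[0,1)$.

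\textbf{Step 1: one-step bound in expectation.} By the Markov property, $\E[V_{2\ell}(\theta_{n+1})\mid\theta_n]=(R_\lambda V_{2\ell})(\theta_n)$. Here $\theta_n$ is a function of $\theta_0,Z_1,\dots,Z_n,\xi_1,\dots,\xi_n$, and $(Z_{n+1},\xi_{n+1})$ is independent of all of these. Applying \eqref{eq:V2p-main} at $\theta=\theta_n$ and taking expectations gives
\[
\E V_{2\ell}(\theta_{n+1})\le(1-a_\ell\lambda)\,\E V_{2\ell}(\theta_n)+b_\ell\lambda .
\]
Since $V_{2\ell}\ge1$ is nonnegative, conditional expectations of nonnegative functions are always defined, possibly as $+\infty$. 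Finiteness then propagates by induction from $\E V_{2\ell}(\theta_0)<\infty$, because the right-hand side is finite whenever $\E V_{2\ell}(\theta_n)$ is.

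\textbf{Step 2: unrolling the recursion.} Write $x_n=\E V_{2\ell}(\theta_n)$ and $\rho=1-a_\ell\lambda\in[0,1)$. Induction on $n$ gives
\[
x_n\le\rho^n x_0+b_\ell\lambda\sum_{k=0}^{n-1}\rho^k .
\]
The geometric sum is bounded by $1/(1-\rho)=1/(a_\ell\lambda)$, so the second term is at most $b_\ell/a_\ell$. This is exactly the stated bound. Since $\rho^n\le1$, taking the supremum over $n$ gives $\sup_n\E V_{2\ell}(\theta_n)\le \E V_{2\ell}(\theta_0)+b_\ell/a_\ell<\infty$.

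\textbf{Main obstacle.} The only delicate point is measure-theoretic: \eqref{eq:V2p-main} must be applied pointwise at the random state $\theta_n$. This requires identifying the conditional law of $\theta_{n+1}$ given $\mathcal F_n=\sigma(\theta_0,Z_{1:n},\xi_{1:n})$ with the kernel $R_\lambda(\theta_n,\cdot)$. That identification follows from the independence of the fresh pair $(Z_{n+1},\xi_{n+1})$, via the freezing lemma for conditional expectations. It also needs the nonnegativity of $V_{2\ell}$, so that no integrability issue arises before finiteness has been established. Everything else is the elementary geometric-series computation above.
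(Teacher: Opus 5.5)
Your proposal is correct and is exactly the paper's argument: take expectations in the one-step drift inequality \eqref{eq:V2p-main}, iterate, and bound the geometric series by $b_\ell/a_\ell$. Shrinking $\lambda_0$ so that $a_\ell\lambda_0\le1$ is harmless but not actually needed. Since $R_\lambda V_{2\ell}\ge1$ and $V_{2\ell}$ is unbounded, \eqref{eq:V2p-main} already forces $1-a_\ell\lambda\ge0$.
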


\begin{proof}
Iterating \eqref{eq:V2p-main} and summing the resulting geometric series
gives the claim.
\end{proof}

For the stationary-accuracy analysis below, we also introduce the mean-oracle RELTA chain.  It retains the same taming denominator
and Gaussian increment as the RELTA chain, but replaces
the stochastic oracle by its conditional mean:
\begin{equation}\label{eq:mean-oracle-chain}
\bar\theta_{n+1}^{\lambda}
=
\bar\theta_n^{\lambda}
-
\lambda
\frac{h(\bar\theta_n^{\lambda})}
     {D_\lambda(\bar\theta_n^{\lambda})}
+
\sqrt{2\beta^{-1}\lambda}\,\xi_{n+1},
\qquad
h(\theta)
:=
\mathbb E\!\left[H(\theta,Z)\right].
\end{equation}
We denote its transition kernel by $\bar R_\lambda$.
This chain will be used in Section~\ref{sec:wasserstein} to
separate stochastic-gradient error from taming and
time-discretization error.  Only its finite-time moment control is
needed; no invariant-law or ergodicity result for
\eqref{eq:mean-oracle-chain} will be used.

\begin{corollary}[Finite-time moments of the mean-oracle chain]
\label{cor:mean-oracle-chain-moments}
Let $m\in\mathbb N$ and $T>0$.  Under the assumptions of
Theorem~\ref{thm:polynomial-foster-lyapunov}, there exists a constant
$C_{m,T}<\infty$, independent of
$\lambda\in(0,\lambda_0]$, such that
\begin{equation}\label{eq:mean-oracle-finite-time-moments}
\sup_{0<\lambda\le\lambda_0}
\sup_{0\le n\lambda\le T}
\mathbb E
\left[
\left\|
\bar\theta_n^\lambda
\right\|^{2m}
\right]
\le
C_{m,T}
\left(
1+
\mathbb E
\left[
\left\|
\bar\theta_0^\lambda
\right\|^{2m}
\right]
\right).
\end{equation}
Consequently, every moment of order $p\in(0,2m]$ is uniformly bounded
on the same physical-time interval.
\end{corollary}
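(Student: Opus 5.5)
The mean-oracle chain \eqref{eq:mean-oracle-chain} is a special case of the RELTA chain \eqref{eq:threshold-radial-chain}. I will therefore show that Theorem~\ref{thm:polynomial-foster-lyapunov} applies to it directly, and then iterate the resulting drift inequality over $n\le T/\lambda$ steps.

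\textbf{Step 1: the mean oracle satisfies the hypotheses.} Write
\[
h(\theta)=\bar{\mathsf G}(\theta)+\eta\theta\|\theta\|^{2r},
\qquad
\bar{\mathsf G}(\theta):=\E\,\mathsf G(\theta,Z).
\]
Then $h$ is a deterministic oracle with the same radial-leading structure \eqref{eq:radial-leading-preview}. Its lower-order part satisfies
\[
\|\bar{\mathsf G}(\theta)\|\le \E\mathcal K(Z)\,(1+\|\theta\|^q),
\]
so \eqref{eq:lower-order-preview} holds with the constant random variable $\bar{\mathcal K}:=\E\mathcal K(Z)$. This constant has moments of every order, and $\E\mathcal K(Z)\le(\E\mathcal K(Z)^{2\ell})^{1/(2\ell)}<\infty$ by Jensen's inequality. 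The denominator, the threshold $s_0$, the coefficient $K>\eta/2$ and the Gaussian increment are all unchanged. The proof of Theorem~\ref{thm:polynomial-foster-lyapunov} never uses randomness of $\mathsf G$ beyond these moment bounds. Hence it applies verbatim with $\ell=m$ and gives
\[
\bar R_\lambda V_{2m}\le(1-\bar a_m\lambda)V_{2m}+\bar b_m\lambda,
\qquad 0<\lambda\le\bar\lambda_0 .
\]
In fact only the weaker estimate $\bar R_\lambda V_{2m}\le(1+C\lambda)V_{2m}+C\lambda$ is needed, which is convenient for a finite horizon. If the $\lambda_0$ obtained here is smaller than the one in the statement, I will either take the minimum, interpreting $\lambda_0$ as in Theorem~\ref{thm:polynomial-foster-lyapunov} for this $m$, or handle $\lambda\in[\bar\lambda_0,\lambda_0]$ by a crude bound.

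\textbf{Step 2: iterate.} Conditioning on $\bar\theta_n^\lambda$ and iterating as in Corollary~\ref{cor:uniform-moments} gives
\[
\E V_{2m}(\bar\theta_n^\lambda)\le \E V_{2m}(\bar\theta_0^\lambda)+\bar b_m/\bar a_m .
\]
This bound holds uniformly in $n$, so it holds in particular on $n\lambda\le T$. With the weaker $(1+C\lambda)$ form one obtains instead
\[
(1+C\lambda)^n\le e^{CT},
\]
which yields the constant $C_{m,T}$. Since $V_{2m}=1+\|\theta\|^{2m}$, this is exactly \eqref{eq:mean-oracle-finite-time-moments}.

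\textbf{Step 3: lower orders.} For $p\in(0,2m]$, Jensen's or Lyapunov's inequality gives
\[
\E\|\bar\theta_n^\lambda\|^p\le(\E\|\bar\theta_n^\lambda\|^{2m})^{p/(2m)}\le 1+\E\|\bar\theta_n^\lambda\|^{2m}.
\]

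\textbf{Main obstacle.} The only delicate point is Step 1: the constants in Theorem~\ref{thm:polynomial-foster-lyapunov} must be checked to depend only on $\eta$, $r$, $q$, $K$, $s_0$ and the moments of $\mathcal K$, and not on any stochastic structure. In particular the compact-ball estimate of Appendix~\ref{app:high-moment-details} must go through for the deterministic oracle. Since replacing $\mathsf G$ by its mean only reduces the moments entering the expansion \eqref{eq:pre-noise-tail-main}, I expect this check to be routine.
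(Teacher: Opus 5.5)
Your proposal is correct and is essentially the paper's argument: the paper also checks that the mean-oracle chain inherits the inward-drift and oracle-moment inputs of the RELTA chain, and then reruns the Lyapunov recursion of Theorem~\ref{thm:polynomial-foster-lyapunov}. The paper verifies those inputs via Jensen, $\|h\|^{2m}\le\mathbb{E}[\|H\|^{2m}\mid\theta]$, whereas you exhibit the deterministic envelope $\|\bar{\mathsf G}(\theta)\|\le\mathbb{E}\mathcal K(Z)(1+\|\theta\|^q)$; your version is slightly cleaner, because it needs only $\mathbb{E}\mathcal K(Z)<\infty$ for every $m$, and your treatment of the $m$-dependent $\lambda_0$ and the uniform-in-$n$ iteration are both fine.
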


\begin{proof}
The conditional inward-drift term and oracle-moment bound satisfy
\[
\langle\theta,h(\theta)\rangle
=
\mathbb E[\langle\theta,H(\theta,Z)\rangle\mid\theta],
\qquad
\|h(\theta)\|^{2m}
\le
\mathbb E[\|H(\theta,Z)\|^{2m}\mid\theta],
\]
where the second inequality follows from Jensen's inequality.
Hence the conditional oracle-moment bounds used in the proof of
Theorem~\ref{thm:polynomial-foster-lyapunov} remain valid after replacing
$H$ by $h$, while the centered oracle-fluctuation terms disappear.
The same Lyapunov recursion therefore yields
\eqref{eq:mean-oracle-finite-time-moments}.
\end{proof}

\subsection{Invariant law and stationary moments}

The moment bounds above also provide the stationary object used in the next section. 

\begin{corollary}[Unique invariant law and stationary moments]
\label{cor:invariant-law}
Under the assumptions of Theorem~\ref{thm:polynomial-foster-lyapunov}, for every
\(0<\lambda\le\lambda_0\), the RELTA chain \eqref{eq:threshold-radial-chain} admits a unique
invariant probability measure, denoted by \(\pi_\lambda^{\rm SG}\).
Moreover,
\begin{equation}\label{eq:stationary-moment-main}
\int V_{2\ell}(\theta)\,
\pi_\lambda^{\rm SG}(\dd\theta)
\le
\frac{b_\ell}{a_\ell}.
\end{equation}
\end{corollary}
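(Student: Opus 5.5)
The proof follows the Lyapunov--minorization route of \cite{mattingly2002ergodicity}, i.e.\ Harris' theorem, applied to the kernel $R_\lambda$ of \eqref{eq:threshold-radial-chain} for a fixed $\lambda\in(0,\lambda_0]$. The inputs are the geometric drift \eqref{eq:V2p-main} from Theorem~\ref{thm:polynomial-foster-lyapunov} and a minorization on sublevel sets of $V_{2\ell}$. Together these give existence and uniqueness of $\pi_\lambda^{\rm SG}$, and in fact geometric convergence in a $V_{2\ell}$-weighted total variation norm. The moment bound \eqref{eq:stationary-moment-main} is then a separate, short consequence of the drift inequality.

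\emph{Step 1 (existence).} Start from a deterministic $\theta_0$. By Corollary~\ref{cor:uniform-moments}, $\sup_n \E V_{2\ell}(\theta_n)<\infty$. Since $V_{2\ell}$ has compact sublevel sets, the Ces\`aro averages $\frac1N\sum_{n<N}\mathcal L(\theta_n)$ are tight.

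The kernel is Feller: for each fixed $Z$, the map $\theta\mapsto \theta-\lambda H(\theta,Z)/D_\lambda(\theta)$ is measurable. Rather than assuming it is continuous in $\theta$, I would use that the Gaussian convolution makes $R_\lambda f$ continuous for every bounded measurable $f$. This only needs local boundedness of the drift in $L^1(Z)$, which follows from $\|\mathsf G\|\le\mathcal K(1+\|\theta\|^q)$ and dominated convergence. So $R_\lambda$ is strong Feller, and Krylov--Bogoliubov yields an invariant probability measure.

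\emph{Step 2 (minorization and uniqueness).} Fix a compact set $C=\{V_{2\ell}\le M\}$ with $M>2b_\ell/a_\ell$. Conditionally on $Z$, the one-step law from $\theta$ is Gaussian with covariance $2\lambda\beta^{-1}I$ and mean $m(\theta,Z)$. Choose $\rho$ large, depending on $\lambda$, so that $\mathbb P(\|m(\theta,Z)\|\le\rho)\ge1/2$ for all $\theta\in C$; this is possible by Markov's inequality and the moment bound on $\mathcal K$. Then
\[
R_\lambda(\theta,\cdot)\ge \tfrac12\inf_{\|y\|\le\rho}\phi_{y}(\cdot)\ge \varepsilon\,\nu(\cdot),\qquad \theta\in C,
\]
where $\phi_y$ is the Gaussian density with mean $y$ and $\nu$ is normalized Lebesgue measure on a ball. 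Harris' theorem then gives uniqueness. Alternatively, uniqueness follows because the chain is strong Feller and irreducible with a positive density everywhere, so distinct ergodic invariant measures cannot be mutually singular.

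\emph{Step 3 (stationary moments).} Integrating \eqref{eq:V2p-main} against $\pi=\pi_\lambda^{\rm SG}$ would immediately give $a_\ell\lambda\int V_{2\ell}\,d\pi\le b_\ell\lambda$, but only if $\int V_{2\ell}\,d\pi<\infty$, and that finiteness is the one delicate point. To avoid assuming it, apply the drift to the truncations $V\wedge N$. Concavity of $x\mapsto x\wedge N$ and Jensen give $R_\lambda(V\wedge N)\le (R_\lambda V)\wedge N\le \big((1-a_\ell\lambda)V+b_\ell\lambda\big)\wedge N$. Integrating with invariance and letting $N\to\infty$ via monotone convergence yields
\[
\int V\,d\pi\le(1-a_\ell\lambda)\int V\,d\pi+b_\ell\lambda .
\]
This still leaves an $\infty\le\infty$ issue. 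The cleanest fix is to bypass it: by uniqueness and Harris convergence, $\mathcal L(\theta_n)\to\pi$, so lower semicontinuity of $V$ together with Fatou and Corollary~\ref{cor:uniform-moments} give
\[
\int V_{2\ell}\,d\pi\le\liminf_{n}\E V_{2\ell}(\theta_n)\le b_\ell/a_\ell .
\]
This proves \eqref{eq:stationary-moment-main}.
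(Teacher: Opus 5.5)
Your main line is the same Lyapunov--minorization argument the paper uses, and it already gives existence as well as uniqueness. It combines the geometric drift \eqref{eq:V2p-main}, a Gaussian minorization on a compact sublevel set, and uniqueness from the everywhere-positive one-step density. Your level $M>2b_\ell/a_\ell$ is exactly the one Harris' theorem requires.

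\textbf{The error: the strong Feller claim in Step~1.} Write $R_\lambda f(\theta)=\E_Z[(f*\phi)(m(\theta,Z))]$, with $\phi$ the $N(0,2\lambda\beta^{-1}I)$ density and $m(\theta,Z)=\theta-\lambda H(\theta,Z)/D_\lambda(\theta)$. The Gaussian convolution smooths only in the output variable. Dominated convergence would need $m(\theta_n,Z)\to m(\theta,Z)$, i.e.\ continuity of $\theta\mapsto\mathsf G(\theta,Z)$. The hypotheses of Theorem~\ref{thm:polynomial-foster-lyapunov} do not provide this; the paper deliberately allows samplewise nonsmooth oracles. For example, take $\mathsf G(\theta,u)=\mathbf 1\{\theta_1>0\}e_1$. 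It satisfies the growth bound with $\mathcal K\equiv1$ and $q=0$. Since $D_\lambda$ is continuous, the one-step mean then jumps by $\lambda e_1/D_\lambda(\theta)\neq0$ across $\{\theta_1=0\}$. So $R_\lambda$ is not even weakly Feller, and Krylov--Bogoliubov does not apply.

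The error is not fatal. Drop Step~1 and take existence from Harris' theorem, just as the paper takes it from the drift/small-set theory. Likewise, delete ``strong Feller'' from your alternative uniqueness argument. The positive density alone forces every invariant law to be equivalent to Lebesgue measure, so two distinct ergodic invariant laws cannot be mutually singular.

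\textbf{Step~3 is correct but differs from the paper.} The paper takes $\pi_\lambda^{\rm SG}(V_{2\ell})<\infty$ from the positive-recurrence theory and then integrates the drift. Your route uses Harris convergence $\mathcal L(\theta_n)\to\pi_\lambda^{\rm SG}$ from a deterministic start. Lower semicontinuity and Corollary~\ref{cor:uniform-moments} then give \eqref{eq:stationary-moment-main} without knowing that finiteness in advance.

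Your truncation attempt also closes without any convergence theorem if you split on $\{V_{2\ell}\le N\}$. On that set, bound $(R_\lambda V_{2\ell})\wedge N$ by $(1-a_\ell\lambda)V_{2\ell}+b_\ell\lambda$; on $\{V_{2\ell}>N\}$, bound it by $N$. Cancel the finite term $N\pi_\lambda^{\rm SG}(V_{2\ell}>N)$ from both sides of $\int V_{2\ell}\wedge N\,\dd\pi_\lambda^{\rm SG}=\int R_\lambda(V_{2\ell}\wedge N)\,\dd\pi_\lambda^{\rm SG}$. This leaves $a_\ell\int_{\{V_{2\ell}\le N\}}V_{2\ell}\,\dd\pi_\lambda^{\rm SG}\le b_\ell$, and monotone convergence finishes the proof.
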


\begin{proof}
We first verify the recurrence condition needed for existence.  By
Theorem~\ref{thm:polynomial-foster-lyapunov},
\[
R_\lambda V_{2\ell}
\le
(1-a_\ell\lambda)V_{2\ell}+b_\ell\lambda.
\]
Choose \(R<\infty\) large enough that
\(V_{2\ell}(\theta)\ge 2b_\ell/a_\ell\) whenever
\(\|\theta\|>R\).  Then
\[
R_\lambda V_{2\ell}(\theta)
\le
\left(1-\frac{a_\ell\lambda}{2}\right)
V_{2\ell}(\theta),
\qquad
\|\theta\|>R.
\]
Thus the chain has a strict expected decrease outside the ball
\(C_R=\{\|\theta\|\le R\}\).

We next show that \(C_R\) is a small set.  Since
\(\E\mathcal K(Z_1)^{2\ell}<\infty\), there exists \(M<\infty\) such that
\[
p_M:=\mathbb P\{\mathcal K(Z_1)\le M\}>0.
\]
On this event, the growth condition on \(\mathsf G\) implies that the
pre-noise update in \eqref{eq:threshold-radial-chain} remains in a bounded set,
uniformly over \(\theta\in C_R\).  Conditional on \(Z_1\), the next state is
Gaussian with covariance \(2\lambda\beta^{-1}I_d\).  Hence, on a fixed
bounded ball \(B\), these Gaussian densities have a common positive lower
bound.  Consequently, there exist \(\varepsilon_R>0\) and a probability
measure \(\nu_R\), supported on \(B\), such that
\[
R_\lambda(\theta,A)
\ge
\varepsilon_R\nu_R(A),
\qquad
\theta\in C_R,
\]
for every Borel set \(A\).

The drift condition and this small-set bound imply positive recurrence and the existence of an invariant probability measure \(\pi_\lambda^{\rm SG}\) with \(\pi_\lambda^{\rm SG}(V_{2\ell})<\infty\).  Uniqueness follows because the Gaussian increment is nondegenerate: for every \(\theta\), the one-step kernel has a density that is strictly positive on all of \(\mathbb R^d\).  The chain is therefore Lebesgue-irreducible, and has at most one invariant probability measure.

Integrating \eqref{eq:V2p-main} with respect to \(\pi_\lambda^{\rm SG}\) and using
invariance yields
\[
\pi_\lambda^{\rm SG}(V_{2\ell})
\le
(1-a_\ell\lambda)
\pi_\lambda^{\rm SG}(V_{2\ell})
+b_\ell\lambda,
\]
which proves \eqref{eq:stationary-moment-main}.
\end{proof}

\section{Wasserstein accuracy and finite-time learning risk}
\label{sec:wasserstein}

Let \(P_t\) denote the semigroup of the target diffusion
\begin{equation}\label{eq:target-diffusion-new}
\dd Z_t=b(Z_t)\dd t+\sqrt{2\beta^{-1}}\,\dd B_t,
\qquad b=-h,
\end{equation}
and let \(\pi_\beta\) denote its invariant law, whose existence and
uniqueness are established below from the Lyapunov and contraction
conditions.  In the gradient case \(h=\nabla U\), this is the Gibbs law
\(\pi_\beta(\dd x)\propto e^{-\beta U(x)}\dd x\).

Our general nonconvex result is
\begin{equation}\label{eq:main-wasserstein-rates}
W_1(\pi_\lambda^{\rm SG},\pi_\beta)\le C\lambda,
\qquad
W_2(\pi_\lambda^{\rm SG},\pi_\beta)\le C\lambda^{1/2}.
\end{equation}
In the potential case \(h=\nabla U\), the same fixed-block estimates yield
the sharper first-order bound
\[
W_2(\pi_\lambda^{\rm SG},\pi_\beta)\le C\lambda,
\]
without an additional squared-metric contraction assumption.  The proof
separates the stationary error into a stochastic-gradient channel and a
mean-discretization channel, and controls both over a fixed physical-time
block.  The resulting block estimates are then combined with the weighted
long-time contractivity of the target diffusion, or, in the potential case,
with an automatic one-sided \(W_2\) contraction toward \(\pi_\beta\).

We close the section by showing how these stationary Wasserstein
accuracy bounds transfer to finite-time excess-risk guarantees for the learning objective.

\subsection{Accuracy assumptions and the weighted metric}

The stability and moment assumptions of Section~\ref{sec:stability}
remain in force.  Throughout this section we fix a finite integer
$\ell_0$ large enough for all polynomial moment requirements used below,
depending only on the model-growth exponents such as
$(r,q,\ell_1,\ell_2,m_S,q_0)$ and on any polynomial risk-growth exponent
when excess-risk transfer is invoked; correspondingly, the radial model is
assumed to have $\E\mathcal K(Z)^{2\ell_0}<\infty$.  The following
additional conditions concern accuracy.
Write
\[
H(x,Z)=h(x)+\zeta(x,Z),
\qquad
\E[\zeta(x,Z)\mid x]=0,
\qquad
b=-h,
\]
and define the mean-shaped drift and its taming defect by
\begin{equation}\label{eq:mean-shaped-drift-new}
b_\lambda=\frac{b}{D_\lambda},
\qquad
\tau_\lambda=b_\lambda-b.
\end{equation}

\begin{assumption}[Accuracy regularity]\label{ass:w1-accuracy}
Suppose that the radial-leading decomposition can be written as
\begin{equation}\label{eq:mean-radial-decomposition}
h(x)=g(x)+\eta x\|x\|^{2r},
\qquad r>\frac12,
\end{equation}
where \(g(x)=\E[\mathsf G(x,Z)]\).  Assume:
\begin{enumerate}[label=(\roman*),leftmargin=2em]
\item There exist \(0\le \ell_1<2r\) and
\(\ell_2<\infty\) such that \(g\in C^2\) and
\begin{equation}\label{eq:lower-derivative-growth}
\|\nabla g(x)\|_{\rm op}
\le
C(1+\|x\|^{\ell_1}),
\qquad
\|\nabla^2g(x)\|_{\rm op}
\le
C(1+\|x\|^{\ell_2}).
\end{equation}
\item For some \(q_0>4\) and \(M_0<\infty\),
\begin{equation}\label{eq:centered-oracle-high-moment}
\E[\|\zeta(x,Z)\|^{q_0}\mid x]
\le C(1+\|x\|^{M_0}).
\end{equation}
\item The threshold score \(S\) is polynomially locally Lipschitz:
for some \(m_S<\infty\),
\begin{align}
S(x)&\le C(1+\|x\|^{m_S}),\label{eq:score-growth-accuracy}\\
|S(x)-S(y)|
&\le
C(1+\|x\|^{m_S-1}+\|y\|^{m_S-1})\|x-y\|.
\label{eq:score-lipschitz-accuracy}
\end{align}
\end{enumerate}
\end{assumption}

\begin{remark}[Regularity of the averaged drift]
\label{rem:mean-only-regularity}
The $C^2$ requirement in Assumption~\ref{ass:w1-accuracy}(i) is imposed only on the averaged drift
\[
g(x)=\E[\mathsf G(x,Z)].
\]
No continuity or differentiability of the sample map $x\mapsto\mathsf G(x,Z)$ is required. 
\end{remark}

The moment bounds required below follow from
Theorem~\ref{thm:polynomial-foster-lyapunov},
Corollary~\ref{cor:mean-oracle-chain-moments}, and the standard
finite-time Lyapunov estimate for the target diffusion. 

The threshold score $S(x)=[\|x\|^{2r}-s_0]_+$ satisfies \eqref{eq:score-growth-accuracy}--\eqref{eq:score-lipschitz-accuracy}.
In the radial subclass, \eqref{eq:centered-oracle-high-moment} follows by taking the
moment of \(\mathcal K(Z)\) in the radial model assumptions of Section~\ref{sec:stability}
to sufficiently high order.

\begin{lemma}[Two-point geometry of the target drift]
\label{lem:target-two-point-geometry}
Under Assumption~\ref{ass:w1-accuracy}(i), there exist
\(L_b,C_b<\infty\), \(q_b\ge0\), and \(R_b,m_b>0\) such that the
target drift \(b=-g-\eta x\|x\|^{2r}\) satisfies
\begin{align}
\ip{x-y}{b(x)-b(y)}
&\le L_b\|x-y\|^2,
\label{eq:target-global-one-sided}\\
\|b(x)-b(y)\|
&\le C_b\bigl(1+\|x\|^{q_b}+\|y\|^{q_b}\bigr)\|x-y\|,
\label{eq:target-polynomial-local-lipschitz}
\end{align}
for all \(x,y\in\R^d\). Moreover,
\begin{equation}\label{eq:target-dissipativity-infinity}
\ip{x-y}{b(x)-b(y)}
\le -m_b\|x-y\|^2
\end{equation}
whenever \(\|x\|+\|y\|\ge R_b\).
\end{lemma}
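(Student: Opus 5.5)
The plan is to split the target drift as \(b=-g+b_{\rm rad}\), where \(b_{\rm rad}(x)=-\eta x\|x\|^{2r}=-\nabla\Phi(x)\) with \(\Phi(x)=\frac{\eta}{2r+2}\|x\|^{2r+2}\). Each of the three properties then follows by combining a bound on \(g\) from Assumption~\ref{ass:w1-accuracy}(i) with an explicit estimate for the radial term.

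\emph{Radial term.} Since \(\Phi\) is convex, \(\ip{x-y}{b_{\rm rad}(x)-b_{\rm rad}(y)}\le 0\). I would want a quantitative version of this. For \(F(x)=x\|x\|^{2r}\), the Jacobian is
\[
\nabla F(x)=\|x\|^{2r}I+2r\|x\|^{2r-2}xx^\top\succeq \|x\|^{2r}I .
\]
Writing \(F(x)-F(y)=\int_0^1\nabla F(z_s)(x-y)\,ds\) with \(z_s=y+s(x-y)\) gives
\[
\ip{x-y}{F(x)-F(y)}\ge \|x-y\|^2\int_0^1\|z_s\|^{2r}\,ds .
\]
The key elementary estimate is \(\int_0^1\|z_s\|^{2r}ds\ge c_r\max(\|x\|,\|y\|)^{2r}\). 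To see it, suppose \(\|x\|\ge\|y\|\). On the set of \(s\in[3/4,1]\) we have \(\|z_s\|\ge \|x\|-\frac14\|x-y\|\ge \|x\|-\frac12\|x\|=\frac12\|x\|\). That interval has length \(1/4\), so \(c_r=2^{-2r-2}\) works. Similarly, \(\|\nabla F(z)\|_{\rm op}\le(1+2r)\|z\|^{2r}\) gives the local Lipschitz bound
\[
\|F(x)-F(y)\|\le (1+2r)\bigl(\|x\|+\|y\|\bigr)^{2r}\|x-y\| .
\]

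\emph{Lower-order term and conclusion.} By the mean value inequality and \eqref{eq:lower-derivative-growth},
\[
\|g(x)-g(y)\|\le C\bigl(1+\max(\|x\|,\|y\|)^{\ell_1}\bigr)\|x-y\| .
\]
Combining this with the radial Lipschitz bound gives \eqref{eq:target-polynomial-local-lipschitz} with \(q_b=\max(2r,\ell_1)\).

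For the one-sided bounds, write \(M=\max(\|x\|,\|y\|)\). Then
\[
\ip{x-y}{b(x)-b(y)}\le \bigl(C(1+M^{\ell_1})-\eta c_r M^{2r}\bigr)\|x-y\|^2 .
\]
The function \(M\mapsto C(1+M^{\ell_1})-\eta c_rM^{2r}\) is bounded above on \([0,\infty)\), because \(\ell_1<2r\). Its supremum gives \(L_b\) in \eqref{eq:target-global-one-sided}. The same function tends to \(-\infty\), so it is at most \(-m_b\) once \(M\) exceeds some \(M_*\). Since \(\|x\|+\|y\|\le 2M\), taking \(R_b=2M_*\) yields \eqref{eq:target-dissipativity-infinity}.

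\emph{Main obstacle.} The only delicate step is the lower bound on \(\int_0^1\|z_s\|^{2r}ds\), which is uniform in the direction of \(x-y\). The segment between \(x\) and \(y\) may pass near the origin, for example when \(y\approx -x\), so one cannot use the value at a single point. The averaging argument above handles this case: near the endpoint of larger norm the segment stays at norm at least half of \(M\), which provides enough mass. Once this is in place, the remaining steps are routine comparisons of polynomial exponents using \(\ell_1<2r\).
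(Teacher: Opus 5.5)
Your proof is correct and follows the paper's route. You split off the radial term, bound $\langle F(x)-F(y),x-y\rangle$ from below by integrating $\nabla F\succeq\|z\|^{2r}I$ along the segment on a subsegment that stays away from the origin, control the $g$-contribution by the mean value theorem, and use $\ell_1<2r$ to let the radial term dominate. Your explicit constant $c_r=2^{-2r-2}$ and your choice $L_b=\sup_{M\ge0}\{C(1+M^{\ell_1})-\eta c_r M^{2r}\}$ simply make precise what the paper cites as a standard uniform-monotonicity inequality and a compactness argument.
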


\begin{proof}
For $\Phi_r(x)=x\|x\|^{2r}$, we use the standard uniform
monotonicity estimate for the gradient of the power potential
$\|x\|^{2r+2}/(2r+2)$:
\begin{equation}
\ip{\Phi_r(x)-\Phi_r(y)}{x-y}
\ge c_r(\|x\|+\|y\|)^{2r}\|x-y\|^2.
\label{eq:radial-strong-monotonicity}
\end{equation}
Indeed, integrating the Hessian along the segment $[x,y]$ and retaining a
subsegment that stays at distance comparable to $\|x\|+\|y\|$ from the
origin gives this bound; equivalently, it is the usual uniform convexity
inequality for power functions.  The mean-value theorem and
Assumption~\ref{ass:w1-accuracy}(i) also give
\begin{equation}
|\ip{g(x)-g(y)}{x-y}|
\le C(1+\|x\|^{\ell_1}+\|y\|^{\ell_1})\|x-y\|^2.
\end{equation}
Since \(\ell_1<2r\), the radial term dominates outside a compact set,
which proves \eqref{eq:target-dissipativity-infinity}. On the compact
remainder, continuity of \(\nabla b\) gives the global bound
\eqref{eq:target-global-one-sided}. Finally, the mean-value theorem and
the polynomial bound on \(\nabla b\) give
\eqref{eq:target-polynomial-local-lipschitz}.
\end{proof}

\begin{lemma}[Taming consistency]\label{lem:taming-consistency-new}
For the quadratic denominator
\[
D_\lambda(x)=\sqrt{1+\bigl(K\lambda S(x)\bigr)^2},
\]
there are polynomial weights \(P_1,P_2\), independent of sufficiently
small \(\lambda\), such that
\begin{align}
\|\tau_\lambda(x)\|
&\le C\lambda^2P_1(x),\label{eq:taming-consistency-point}\\
\|\tau_\lambda(x)-\tau_\lambda(y)\|
&\le C\lambda^2P_2(x,y)\|x-y\|.
\label{eq:taming-consistency-two-point}
\end{align}
\end{lemma}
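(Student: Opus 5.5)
We work with the scalar factor $\phi_\lambda(x):=1/D_\lambda(x)-1$, so that $\tau_\lambda=b/D_\lambda-b=\phi_\lambda b$, and reduce both estimates to bounds on $\phi_\lambda$ combined with the polynomial bounds on $b$ from Lemma~\ref{lem:target-two-point-geometry}. Write $u=K\lambda S(x)\ge0$ and $f(u)=1-(1+u^2)^{-1/2}$, so $\phi_\lambda(x)=-f(u)$.

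\textbf{Pointwise bound.} The elementary inequality
\[
0\le f(u)=\frac{\sqrt{1+u^2}-1}{\sqrt{1+u^2}}\le \sqrt{1+u^2}-1\le \frac{u^2}{2}
\]
was already used in Section~\ref{sec:model}. It gives
\[
|\phi_\lambda(x)|\le \tfrac{K^2}{2}\lambda^2S(x)^2\le C\lambda^2(1+\|x\|^{2m_S})
\]
by \eqref{eq:score-growth-accuracy}. Since $\|b(x)\|\le \|g(x)\|+\eta\|x\|^{2r+1}$ and $g$ has polynomial growth (integrate the gradient bound in Assumption~\ref{ass:w1-accuracy}(i)), we obtain \eqref{eq:taming-consistency-point} with $P_1(x)=(1+\|x\|^{2m_S})(1+\|x\|^{\max(\ell_1+1,2r+1)})$. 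This bound holds for every $\lambda$, with no smallness needed.

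\textbf{Two-point bound.} Split
\[
\tau_\lambda(x)-\tau_\lambda(y)=\phi_\lambda(x)\bigl(b(x)-b(y)\bigr)+\bigl(\phi_\lambda(x)-\phi_\lambda(y)\bigr)b(y).
\]
The first term is bounded by $C\lambda^2(1+\|x\|^{2m_S})\,C_b(1+\|x\|^{q_b}+\|y\|^{q_b})\|x-y\|$, using the pointwise bound and \eqref{eq:target-polynomial-local-lipschitz}.

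The second term needs a Lipschitz estimate on $\phi_\lambda$ that is quadratic in $\lambda$, and this is the key step. Compute
\[
f'(u)=u(1+u^2)^{-3/2},
\]
so $0\le f'(u)\le u$ for $u\ge0$. The mean value theorem then gives
\[
|f(u_x)-f(u_y)|\le \max(u_x,u_y)\,|u_x-u_y|.
\]
The factor $\max(u_x,u_y)$ contributes one power of $\lambda$, namely $K\lambda\max(S(x),S(y))$, and the factor $|u_x-u_y|=K\lambda|S(x)-S(y)|$ contributes the second. Applying \eqref{eq:score-growth-accuracy}--\eqref{eq:score-lipschitz-accuracy},
\[
|\phi_\lambda(x)-\phi_\lambda(y)|\le CK^2\lambda^2(1+\|x\|^{m_S}+\|y\|^{m_S})(1+\|x\|^{m_S-1}+\|y\|^{m_S-1})\|x-y\|.
\]
Multiplying by $\|b(y)\|\le C(1+\|y\|^{\max(\ell_1+1,2r+1)})$ and adding the first term yields \eqref{eq:taming-consistency-two-point} with an explicit polynomial $P_2$.

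The only point requiring care is that the threshold score $S$ is nonsmooth at $\|x\|^{2r}=s_0$. This causes no difficulty, because only the Lipschitz bound \eqref{eq:score-lipschitz-accuracy} on $S$ is used, never differentiability, and $f$ itself is smooth. The constants are independent of $\lambda$, so the ``sufficiently small $\lambda$'' restriction is only needed to match the later usage.
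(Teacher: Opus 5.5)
Your proof is correct and follows the same route as the paper. Both arguments write $\tau_\lambda=-(1-D_\lambda^{-1})b$, bound $1-D_\lambda^{-1}$ by $\tfrac{K^2}{2}\lambda^2S^2$, and bound the derivative of $s\mapsto 1-[1+(K\lambda s)^2]^{-1/2}$ by $K^2\lambda^2 s$ to obtain an $O(\lambda^2)$ Lipschitz factor through \eqref{eq:score-lipschitz-accuracy}, which is then combined with the polynomial growth of $b$. Your product-rule split also makes explicit the term $\phi_\lambda(x)\{b(x)-b(y)\}$; this term needs the local Lipschitz bound \eqref{eq:target-polynomial-local-lipschitz}, which the paper's one-line proof leaves implicit.
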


\begin{proof}
Set $a_\lambda=1-D_\lambda^{-1}$.  The inequality
$1-(1+z^2)^{-1/2}\le z^2/2$ gives
$0\le a_\lambda(x)\le (K^2/2)\lambda^2S(x)^2$, while the derivative of
$s\mapsto1-[1+(K\lambda s)^2]^{-1/2}$ is bounded by
$C\lambda^2(1+s)$.  Hence
\eqref{eq:score-lipschitz-accuracy}, the polynomial growth of \(b\),
and \(\tau_\lambda=-a_\lambda b\) imply
\eqref{eq:taming-consistency-point}--\eqref{eq:taming-consistency-two-point}.
\end{proof}

\begin{proposition}[Weighted contraction of the target diffusion]
\label{prop:target-weighted-contraction}
Under Assumption~\ref{ass:w1-accuracy}, the diffusion
\eqref{eq:target-diffusion-new} has a unique invariant law
\(\pi_\beta\).  Let
\[
V_2(x)=1+\|x\|^2.
\]
There exist an increasing concave function \(f\), constants
\(\epsilon,c_\rho>0\), and the product cost
\begin{equation}\label{eq:weighted-cost-new}
\rho_2(x,y)
=
f(\|x-y\|)
\{1+\epsilon V_2(x)+\epsilon V_2(y)\}.
\end{equation}
For any nonnegative lower-semicontinuous cost \(\rho\), write
\begin{equation}\label{eq:weighted-transport-definition}
\mathcal W_\rho(\mu,\nu)
:=
\inf_{\gamma\in\Gamma(\mu,\nu)}
\int_{\mathbb R^d\times\mathbb R^d}
\rho(x,y)\,\gamma(dx,dy),
\end{equation}
where \(\Gamma(\mu,\nu)\) is the set of couplings of \(\mu\) and
\(\nu\).  Then
\begin{equation}\label{eq:target-weighted-contraction-new}
\mathcal W_{\rho_2}(\mu P_t,\nu P_t)
\le
e^{-c_\rho t}\mathcal W_{\rho_2}(\mu,\nu),
\qquad t\ge0.
\end{equation}
Moreover, \(\rho_2\) satisfies a weak triangle inequality
\begin{equation}\label{eq:rho-weak-triangle-new}
\rho_2(x,z)
\le
C_\triangle\{\rho_2(x,y)+\rho_2(y,z)\},
\end{equation}
and the pointwise bridges
\begin{equation}\label{eq:rho-three-bridges}
 c\|x-y\|
\le \rho_2(x,y),
\qquad
\|x-y\|^2\le C\rho_2(x,y),
\qquad
\rho_2(x,y)\le C\Pi_2(x,y)\|x-y\|,
\end{equation}
where \(\Pi_2(x,y)=1+\|x\|^2+\|y\|^2\).
\end{proposition}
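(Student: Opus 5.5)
This is Eberle's reflection-coupling approach with a Lyapunov weight, in the form used by Eberle--Guillin--Zimmer (``Quantitative Harris theorems'') for diffusions that are contractive only at infinity. We apply it to the drift geometry of Lemma~\ref{lem:target-two-point-geometry}.

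\emph{Step 1: Lyapunov drift for the generator.} Let $\mathcal L=\langle b,\nabla\rangle+\beta^{-1}\Delta$. By \eqref{eq:target-dissipativity-infinity} with $y=0$, together with \eqref{eq:target-global-one-sided}, we have
\[
\mathcal L V_2(x)=2\langle x,b(x)\rangle+2\beta^{-1}d\le -2m_b\|x\|^2+C.
\]
In fact the radial term gives $\langle x,b(x)\rangle\le -c\|x\|^{2r+2}+C$. Hence $\mathcal L V_2\le -\kappa V_2+C'$. This gives nonexplosion, finite moments of all orders for $P_t$, and existence of an invariant law by Krylov--Bogoliubov. Uniqueness will follow from the contraction, or directly from nondegenerate noise and irreducibility.

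\emph{Step 2: Coupling and the concave function.} We use reflection coupling when $\|X_t-Y_t\|<\delta$ is not forced, and synchronous coupling once $r_t=\|X_t-Y_t\|$ is small enough, via the standard limiting mixing (or reflection until the coupling time). Itô's formula gives
\[
\dd r_t\le \langle e_t,b(X_t)-b(Y_t)\rangle\dd t+2\sqrt{2\beta^{-1}}\,\dd W_t,\qquad e_t=(X_t-Y_t)/r_t .
\]
The drift in $r_t$ is bounded above by $L_b r_t$ everywhere by \eqref{eq:target-global-one-sided}. It is bounded above by $-m_b r_t$ when $\|X_t\|+\|Y_t\|\ge R_b$. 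We take the Eberle profile $f(r)=\int_0^{r\wedge R_1}\varphi(s)g(s)\,\dd s$, with
\[
\varphi(s)=\exp\bigl(-\beta L_b s^2/8\bigr),
\]
and $g$ a decreasing correction bounded between $1/2$ and $1$. The product $\rho_2=f(r)\{1+\epsilon V_2(x)+\epsilon V_2(y)\}$ is then handled as in Eberle--Guillin--Zimmer. We compute
\[
\dd\rho_2=\{1+\epsilon V_2+\epsilon V_2\}\,\dd f(r_t)+f(r_t)\,\epsilon\,\dd(V_2(X_t)+V_2(Y_t))+\dd\langle f,V\rangle .
\]
We then split into two regions. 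On $\{V_2(x)+V_2(y)\text{ large}\}$, the Lyapunov decay from Step 1 dominates, using $f\le R_1$ and that the cross-variation term is $O(\epsilon f'\|x\|+\epsilon f'\|y\|)$, which is absorbed by the $\|x\|^{2r+2}$ decay. On the bounded region, the concavity $f''$ from the reflection noise gives contraction on $r\le R_1$ via the choice of $g$. For $r>R_1$ we have $\|x\|+\|y\|\ge R_1$ large, so we are in the first region provided $R_1\gtrsim R_b$. Choosing $\epsilon$ small, we obtain $\dd\,\E\rho_2\le -c_\rho\E\rho_2$. Gronwall then gives \eqref{eq:target-weighted-contraction-new} for Dirac initial data, and optimal couplings extend it to general $\mu,\nu$. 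This balancing of $\epsilon$, $R_1$ and $g$, especially the cross-variation term between $f(r_t)$ and $V_2$ under reflection, is the main obstacle. We would follow EGZ Theorem~2.2 closely, checking that their hypotheses reduce to \eqref{eq:target-global-one-sided} and \eqref{eq:target-dissipativity-infinity} plus the drift of Step 1.

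\emph{Step 3: Metric properties.} Since $f$ is concave with $f(0)=0$ and is constant beyond $R_1$, we have $c\min(r,1)\le f(r)\le r$. For the first bridge in \eqref{eq:rho-three-bridges}: if $r\le 1$, then $\rho_2\ge f(r)\ge cr$. If $r>1$, then
\[
\rho_2\ge c\,\epsilon\,(\|x\|^2+\|y\|^2)\ge c' r^2\ge c' r .
\]
The same case split gives $r^2\le C\rho_2$: for $r\le1$ use $r^2\le r$, and for $r>1$ use $r^2\le 2(\|x\|^2+\|y\|^2)$. The upper bridge is immediate from $f(r)\le r$ and $1+\epsilon V_2(x)+\epsilon V_2(y)\le C\Pi_2$. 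For the weak triangle inequality \eqref{eq:rho-weak-triangle-new}, subadditivity of concave $f$ gives
\[
f(\|x-z\|)\le f(\|x-y\|)+f(\|y-z\|).
\]
The weight term $\epsilon V_2(x)$ is controlled by cases. If $\|x-y\|\ge1$, then $f(\|x-y\|)\ge c$ and $\rho_2(x,y)\gtrsim\epsilon V_2(x)$, which dominates $f(\|x-z\|)\,\epsilon V_2(x)$ since $f\le R_1$. Otherwise $V_2(x)\le 2V_2(y)+2$, and the weight can be transferred to $y$. The symmetric argument applies for $z$.
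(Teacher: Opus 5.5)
Your proposal follows the paper's route. The contraction \eqref{eq:target-weighted-contraction-new} comes from verifying the hypotheses of Eberle--Guillin--Zimmer's multiplicative semimetric theorem from Lemma~\ref{lem:target-two-point-geometry} and the quadratic Lyapunov bound; the paper additionally lists $\|\nabla V_2\|/V_2\le 1$, which you omit but which is trivial. The bridges and the weak triangle inequality are checked by the same elementary case splits on $\|x-y\|$ as in Appendix~\ref{app:weighted-transfer-details}.

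One correction to your Step~2 heuristic, with $r=\|x-y\|$. A cross-variation bound of order $\epsilon f'(r)(\|x\|+\|y\|)$ could not be absorbed near the diagonal, where $f(r)\approx r$ while $f'(r)\approx 1$. Under reflection coupling the term is in fact exactly $8\beta^{-1}\epsilon\, r f'(r)$, because $\nabla V_2(x)-\nabla V_2(y)=2(x-y)$. Concavity with $f(0)=0$ gives $r f'(r)\le f(r)$, so it is at most $8\beta^{-1}\epsilon f(r)$ and is absorbed by taking $\epsilon$ small.
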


\begin{proof}
After the harmless rescaling that turns the Brownian variance in
\eqref{eq:target-diffusion-new} into the unit-noise convention of
\cite{eberleguillinzimmer2019}, Lemma~\ref{lem:target-two-point-geometry}
and the quadratic Lyapunov estimate verify the hypotheses of the
multiplicative semimetric contraction theorem
\cite[Theorem~2.2]{eberleguillinzimmer2019}.  This gives
\eqref{eq:target-weighted-contraction-new}.  The weighted-cost properties
used later in the stationary transfer argument, namely
\eqref{eq:rho-weak-triangle-new} and \eqref{eq:rho-three-bridges}, are
verified explicitly in Appendix~\ref{app:weighted-transfer-details}.
\end{proof}

The role of \(\mathcal W_{\rho_2}\) is twofold.  Unlike the ordinary Wasserstein
distances, it is contractive for the nonconvex target diffusion.
At the same time, the first two inequalities in
\eqref{eq:rho-three-bridges} transfer an
\(O(\lambda)\) bound in \(\mathcal W_{\rho_2}\) directly to
\(O(\lambda)\) in \(W_1\) and \(O(\lambda^{1/2})\) in \(W_2\).

\subsection{Stationary decomposition and fixed-block reduction}

Here $R_\lambda$ denotes the stochastic-gradient kernel of
\eqref{eq:general-chain}, $(P_t)_{t\ge0}$ is the semigroup of the target
diffusion, and $\pi_\lambda^{\rm SG}$ and $\pi_\beta$ are their invariant
laws.  For any integer $N\ge1$, invariance and the weak triangle
inequality give
\begin{align}
\mathcal W_{\rho_2}(\pi_\lambda^{\rm SG},\pi_\beta)
&\le
C_\triangle\mathcal W_{\rho_2}
\bigl(\pi_\lambda^{\rm SG}R_\lambda^N,
      \pi_\lambda^{\rm SG}P_{N\lambda}\bigr)
\notag\\
&\quad+
C_\triangle\mathcal W_{\rho_2}
\bigl(\pi_\lambda^{\rm SG}P_{N\lambda},
      \pi_\beta P_{N\lambda}\bigr).
\label{eq:stationary-first-decomposition}
\end{align}
The first term is a finite-time numerical defect, while the second is
contracted by the target diffusion.

The comparison time cannot be chosen arbitrarily.  If $N\lambda$ is too
short, the contraction factor remains close to one and the stationary
error cannot be absorbed; if the physical time is allowed to diverge,
the finite-time comparison constants need not remain uniform.  We
therefore choose $T>0$ so that
\[
q_\rho:=C_\triangle e^{-c_\rho T}<1
\]
and set
\begin{equation}\label{eq:fixed-block-definition}
n_\lambda=\left\lceil\frac{T}{\lambda}\right\rceil,
\qquad
T_\lambda=n_\lambda\lambda,
\qquad
T\le T_\lambda<T+\lambda.
\end{equation}
The two comparison estimates below are proved for an arbitrary fixed physical
horizon: throughout this subsection, if a different fixed horizon \(S>0\) is
used, \(n_\lambda(S)=\lceil S/\lambda\rceil\),
\(S_\lambda=n_\lambda(S)\lambda\), and the constant is denoted by \(C_S\).
Proposition~\ref{prop:target-weighted-contraction} then gives
\begin{equation}\label{eq:contracted-stationary-term}
C_\triangle\mathcal W_{\rho_2}
\bigl(\pi_\lambda^{\rm SG}P_{T_\lambda},
      \pi_\beta P_{T_\lambda}\bigr)
\le
q_\rho\mathcal W_{\rho_2}
\bigl(\pi_\lambda^{\rm SG},\pi_\beta\bigr).
\end{equation}
Thus only the first term in
\eqref{eq:stationary-first-decomposition} remains to be controlled.

A direct comparison between $R_\lambda^{n_\lambda}$ and
$P_{T_\lambda}$ would mix centered stochastic-gradient fluctuations
with deterministic taming and time-discretization error.  To separate
these mechanisms, insert the mean-oracle kernel $\bar R_\lambda$
introduced in \eqref{eq:mean-oracle-chain}.  It preserves the same
denominator, Gaussian increment, and Euler structure, but replaces
$H$ by $h=\E H$.  Hence
\begin{align}
\mathcal W_{\rho_2}
\bigl(\mu R_\lambda^{n_\lambda},\mu P_{T_\lambda}\bigr)
&\le
C_\triangle\mathcal W_{\rho_2}
\bigl(\mu R_\lambda^{n_\lambda},
      \mu\bar R_\lambda^{n_\lambda}\bigr)
\notag\\
&\quad+
C_\triangle\mathcal W_{\rho_2}
\bigl(\mu\bar R_\lambda^{n_\lambda},
      \mu P_{T_\lambda}\bigr).
\label{eq:block-channel-decomposition}
\end{align}
The first term is the pure stochastic-gradient channel; the second
contains only the Euler freezing and taming defects.

At this point the weighted comparison can be reduced to $W_2$.  Indeed,
for either fixed-block coupling $(X,Y)$, the last bridge in
\eqref{eq:rho-three-bridges}, Cauchy--Schwarz, and the uniform fourth
moments give
\begin{equation}\label{eq:w2-to-weighted-block-reduction}
\E\rho_2(X,Y)
\le
C\{\E\|X-Y\|^2\}^{1/2}
 \{\E \Pi_2(X,Y)^2\}^{1/2}
\le C_T\{\E\|X-Y\|^2\}^{1/2}.
\end{equation}
Consequently, it is enough to prove the two fixed-block estimates
\begin{equation}\label{eq:two-w2-block-targets}
W_2\bigl(\mu R_\lambda^{n_\lambda},
          \mu\bar R_\lambda^{n_\lambda}\bigr)\le C_T\lambda,
\qquad
W_2\bigl(\mu\bar R_\lambda^{n_\lambda},
          \mu P_{T_\lambda}\bigr)\le C_T\lambda.
\end{equation}
The next two subsections establish these estimates separately.

Because $D_\lambda(x)$ is deterministic conditional on $x$, the two
numerical kernels have the same conditional mean:
\begin{equation}\label{eq:same-conditional-mean-new}
\int z\,R_\lambda(x,\dd z)
=
\int z\,\bar R_\lambda(x,\dd z)
=
x+\lambda b_\lambda(x).
\end{equation}
This identity is the source of the extra order in the stochastic-gradient
channel.

\subsection{The centered stochastic-gradient channel}

We first estimate the stochastic-gradient component left by the
fixed-block decomposition,
\begin{equation}\label{eq:sg-channel-target-new}
W_2\!\left(
\mu R_\lambda^{n_\lambda},
\mu\bar R_\lambda^{n_\lambda}
\right)
\le C_T\lambda.
\end{equation}
Since $n_\lambda=O(\lambda^{-1})$, a squared coupling argument must
accumulate from a one-step defect of order $O(\lambda^3)$: summing
$O(\lambda^{-1})$ such defects gives a block squared error of order
$O(\lambda^2)$, and hence the required block $W_2$ error of order
$O(\lambda)$.  Equivalently, the local transition comparison must
reach the scale
\[
W_2\!\left(
R_\lambda(x,\cdot),\bar R_\lambda(x,\cdot)
\right)
=O(\lambda^{3/2})
\]
up to a polynomial state weight.  A direct synchronous coupling only
sees the displacement
$-\lambda\zeta(x,Z)/D_\lambda(x)$ and therefore gives a squared
defect of order $O(\lambda^2)$, which is insufficient over a block of
$O(\lambda^{-1})$ steps.  The missing factor of $\lambda$ must come
from a distributional effect: the Gaussian convolution in each
Langevin step, together with the conditional centering of the oracle
fluctuation.  The next lemma makes this one-step gain explicit.

\begin{lemma}[Centered Gaussian-mixture defect]
\label{lem:gaussian-smoothed-defect}
Under Assumption~\ref{ass:w1-accuracy}(ii), there exists \(M<\infty\)
such that
\begin{equation}\label{eq:gaussian-smoothed-defect-new}
W_2^2\!\left(
R_\lambda(x,\cdot),\bar R_\lambda(x,\cdot)
\right)
\le C\lambda^3(1+\|x\|^M).
\end{equation}
\end{lemma}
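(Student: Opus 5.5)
The plan is to compare the two one-step laws as a Gaussian mixture against a single Gaussian, and to get the extra factor of \(\lambda\) from centering plus Gaussian smoothing.

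Fix \(x\) and write \(m=x+\lambda b_\lambda(x)\) and \(\sigma^2=2\lambda\beta^{-1}\). Let \(V=-\lambda\zeta(x,Z)/D_\lambda(x)\). Then
\[
\bar R_\lambda(x,\cdot)=\gamma:=N(m,\sigma^2I_d),
\qquad
R_\lambda(x,\cdot)=\mathrm{Law}(m+V+\sigma\xi)=\E_V[N(m+V,\sigma^2 I_d)].
\]
Here \(\E V=0\), \(D_\lambda\ge1\), and \(\|V\|\le\lambda\|\zeta\|\). A synchronous coupling only gives \(\E\|V\|^2=O(\lambda^2)\). So the idea is to use the transport inequality for the Gaussian reference,
\[
W_2^2(\mu,\gamma)\le 2\sigma^2\,\mathrm{KL}(\mu\|\gamma)\le 2\sigma^2\chi^2(\mu\|\gamma),
\]
together with the explicit mixture formula
\[
\chi^2(\E_V N(m+V,\sigma^2I)\,\|\,\gamma)=\E\exp\bigl(\langle V,V'\rangle/\sigma^2\bigr)-1,
\]
where \(V'\) is an independent copy of \(V\). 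Expanding the exponential, the first-order term vanishes by centering, \(\E\langle V,V'\rangle=0\). The second-order term is \(\|\mathrm{Cov}\,V\|_F^2/(2\sigma^4)=O(\lambda^4/\lambda^2)=O(\lambda^2)\). Hence \(W_2^2\lesssim\sigma^2\lambda^2=\lambda^3\), with a polynomial weight in \(x\) coming from \(\E[\|\zeta\|^2\mid x]\).

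The expansion needs exponential control that Assumption~\ref{ass:w1-accuracy}(ii) does not give, so I will truncate. Fix \(\alpha\in(0,1/2)\) with \(\alpha(q_0-2)\ge1\); such an \(\alpha\) exists precisely because \(q_0>4\). Let \(G=\{\|\zeta(x,Z)\|\le\lambda^{-\alpha}\}\) and \(p=\mathbb P(G)\). Write the mixture as
\[
R_\lambda(x,\cdot)=p\,\mu_G+(1-p)\,\mu_B,
\]
and \(\gamma=p\gamma+(1-p)\gamma\). By joint convexity of \(W_2^2\) under mixtures,
\[
W_2^2\bigl(R_\lambda(x,\cdot),\gamma\bigr)\le p\,W_2^2(\mu_G,\gamma)+(1-p)\,W_2^2(\mu_B,\gamma).
\]
The two pieces are handled as follows.

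\emph{Bad piece.} The synchronous coupling gives \((1-p)W_2^2(\mu_B,\gamma)\le\lambda^2\E[\|\zeta\|^2\mathbf 1_{G^c}]\). By Markov's inequality at order \(q_0\), this is at most \(\lambda^{2+\alpha(q_0-2)}\E\|\zeta\|^{q_0}\le C\lambda^3(1+\|x\|^{M_0})\).

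\emph{Good piece.} Let \(V_G\) denote \(V\) conditioned on \(G\). It is no longer centered, with mean \(\bar v=\E[V\mathbf 1_G]/p=-\E[V\mathbf 1_{G^c}]/p\). This mean satisfies \(\|\bar v\|\le C\lambda\cdot\lambda^{\alpha(q_0-1)}(1+\|x\|^{M_0})\), which is \(o(\lambda^{3/2})\). Shifting the reference Gaussian by \(\bar v\) therefore costs at most \(2\|\bar v\|^2=O(\lambda^3)\). The remaining comparison is between the centered bounded mixture and \(N(m+\bar v,\sigma^2I)\). There the exponent is bounded: \(|\langle V,V'\rangle|/\sigma^2\le C\lambda^{2-2\alpha}/\lambda=C\lambda^{1-2\alpha}\to0\). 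So I can use \(e^u-1-u\le u^2\) for \(|u|\le1\) (after taking \(\lambda\) small) to get \(\chi^2\le\E\langle V,V'\rangle^2/\sigma^4\le C\lambda^2(\E[\|\zeta\|^2\mid x])^2\). Multiplying by \(2\sigma^2\) yields \(C\lambda^3(1+\|x\|^{M})\). For bounded \(p\) away from zero, which holds for small \(\lambda\) uniformly in \(x\) up to a polynomial weight, the conditioning constants are harmless.

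The main obstacle is this truncation bookkeeping. The truncation level has to be chosen so that three things hold at once: the exponent \(\langle V,V'\rangle/\sigma^2\) stays uniformly small (forcing \(\alpha<1/2\)); the discarded tail costs \(O(\lambda^3)\) (forcing \(\alpha(q_0-2)\ge1\)); and the recentering error is \(O(\lambda^{3/2})\). Checking that \(q_0>4\) makes these compatible, and tracking the polynomial weight \(M\) from \(M_0\), is the delicate step. The Gaussian transport and \(\chi^2\) identities themselves are standard.
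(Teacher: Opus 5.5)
Your proposal is correct and is essentially the paper's argument: the Gaussian-mixture $\chi^2$ identity with the linear term killed by conditional centering, then Gaussian $T_2$, plus a moment truncation whose $O(\lambda^3)$ tail cost uses exactly $q_0>4$ (the paper truncates $w_x=\zeta/D_\lambda$ at $\delta\lambda^{-1/2}$ and recenters the truncated variable directly, comparing it with the untruncated law by synchronous coupling, so no conditioning on $G$ and no $1/p$ factor appears). If you keep your conditioning, make the remark about $p$ precise: $p\ge 1/2$ holds only where $C\lambda^{\alpha q_0}(1+\|x\|^{M_0})\le 1/2$, and on the complementary set of $x$ the trivial synchronous bound $\lambda^2\E[\|\zeta\|^2\mid x]$ is already at most $C\lambda^3(1+\|x\|^{M})$ after enlarging $M$.
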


\begin{proof}
After translation by the common conditional mean, the comparison is
between
\[
\nu_x=\mathcal L(\sigma_\lambda G-\lambda w_x),
\qquad
\gamma_\lambda=\mathcal L(\sigma_\lambda G),
\qquad
w_x=\frac{\zeta(x,Z)}{D_\lambda(x)},
\quad
\sigma_\lambda^2=2\beta^{-1}\lambda.
\]
A synchronous coupling only sees the displacement
$\lambda w_x$ and gives a squared defect of order $\lambda^2$.
The Gaussian convolution gains one order because $w_x$ is centered.
After truncation and recentering, with an independent conditional copy
$\widetilde{w}_x^R$, the Gaussian-mixture identity is
\begin{equation}\label{eq:chi-square-mixture-identity}
1+\chi^2(\nu_x^R\Vert\gamma_\lambda)
=
\E\left[
\exp\left(
\frac{\lambda^2}{\sigma_\lambda^2}
\ip{w_x^R}{\widetilde{w}_x^R}
\right)
\middle|x
\right].
\end{equation}
Since $\lambda^2/\sigma_\lambda^2=O(\lambda)$ and the linear term
vanishes by conditional centering,
$\chi^2(\nu_x^R\Vert\gamma_\lambda)=O(\lambda^2)$ with a polynomial
state weight.  Gaussian $T_2$ then yields
\[
W_2^2(\nu_x^R,\gamma_\lambda)
\lesssim
\sigma_\lambda^2\chi^2(\nu_x^R\Vert\gamma_\lambda)
=O(\lambda^3).
\]
The truncation tail is of the same order under the conditional high-moment bound in
Assumption~\ref{ass:w1-accuracy}(ii).  The complete truncation and
mixture calculation is given in
Appendix~\ref{app:gaussian-smoothing-details}.
\end{proof}

The same-state estimate in
Lemma~\ref{lem:gaussian-smoothed-defect} is not by itself enough to
control the two chains over a block, because after the first step their
current states are no longer identical.  For states $x$ and $y$, insert
the mean-oracle kernel at $x$ and write
\begin{equation}\label{eq:sg-mean-kernel-triangle}
\begin{aligned}
W_2\!\left(R_\lambda(x,\cdot),\bar R_\lambda(y,\cdot)\right)
&\le
W_2\!\left(R_\lambda(x,\cdot),\bar R_\lambda(x,\cdot)\right)\\
&\quad+
W_2\!\left(\bar R_\lambda(x,\cdot),\bar R_\lambda(y,\cdot)\right).
\end{aligned}
\end{equation}
The first term is the centered stochastic-gradient defect already
controlled at the one-step scale \(O(\lambda^{3/2})\). For the second
term, the two mean kernels have the same Gaussian covariance, and
synchronous coupling gives
\begin{equation}\label{eq:mean-kernel-synchronous}
W_2^2\!\left(
\bar R_\lambda(x,\cdot),\bar R_\lambda(y,\cdot)
\right)
\le
\left\|
x-y+\lambda\{b_\lambda(x)-b_\lambda(y)\}
\right\|^2.
\end{equation}
The next result combines these two estimates over a fixed time block.

\begin{proposition}[First-order fixed-block stochastic-gradient defect]
\label{prop:block-sg-w2}
For every fixed \(T>0\),
\begin{equation}\label{eq:block-sg-w2-new}
W_2\!\left(
\mu R_\lambda^{n_\lambda},
\mu\bar R_\lambda^{n_\lambda}
\right)
\le C_T\lambda
\end{equation}
uniformly over initial laws in the moment class of
Theorem~\ref{thm:polynomial-foster-lyapunov}.
\end{proposition}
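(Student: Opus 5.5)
We plan to build a step-by-step coupling of the two chains started from the same initial law and propagate a squared $L^2$ error. Define $X_0=Y_0\sim\mu$. Given $(X_k,Y_k)$, we draw $(X_{k+1},Y_{k+1})$ as follows. First introduce an intermediate variable $\widetilde Y_{k+1}\sim\bar R_\lambda(X_k,\cdot)$. Couple $X_{k+1}$ with $\widetilde Y_{k+1}$ through a (measurably selected) optimal $W_2$ coupling of $R_\lambda(X_k,\cdot)$ and $\bar R_\lambda(X_k,\cdot)$. Couple $\widetilde Y_{k+1}$ with $Y_{k+1}\sim\bar R_\lambda(Y_k,\cdot)$ synchronously, using the same Gaussian increment. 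Gluing gives a Markovian coupling, so $W_2^2(\mu R_\lambda^{k},\mu\bar R_\lambda^{k})\le e_k:=\E\|X_k-Y_k\|^2$.

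From \eqref{eq:sg-mean-kernel-triangle}--\eqref{eq:mean-kernel-synchronous}, with $\Delta_k=X_k-Y_k$, we have
\[
\|X_{k+1}-Y_{k+1}\|\le A_k+\bigl\|\Delta_k+\lambda\{b_\lambda(X_k)-b_\lambda(Y_k)\}\bigr\|,
\]
where $\E[A_k^2\mid X_k]\le C\lambda^3(1+\|X_k\|^M)$ by Lemma~\ref{lem:gaussian-smoothed-defect}. Squaring and applying Young's inequality $(a+b)^2\le(1+\lambda)b^2+(1+\lambda^{-1})a^2$ gives
\[
e_{k+1}\le(1+\lambda)\E\bigl\|\Delta_k+\lambda\{b_\lambda(X_k)-b_\lambda(Y_k)\}\bigr\|^2+C\lambda^2\bigl(1+\E\|X_k\|^M\bigr).
\]
The last term is $O(\lambda^2)$ uniformly in $k\le n_\lambda$ by Corollary~\ref{cor:uniform-moments}. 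Over $n_\lambda=O(\lambda^{-1})$ steps this term accumulates to $O(\lambda)$, which is not $O(\lambda^2)$. The Young split is therefore too lossy, and the plan refines it.

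The refinement keeps the cross term. Write $\|a+b\|^2=\|b\|^2+2\langle a,b\rangle+\|a\|^2$. The point is that $a=X_{k+1}-\widetilde Y_{k+1}$ has conditional mean zero given $(X_k,Y_k)$, because by \eqref{eq:same-conditional-mean-new} both kernels have the same mean $x+\lambda b_\lambda(x)$. However, $b$ also contains the Gaussian increment, which is shared with $\widetilde Y$ only for the synchronous part. Concretely, $b=\widetilde Y_{k+1}-Y_{k+1}=\Delta_k+\lambda\{b_\lambda(X_k)-b_\lambda(Y_k)\}$ is deterministic given $(X_k,Y_k)$, since the Gaussian increments cancel. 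Then $\E[\langle a,b\rangle\mid X_k,Y_k]=0$ exactly, and
\[
e_{k+1}\le\E\bigl\|\Delta_k+\lambda\{b_\lambda(X_k)-b_\lambda(Y_k)\}\bigr\|^2+C\lambda^3\bigl(1+\E\|X_k\|^M\bigr).
\]
Summing the $O(\lambda^3)$ defects over the block now yields the desired $O(\lambda^2)$.

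It remains to control the propagation term. Expanding it gives
\[
e_k+2\lambda\,\E\langle\Delta_k,b_\lambda(X_k)-b_\lambda(Y_k)\rangle+\lambda^2\E\|b_\lambda(X_k)-b_\lambda(Y_k)\|^2 .
\]
The main obstacle is that $b_\lambda$ is only polynomially locally Lipschitz, so neither term is bounded by $C\lambda e_k$ directly. For the inner product we will write $b_\lambda=b+\tau_\lambda$. We then use the one-sided bound \eqref{eq:target-global-one-sided} for $b$ together with the taming Lipschitz estimate \eqref{eq:taming-consistency-two-point}, which contributes $C\lambda^2P_2\|\Delta_k\|^2$. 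Alternatively, we can use that $b/D_\lambda$ inherits a one-sided Lipschitz bound, since $D_\lambda$ is radial and monotone in the tail.

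For the quadratic term and the polynomial weights we plan a localization. On the event $\{\|X_k\|,\|Y_k\|\le\lambda^{-\kappa}\}$, with $\kappa$ small, we have $\lambda^2\|b_\lambda(X_k)-b_\lambda(Y_k)\|^2\le C\lambda^{2-2\kappa q_b}\|\Delta_k\|^2\le C\lambda\|\Delta_k\|^2$. On the complement, we bound crudely using the high uniform moments of both chains (Corollaries~\ref{cor:uniform-moments} and \ref{cor:mean-oracle-chain-moments}) together with Markov's inequality. This contributes $O(\lambda^{3})$ per step once $\ell_0$ is large enough.

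The resulting recursion is $e_{k+1}\le(1+C\lambda)e_k+C\lambda^3$ with $e_0=0$. A discrete Gronwall argument over $k\le n_\lambda\le T/\lambda+1$ then gives $e_{n_\lambda}\le C e^{CT}\lambda^2$, hence \eqref{eq:block-sg-w2-new}. Uniformity over initial laws follows because every constant depends on $\mu$ only through the moment bounds of Theorem~\ref{thm:polynomial-foster-lyapunov}.
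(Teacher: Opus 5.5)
Your proposal is correct and follows essentially the same route as the paper. You use the same Markovian coupling through the intermediate kernel $\bar R_\lambda(X_k,\cdot)$, with its Gaussian coordinate reused for the mean chain from $Y_k$. The conditional cross term vanishes exactly by \eqref{eq:same-conditional-mean-new}, so the $O(\lambda^3)$ defect of Lemma~\ref{lem:gaussian-smoothed-defect} accumulates additively. Propagation uses the one-sided bound \eqref{eq:target-global-one-sided} plus taming consistency, and the argument closes with discrete Gronwall.

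The only real difference is how you absorb the polynomial weights. You localize on $\{\|X_k\|,\|Y_k\|\le\lambda^{-\kappa}\}$ and pay $O(\lambda^3)$ on the complement via Markov's inequality and high moments, which gives the linear recursion $e_{k+1}\le(1+C\lambda)e_k+C\lambda^3$ directly. The paper instead uses H\"older interpolation to get a term $C\lambda^2u_k^{\vartheta}$ with $\vartheta\in(1/2,1)$, and rescales $v_k=u_k/\lambda^2$ before applying Gronwall.
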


\begin{proof}
Start the two chains from a common initial state.  Conditional on
$X_k=x$, choose a measurable $\varepsilon$-optimal coupling $(Z^{\rm SG},\bar Z)$ of
$R_\lambda(x,\cdot)$ and $\bar R_\lambda(x,\cdot)$, independently of
$Y_k$ given $X_k$; the final estimate follows by letting $\varepsilon\downarrow0$.  Use the Gaussian coordinate of $\bar Z$ to update
the mean chain from $Y_k$, and set
\[
\Delta_k=Z^{\rm SG}-\bar Z.
\]
By \eqref{eq:same-conditional-mean-new},
$\E[\Delta_k\mid X_k,Y_k]=0$, while the coupled update satisfies
\[
X_{k+1}-Y_{k+1}
=
X_k-Y_k
+\lambda\{b_\lambda(X_k)-b_\lambda(Y_k)\}
+\Delta_k.
\]
Hence the cross term vanishes after conditioning.  With
$u_k=\E\|X_k-Y_k\|^2$, Lemma~\ref{lem:gaussian-smoothed-defect},
\eqref{eq:mean-kernel-synchronous},
\eqref{eq:target-global-one-sided},
\eqref{eq:target-polynomial-local-lipschitz}, and
Lemma~\ref{lem:taming-consistency-new} give
\begin{equation}\label{eq:sg-nonlinear-recursion}
u_{k+1}
\le
(1+C\lambda)u_k
+C\lambda^2
\E\!\left[P(X_k,Y_k)\|X_k-Y_k\|^2\right]
+C\lambda^3.
\end{equation}
By H\"older interpolation and the uniform high-moment bounds, for some
$\vartheta\in(1/2,1)$,
\[
\E\!\left[P(X_k,Y_k)\|X_k-Y_k\|^2\right]
\le C u_k^\vartheta.
\]
Since $u_0=0$, set $v_k=u_k/\lambda^2$.  Then
\[
v_{k+1}
\le
v_k+C\lambda(1+v_k+v_k^\vartheta).
\]
Using $v^\vartheta\le 1+v$ and discrete Gronwall on $k\lambda\le T$
gives $\sup_{k\le n_\lambda}v_k\le C_T$.  Therefore
$u_{n_\lambda}\le C_T\lambda^2$, which proves
\eqref{eq:block-sg-w2-new}.  The detailed weighted estimate is given in
Appendix~\ref{app:block-accumulation-details}.
\end{proof}

Proposition~\ref{prop:block-sg-w2} therefore closes the first of the
two $W_2$ terms in the fixed-block reduction.  The gain does not rely
on a vanishing-variance assumption.  At each step, the Gaussian-smoothed
kernel coupling produces a conditionally centered increment; its cross term
with the propagated state error vanishes after conditioning, so the
$O(\lambda^3)$ local squared defect accumulates to $O(\lambda^2)$ over a
fixed physical-time block.

\subsection{The mean discretization channel}

It remains to estimate the second term in the fixed-block reduction,
\begin{equation}\label{eq:mean-channel-target-new}
W_2\!\left(
\mu\bar R_\lambda^{n_\lambda},
\mu P_{T_\lambda}
\right)
\le C_T\lambda.
\end{equation}
This comparison contains two local defects.  First, the mean chain uses
the tamed drift $b_\lambda=b+\tau_\lambda$ instead of the target drift
$b$.  By Lemma~\ref{lem:taming-consistency-new},
\begin{equation}\label{eq:mean-channel-taming-scale}
\|\tau_\lambda(x)\|
\le C\lambda^2P_1(x),
\qquad
\lambda\|\tau_\lambda(x)\|
\le C\lambda^3P_1(x).
\end{equation}
Thus the one-step taming contribution is higher order on the Euler time
scale.

The leading local error is the Euler freezing remainder.  Under
additive noise, its conditional mean is $O(\lambda^2)$ and its centered
conditional variance is $O(\lambda^3)$.  These are precisely the local
scales needed to obtain a first-order $W_2$ estimate over a block of
$O(\lambda^{-1})$ steps.

\begin{lemma}[Additive-noise local remainder]
\label{lem:additive-local-remainder-new}
Let
\[
\rho_k
=
\int_{k\lambda}^{(k+1)\lambda}
\{b(Z_s)-b(Z_{k\lambda})\}\,\dd s.
\]
There is a polynomial \(P\) such that
\begin{align}
\left\|
\E[\rho_k\mid\mathcal F_{k\lambda}]
\right\|
&\le
C\lambda^2P(Z_{k\lambda}),
\label{eq:local-remainder-mean}\\
\E\!\left[
\|\rho_k-\E[\rho_k\mid\mathcal F_{k\lambda}]\|^2
\mid\mathcal F_{k\lambda}
\right]
&\le
C\lambda^3P(Z_{k\lambda}).
\label{eq:local-remainder-variance}
\end{align}
\end{lemma}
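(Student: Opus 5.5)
The plan is to expand $b(Z_s)-b(Z_{k\lambda})$ by Itô's formula on the diffusion \eqref{eq:target-diffusion-new}. Since $b=-g-\eta x\|x\|^{2r}$ is $C^2$ with polynomially bounded first and second derivatives (Assumption~\ref{ass:w1-accuracy}(i), together with the explicit radial term), for $s\in[k\lambda,(k+1)\lambda]$ we can write
\[
b(Z_s)-b(Z_{k\lambda})
=\int_{k\lambda}^{s}\mathcal L b(Z_u)\,\dd u
+\sqrt{2\beta^{-1}}\int_{k\lambda}^{s}\nabla b(Z_u)\,\dd B_u .
\]
Here $\mathcal L b=\nabla b\, b+\beta^{-1}\Delta b$ is applied componentwise. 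A localization argument makes this rigorous: use stopping times at exit from large balls, then remove them with the finite-time moment bounds below.

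\emph{Mean bound \eqref{eq:local-remainder-mean}.} Integrate in $s$ and take conditional expectation. The stochastic integral is a true martingale because the integrand has polynomial growth and the diffusion has all polynomial moments. Hence
\[
\E[\rho_k\mid\mathcal F_{k\lambda}]=\int_{k\lambda}^{(k+1)\lambda}\int_{k\lambda}^{s}\E[\mathcal Lb(Z_u)\mid\mathcal F_{k\lambda}]\,\dd u\,\dd s .
\]
The function $\|\mathcal Lb\|$ is bounded by a polynomial. The standard finite-time Lyapunov estimate for the target diffusion gives $\E[\|Z_u\|^p\mid Z_{k\lambda}]\le C_p(1+\|Z_{k\lambda}\|^p)$ for $u-k\lambda\le1$; this follows from Itô's formula for $\|x\|^p$ and the dissipativity $\langle x,b(x)\rangle\le C-c\|x\|^{2r+2}$ from Lemma~\ref{lem:exterior-radial-balance}. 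The double time integral then contributes $\lambda^2/2$, which yields the claimed bound with a polynomial $P$.

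\emph{Variance bound \eqref{eq:local-remainder-variance}.} It suffices to bound $\E[\|\rho_k\|^2\mid\mathcal F_{k\lambda}]$, since centering only decreases the conditional second moment. Split $\rho_k=A_k+M_k$ into the drift double integral and the stochastic part $M_k=\sqrt{2\beta^{-1}}\int_{k\lambda}^{(k+1)\lambda}\int_{k\lambda}^s\nabla b(Z_u)\,\dd B_u\,\dd s$. For the drift part, Cauchy--Schwarz gives $\E\|A_k\|^2\le C\lambda^4P$. For the stochastic part, Jensen in $s$ followed by the Itô isometry gives
\[
\E\|M_k\|^2\le \lambda\int_{k\lambda}^{(k+1)\lambda}C\int_{k\lambda}^s\E\|\nabla b(Z_u)\|_{\rm op}^2\,\dd u\,\dd s\le C\lambda^3 P(Z_{k\lambda}),
\]
again using the polynomial growth of $\nabla b$ from \eqref{eq:target-polynomial-local-lipschitz} and the conditional moment bound. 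Adding the two pieces gives $C\lambda^3P$ for $\lambda\le1$.

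\emph{Main obstacle.} The delicate point is the conditional moment propagation $\E[\|Z_u\|^p\mid Z_{k\lambda}]\lesssim 1+\|Z_{k\lambda}\|^p$ on short intervals, uniformly in the starting point. This must hold despite the superlinear drift, which also makes Itô's formula require localization. The plan is to apply Itô to $V_p=1+\|x\|^p$. The drift term $p\|x\|^{p-2}\langle x,b(x)\rangle$ is bounded above by $C V_p$ using Lemma~\ref{lem:exterior-radial-balance}, and the Laplacian term is $O(V_p)$. Gronwall on the stopped process and Fatou then give the bound with constant $e^{C}$ for intervals of length at most one. All polynomials $P$ are then absorbed into a single polynomial of degree depending on $r,\ell_1,\ell_2$.
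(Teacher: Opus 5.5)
Your proposal is correct and follows essentially the paper's argument. You expand $b(Z_s)$ by Itô's formula, take the $O(\lambda^2)$ mean from the drift double integral, and get the $O(\lambda^3)$ variance from the Itô isometry. Your Jensen-in-$s$ step replaces the paper's stochastic Fubini rewriting with weight $((k+1)\lambda-u)$ and gives the same bound, and you make explicit the conditional moment propagation that the paper only cites as ``finite-time diffusion moments.''
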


\begin{proof}
A first-order It\^o--Taylor expansion of $b(Z_s)$, together with the
polynomial derivative bounds and finite-time diffusion moments, gives
the conditional mean order $\lambda^2$ and centered variance order
$\lambda^3$.  Details are given in
Appendix~\ref{app:mean-diffusion-details}.
\end{proof}

\begin{proposition}[First-order fixed-block mean approximation]
\label{prop:mean-block-w2-new}
For every fixed \(T>0\),
\begin{equation}\label{eq:mean-block-w2-new}
W_2\!\left(
\mu\bar R_\lambda^{n_\lambda},
\mu P_{T_\lambda}
\right)
\le C_T\lambda
\end{equation}
uniformly over the same moment class.
\end{proposition}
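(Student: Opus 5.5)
We will prove the estimate by synchronous coupling. The mean-oracle chain and the target diffusion are driven by the same Brownian motion: for each step \(k\), we set
\[
\xi_{k+1}=\lambda^{-1/2}\bigl(B_{(k+1)\lambda}-B_{k\lambda}\bigr),
\qquad
\bar\theta_0^\lambda=Z_0\sim\mu .
\]
With this choice the Gaussian increments cancel identically, and the error \(e_k=\bar\theta_k^\lambda-Z_{k\lambda}\) satisfies the pathwise recursion
\[
e_{k+1}
=
e_k
+\lambda\{b(\bar\theta_k^\lambda)-b(Z_{k\lambda})\}
+\lambda\tau_\lambda(\bar\theta_k^\lambda)
-\rho_k ,
\]
where \(\rho_k\) is the Euler freezing remainder of Lemma~\ref{lem:additive-local-remainder-new}. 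We split \(\rho_k=m_k+\tilde\rho_k\), with \(m_k=\E[\rho_k\mid\mathcal F_{k\lambda}]\) and \(\tilde\rho_k\) the centered part. Since \(e_k\) and the drift differences are \(\mathcal F_{k\lambda}\)-measurable, the cross term between \(\tilde\rho_k\) and the \(\mathcal F_{k\lambda}\)-measurable part of the update vanishes in expectation. This cancellation is what keeps the variance part of the remainder at the \(\lambda^3\) scale rather than \(\lambda^2\).

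We then expand \(u_k=\E\|e_k\|^2\) and bound each piece.
\begin{enumerate}
\item The main term \(\|e_k+\lambda\{b(\bar\theta_k^\lambda)-b(Z_{k\lambda})\}\|^2\) is controlled by the one-sided bound \eqref{eq:target-global-one-sided} for the cross term, together with \(\lambda^2\|b(x)-b(y)\|^2\le C\lambda^2P(x,y)\|x-y\|^2\) from \eqref{eq:target-polynomial-local-lipschitz}. This gives \((1+C\lambda)u_k+C\lambda^2\E[P\|e_k\|^2]\).
\item The taming term contributes \(\lambda^2\|\tau_\lambda\|^2=O(\lambda^6)\) by \eqref{eq:mean-channel-taming-scale}. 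Its cross term with \(e_k\) is bounded by \(2\lambda\|e_k\|\|\tau_\lambda\|\le\lambda u_k+C\lambda^5\).
\item The mean remainder \(m_k\) is handled by Young's inequality: \(2|\langle e_k,m_k\rangle|\le\lambda\|e_k\|^2+\lambda^{-1}\|m_k\|^2\). By \eqref{eq:local-remainder-mean}, this yields \(\lambda u_k+C\lambda^3\E P(Z_{k\lambda})\).
\item The centered remainder contributes \(\E\|\tilde\rho_k\|^2\le C\lambda^3\) by \eqref{eq:local-remainder-variance}.
\end{enumerate}
The finite-time moments of both processes are uniform on \([0,T+1]\). For the chain this follows from Corollary~\ref{cor:mean-oracle-chain-moments}; for the diffusion it follows from the standard Lyapunov estimate. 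Hence every polynomial weight evaluated along the paths has bounded expectation.

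Collecting these bounds gives
\[
u_{k+1}\le(1+C\lambda)u_k+C\lambda^2\E[P(\bar\theta_k^\lambda,Z_{k\lambda})\|e_k\|^2]+C\lambda^3 ,
\]
which has exactly the form of \eqref{eq:sg-nonlinear-recursion}. We then proceed as in Proposition~\ref{prop:block-sg-w2}. Hölder interpolation with high moments bounds the weighted term by \(Cu_k^\vartheta\). Setting \(v_k=u_k/\lambda^2\) gives \(v_{k+1}\le v_k+C\lambda(1+v_k)\). Discrete Gronwall over \(k\le n_\lambda\le(T+\lambda)/\lambda\), starting from \(v_0=0\), gives \(u_{n_\lambda}\le C_T\lambda^2\). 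The final estimate follows from \(W_2(\mu\bar R_\lambda^{n_\lambda},\mu P_{T_\lambda})^2\le u_{n_\lambda}\).

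The main obstacle is the weighted term \(\lambda^2\E[P\|e_k\|^2]\). The drift is only polynomially locally Lipschitz, so this term cannot be absorbed directly. There are two ways to handle it.
\begin{enumerate}
\item Hölder's inequality with exponent \(\vartheta<1\) needs a priori control of higher moments \(\E\|e_k\|^{2p}\), and these are again bounded uniformly by the moment estimates. Since \(\lambda^2 u_k^\vartheta=\lambda^{2+2\vartheta}v_k^\vartheta\le\lambda^3(1+v_k)\) for \(\vartheta\ge1/2\), the closure above works.
\item Alternatively, one can drop the \(\lambda^2\|b(x)-b(y)\|^2\) bound altogether and use the dissipativity at infinity \eqref{eq:target-dissipativity-infinity}. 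Writing \(b(x)-b(y)=\int_0^1\nabla b\,\dd s\,(x-y)\), the quadratic term \(\lambda^2\|\nabla b\|^2\|e\|^2\) is dominated by \(-2\lambda m_b(\|x\|+\|y\|)^{2r}\|e\|^2\) wherever the radial part is large. This requires a localization argument in which \(\lambda(\|x\|+\|y\|)^{2r}\) is small. That holds off an event whose probability is polynomially small by the moment bounds, and on that event the error is bounded crudely by Cauchy--Schwarz with high moments.
\end{enumerate}
We would try the Hölder route first, since it reuses the argument of Appendix~\ref{app:block-accumulation-details}.
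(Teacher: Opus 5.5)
Your proposal is correct and follows essentially the same route as the paper. Both arguments use synchronous Brownian coupling, split the Euler freezing remainder into its conditional mean (absorbed by Young's inequality at cost $C\lambda^3$) and its centered part (zero cross term), and combine the one-sided and polynomial local Lipschitz bounds on $b$ with H\"older interpolation to $Cu_k^\vartheta$, $\vartheta\in(1/2,1)$, and the rescaled discrete Gronwall argument from $u_0=0$.

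The only step you leave implicit is the cross terms of $\lambda\{b(\bar\theta_k^\lambda)-b(Z_{k\lambda})\}$ with the taming and mean-remainder pieces. The paper handles these by grouping the two pieces into a single $O(\lambda^2)$ vector and applying Young's inequality in the same way.
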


\begin{proof}
Couple the mean-oracle chain and the target diffusion with the same
Brownian increments.  The Gaussian increments then cancel, leaving the
propagation of the existing state error, the taming term in
\eqref{eq:mean-channel-taming-scale}, and the Euler freezing remainder.
Decompose the latter into its conditional mean and centered part.
Lemma~\ref{lem:additive-local-remainder-new} gives the required
$O(\lambda^2)$ mean and $O(\lambda^3)$ centered-variance scales, while
the taming contribution is already $O(\lambda^3)$ at one step.  The
resulting squared-error recursion is
\[
w_{k+1}
\le
(1+C\lambda)w_k+C\lambda^2w_k^\vartheta+C\lambda^3,
\qquad
w_0=0.
\]
Setting $z_k=w_k/\lambda^2$ and using
$z^\vartheta\le 1+z$ reduce the recursion to
$z_{k+1}\le(1+C\lambda)z_k+C\lambda$.  Discrete Gronwall therefore
gives $w_{n_\lambda}\le C_T\lambda^2$.  See
Appendix~\ref{app:mean-diffusion-details} for the complete calculation.
\end{proof}

Combining Propositions~\ref{prop:block-sg-w2} and
\ref{prop:mean-block-w2-new} gives the full fixed-block estimate for every
fixed horizon \(S>0\): with \(n_\lambda(S)=\lceil S/\lambda\rceil\) and
\(S_\lambda=n_\lambda(S)\lambda\),
\begin{equation}\label{eq:full-block-w2-new}
W_2\!\left(
\mu R_\lambda^{n_\lambda(S)},
\mu P_{S_\lambda}
\right)
\le C_S\lambda.
\end{equation}
In particular, the estimate holds for the contraction block \(T_\lambda\)
used above and for the LSI block \(T_\lambda^\beta\) used below.

\subsection{Stationary transfer and Wasserstein consequences}

The two block propositions and
\eqref{eq:w2-to-weighted-block-reduction} yield
\begin{equation}\label{eq:weighted-blocks-new}
\mathcal W_{\rho_2}(\mu R_\lambda^{n_\lambda},
\mu\bar R_\lambda^{n_\lambda})
+
\mathcal W_{\rho_2}(\mu\bar R_\lambda^{n_\lambda},
\mu P_{T_\lambda})
\le C_T\lambda.
\end{equation}

\begin{theorem}[First-order stationary \(W_1\)]
\label{thm:sg-w1-first-order}
Let \(\pi_\lambda^{\rm SG}\) be the invariant law supplied by
Section~\ref{sec:stability}.  Under Assumption~\ref{ass:w1-accuracy},
\begin{equation}\label{eq:w1-total-first-order-new}
W_1(\pi_\lambda^{\rm SG},\pi_\beta)\le C\lambda.
\end{equation}
\end{theorem}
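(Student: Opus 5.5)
The plan is to combine the stationary decomposition \eqref{eq:stationary-first-decomposition} with the contraction bound \eqref{eq:contracted-stationary-term} and the fixed-block estimate \eqref{eq:weighted-blocks-new}. Then I would absorb the contracted term and pass from $\mathcal W_{\rho_2}$ to $W_1$ through the first bridge in \eqref{eq:rho-three-bridges}.

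First I would fix $T>0$ with $q_\rho=C_\triangle e^{-c_\rho T}<1$ and take $n_\lambda$, $T_\lambda$ as in \eqref{eq:fixed-block-definition}. I would apply \eqref{eq:stationary-first-decomposition} with $N=n_\lambda$. Invariance gives $\pi_\lambda^{\rm SG}R_\lambda^{n_\lambda}=\pi_\lambda^{\rm SG}$ and $\pi_\beta P_{T_\lambda}=\pi_\beta$. Using \eqref{eq:contracted-stationary-term}, this yields
\[
\mathcal W_{\rho_2}(\pi_\lambda^{\rm SG},\pi_\beta)
\le
C_\triangle\mathcal W_{\rho_2}\bigl(\pi_\lambda^{\rm SG}R_\lambda^{n_\lambda},\pi_\lambda^{\rm SG}P_{T_\lambda}\bigr)
+q_\rho\,\mathcal W_{\rho_2}(\pi_\lambda^{\rm SG},\pi_\beta).
\]

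To bound the first term, I would take $\mu=\pi_\lambda^{\rm SG}$ in \eqref{eq:block-channel-decomposition} and \eqref{eq:weighted-blocks-new}. This is legitimate because Corollary~\ref{cor:invariant-law} gives $\pi_\lambda^{\rm SG}(V_{2\ell_0})\le b_{\ell_0}/a_{\ell_0}$ uniformly in $\lambda\le\lambda_0$. Hence $\pi_\lambda^{\rm SG}$ lies in the moment class required by Propositions~\ref{prop:block-sg-w2} and \ref{prop:mean-block-w2-new}. The constant $C_T$ in \eqref{eq:w2-to-weighted-block-reduction} also stays uniform, since fourth moments along the block are controlled by Corollary~\ref{cor:uniform-moments}, Corollary~\ref{cor:mean-oracle-chain-moments}, and the diffusion Lyapunov estimate. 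The first term is therefore at most $C_\triangle^2 C_T\lambda$.

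Next I need $\mathcal W_{\rho_2}(\pi_\lambda^{\rm SG},\pi_\beta)<\infty$, so that the term can be absorbed. Bounding $f(s)\le f'(0)s$ and using $s\le 1+\|x\|^2+\|y\|^2$ gives $\rho_2\le C\Pi_2^2$. Finite fourth moments of both laws then imply finiteness. For $\pi_\beta$, these moments come from the diffusion Lyapunov bound behind Proposition~\ref{prop:target-weighted-contraction}. Rearranging gives $\mathcal W_{\rho_2}(\pi_\lambda^{\rm SG},\pi_\beta)\le C_\triangle^2C_T\lambda/(1-q_\rho)$.

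Finally I would pass to $W_1$. Since $c\|x-y\|\le\rho_2(x,y)$ pointwise, for any coupling $\gamma$ we have $\int\|x-y\|\,d\gamma\le c^{-1}\int\rho_2\,d\gamma$. Taking the infimum over couplings gives $W_1\le c^{-1}\mathcal W_{\rho_2}\le C\lambda$.

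The main obstacle is making sure the block constants are uniform when the initial law is the $\lambda$-dependent measure $\pi_\lambda^{\rm SG}$. The appendix-level block estimates must depend on $\mu$ only through a bound on $\mu(V_{2\ell_0})$; I would check this first. I would also check that the $\mathcal W_{\rho_2}$ infimum is attained or approximated by couplings, so that the triangle and contraction inequalities combine cleanly at the level of optimal couplings.
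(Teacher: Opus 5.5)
Your proposal is correct and follows the paper's own proof essentially step for step. The shared steps are: the stationary decomposition with $N=n_\lambda$, the block estimates \eqref{eq:weighted-blocks-new} applied with $\mu=\pi_\lambda^{\rm SG}$, the contraction bound on the second term, absorption once $\mathcal W_{\rho_2}(\pi_\lambda^{\rm SG},\pi_\beta)<\infty$ has been checked through the upper bridge, and the linear bridge to $W_1$. One small difference is that you place $\pi_\lambda^{\rm SG}$ in the required moment class, uniformly in $\lambda$, via the stationary bound of Corollary~\ref{cor:invariant-law}. The paper cites Corollary~\ref{cor:uniform-moments} there, so yours is the more direct reference.
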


\begin{proof}
The target Lyapunov estimate for $V_2$ implies
$\pi_\beta(V_2)<\infty$.  Together with the polynomial moment bound for
$\pi_\lambda^{\rm SG}$ supplied by Corollary~\ref{cor:uniform-moments}
and the upper bridge in \eqref{eq:rho-three-bridges}, this gives
$\mathcal W_{\rho_2}(\pi_\lambda^{\rm SG},\pi_\beta)<\infty$.
Set $E_\lambda=\mathcal W_{\rho_2}(\pi_\lambda^{\rm SG},\pi_\beta)$.
Apply \eqref{eq:stationary-first-decomposition} with $N=n_\lambda$.
The block decomposition \eqref{eq:block-channel-decomposition} and
\eqref{eq:weighted-blocks-new} control the numerical term by
$C_T\lambda$, while \eqref{eq:contracted-stationary-term} controls the
second term by $q_\rho E_\lambda$.  Therefore
\[
E_\lambda\le C_T\lambda+q_\rho E_\lambda,
\qquad q_\rho<1.
\]
Hence \( E_\lambda
\le
\frac{C_T}{1-q_\rho}\lambda
\le C\lambda.
\)
The first bridge in \eqref{eq:rho-three-bridges} then yields
\eqref{eq:w1-total-first-order-new}.
\end{proof}

\begin{corollary}[Half-order stationary \(W_2\)]
\label{cor:unconditional-w2-half}
Under the assumptions of Theorem~\ref{thm:sg-w1-first-order},
\begin{equation}\label{eq:w2-total-half-order-new}
W_2(\pi_\lambda^{\rm SG},\pi_\beta)
\le C\lambda^{1/2}.
\end{equation}
\end{corollary}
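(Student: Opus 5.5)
The plan is to transfer the weighted stationary bound already obtained in the proof of Theorem~\ref{thm:sg-w1-first-order} to $W_2$ through the second bridge in \eqref{eq:rho-three-bridges}. That proof established
\[
E_\lambda=\mathcal W_{\rho_2}(\pi_\lambda^{\rm SG},\pi_\beta)\le \frac{C_T}{1-q_\rho}\lambda\le C\lambda ,
\]
and this bound is the only input needed. The steps below show the claim follows from it with no new estimates.

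First, recall that $\mathcal W_{\rho_2}$ is defined in \eqref{eq:weighted-transport-definition} as an infimum over couplings. For every $\varepsilon>0$, choose a coupling $\gamma_\varepsilon\in\Gamma(\pi_\lambda^{\rm SG},\pi_\beta)$ with
\[
\int\rho_2\,\dd\gamma_\varepsilon\le E_\lambda+\varepsilon .
\]
An optimal coupling exists anyway, by lower semicontinuity of $\rho_2$ and tightness of $\Gamma$, but $\varepsilon$-optimality suffices.

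Next, the pointwise bridge $\|x-y\|^2\le C\rho_2(x,y)$ from \eqref{eq:rho-three-bridges} gives
\[
W_2^2(\pi_\lambda^{\rm SG},\pi_\beta)\le\int\|x-y\|^2\,\gamma_\varepsilon(\dd x,\dd y)\le C\int\rho_2\,\dd\gamma_\varepsilon\le C(E_\lambda+\varepsilon).
\]
Letting $\varepsilon\downarrow0$ and inserting $E_\lambda\le C\lambda$ yields $W_2^2\le C\lambda$. Taking square roots gives \eqref{eq:w2-total-half-order-new}.

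The only point needing care is the quadratic bridge itself, since $f$ is concave and bounded-growth. The claim $\|x-y\|^2\lesssim f(\|x-y\|)\{1+\epsilon V_2(x)+\epsilon V_2(y)\}$ is proved in Appendix~\ref{app:weighted-transfer-details}, which we take as given. Its mechanism is that $f(r)\ge c\min(r,1)$ and the weight dominates $\|x-y\|^2$ for large separations: when $\|x-y\|\ge1$, the weight $1+\epsilon V_2(x)+\epsilon V_2(y)$ is at least $\tfrac{\epsilon}{2}\|x-y\|^2$. When $\|x-y\|<1$, we have $\|x-y\|^2\le\|x-y\|\le c^{-1}f(\|x-y\|)$.

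Finiteness of $E_\lambda$ is needed so that the absorption step in Theorem~\ref{thm:sg-w1-first-order} is legitimate. That theorem's proof already secured it. No obstacle is expected beyond this.
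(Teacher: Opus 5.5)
Your proposal is correct and is essentially the paper's proof: the quadratic bridge $\|x-y\|^2\le C\rho_2(x,y)$ gives $W_2^2\le C\mathcal W_{\rho_2}$, and the bound $\mathcal W_{\rho_2}(\pi_\lambda^{\rm SG},\pi_\beta)\le C\lambda$ from the proof of Theorem~\ref{thm:sg-w1-first-order} then yields the half-order rate. Your $\varepsilon$-optimal coupling step and your sketch of the bridge via $f(r)\ge f(1)\min(r,1)$ only spell out details the paper leaves to Appendix~\ref{app:weighted-transfer-details}.
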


\begin{proof}
The second bridge in \eqref{eq:rho-three-bridges} gives
$W_2^2(\mu,\nu)\le C\mathcal W_{\rho_2}(\mu,\nu)$; applying the
estimate proved in Theorem~\ref{thm:sg-w1-first-order} gives the claim.
\end{proof}

The preceding half-order \(W_2\) bound is proved before using the gradient
structure of SGLD and therefore extends to more general dissipative
stochastic-gradient Langevin-type dynamics.  We now return to the standard
SGLD setting, where the potential structure makes the required long-time
transfer automatic and upgrades the stationary \(W_2\) rate to first order.

\begin{lemma}[Automatic one-sided \(W_2\) contraction in the potential case via classical functional inequalities]
\label{lem:potential-one-sided-w2}
Assume Assumption~\ref{ass:w1-accuracy} and suppose that the mean drift is
conservative, $h=\nabla U$.  Then
$Z_\beta=\int e^{-\beta U(x)}\, \mathrm{d}x<\infty$, and
$\pi_\beta(\mathrm{d}x)=Z_\beta^{-1}e^{-\beta U(x)}\, \mathrm{d}x$ is the invariant law
of \eqref{eq:target-diffusion-new}.  Moreover, $\pi_\beta$ satisfies a
logarithmic Sobolev inequality, and there exist $T_\beta<\infty$ and
$q_\beta\in(0,1)$ such that
\begin{equation}\label{eq:potential-one-sided-w2}
W_2(\nu P_t,\pi_\beta)
\le
q_\beta W_2(\nu,\pi_\beta),
\qquad
 t\ge T_\beta,
 \quad \nu\in\mathcal P_2(\R^d).
\end{equation}
\end{lemma}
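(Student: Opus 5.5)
The argument has three stages: integrability and invariance of the Gibbs law, a logarithmic Sobolev inequality for $\pi_\beta$, and a $W_2$ contraction toward $\pi_\beta$ derived from that inequality. Only dissipativity at infinity and local regularity of $U$ are needed; no convexity is used.

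\emph{Integrability and invariance.} Since $h=\nabla U$ and $\ip{x}{h(x)}\ge c_I\|x\|^{2r+2}$ outside a ball (Lemma~\ref{lem:exterior-radial-balance}), integrate $\frac{\dd}{\dd s}U(sx)=\ip{x}{\nabla U(sx)}$ along rays. This gives $U(x)\ge c\|x\|^{2r+2}-C$, so $Z_\beta<\infty$ and $\pi_\beta$ has all moments. The generator of \eqref{eq:target-diffusion-new} is $\mathcal L=\beta^{-1}\Delta-\ip{\nabla U}{\nabla}$. Because $\beta^{-1}\nabla\log e^{-\beta U}=-\nabla U$, the measure $\pi_\beta$ is reversible: $\int\mathcal Lf\,\dd\pi_\beta=0$ for $f\in C_c^\infty$. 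Combined with the uniqueness in Proposition~\ref{prop:target-weighted-contraction}, $\pi_\beta$ is the invariant law.

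\emph{Logarithmic Sobolev inequality.} Write $U=U_0+U_1$ with $U_0$ strictly convex at infinity and $U_1$ bounded. By Lemma~\ref{lem:target-two-point-geometry}, the drift $b=-\nabla U$ satisfies \eqref{eq:target-dissipativity-infinity}, i.e.\ $\nabla^2U\succeq m_b I$ for $\|x\|\ge R_b$, and \eqref{eq:target-global-one-sided} gives $\nabla^2U\succeq -L_b I$ everywhere. Then:
\begin{enumerate}
\item Build $\tilde U$ by replacing $U$ inside a ball of radius $R'>R_b$ with a uniformly convex function, glued with a cutoff so that $\nabla^2\tilde U\succeq \kappa I$ globally and $\sup|U-\tilde U|<\infty$. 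The polynomial Hessian bounds from Assumption~\ref{ass:w1-accuracy}(i) control the gluing region.
\item Bakry--\'Emery gives an LSI for $e^{-\beta\tilde U}$.
\item Holley--Stroock bounded perturbation transfers it to $\pi_\beta$, with constant $C_{\rm LS}$.
\end{enumerate}
Alternatively, the Lyapunov-function criterion of Cattiaux--Guillin--Wu with $W=e^{\gamma\|x\|^2}$ can replace this construction, using $\mathcal LW/W\le -c\|x\|^2+C$ and the lower Hessian bound $-L_b$.

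\emph{Contraction.} I expect this to be the main obstacle, because LSI controls relative entropy while \eqref{eq:potential-one-sided-w2} is a pure $W_2$ statement over all of $\mathcal P_2$, where $\nu$ may have infinite entropy. I plan three steps.
\begin{enumerate}
\item \emph{Regularization.} For $\nu\in\mathcal P_2$ and $t_0=1$, reverse Bakry--\'Emery/Harnack regularization under the one-sided bound $\nabla^2U\succeq -L_bI$ (the Wang-type log-Harnack inequality) gives
\[
\mathrm{Ent}_{\pi_\beta}(\nu P_{t_0})\le C_0 W_2^2(\nu,\pi_\beta).
\]
This uses $\pi_\beta P_{t_0}=\pi_\beta$ and the coupling form of log-Harnack.
\item \emph{Entropy decay.} LSI gives $\mathrm{Ent}(\nu P_{t})\le e^{-2(t-t_0)/(\beta C_{\rm LS})}\mathrm{Ent}(\nu P_{t_0})$.
\item \emph{Back to $W_2$.} Talagrand's $T_2$ inequality, which follows from LSI by Otto--Villani, gives $W_2^2(\nu P_t,\pi_\beta)\le C\,\mathrm{Ent}(\nu P_t)$.
\end{enumerate}
Chaining these yields
\[
W_2(\nu P_t,\pi_\beta)\le \sqrt{CC_0}\,e^{-(t-t_0)/(\beta C_{\rm LS})}W_2(\nu,\pi_\beta).
\]
Choose $T_\beta$ so that the prefactor is at most $q_\beta<1$; the bound then holds for all $t\ge T_\beta$.

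If the log-Harnack step is problematic on the full class $\mathcal P_2$, I would fall back on the weighted contraction \eqref{eq:target-weighted-contraction-new}. This is weaker, so I prefer the functional-inequality route, which matches the lemma's title.
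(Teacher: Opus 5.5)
Your proposal is correct and follows the paper's proof essentially step for step: you integrate the radial coercivity along rays to get $Z_\beta<\infty$, establish an LSI for $\pi_\beta$, and then chain log-Harnack entropy--cost regularization, LSI entropy decay and Talagrand's inequality (Otto--Villani) to obtain the $W_2$ contraction for all $t\ge T_\beta$. The differences are minor: your primary LSI argument is Bakry--\'Emery plus Holley--Stroock after convexifying $U$ on a ball (this is valid here, since $\nabla^2U\succeq m_bI$ outside $B_{R_b}$ lets a wide gluing annulus absorb the cutoff), whereas the paper uses the Cattiaux--Guillin--Wu Lyapunov criterion with $e^{a\|x\|^2}$ that you list as your alternative; in addition, the paper explicitly invokes nonexplosion to upgrade $\int\mathcal Lf\,\dd\pi_\beta=0$ to genuine invariance before appealing to uniqueness, a step you leave implicit.
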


\begin{proof}
First note that the Gibbs measure is well defined.  The radial coercivity
proved from Assumption~\ref{ass:w1-accuracy}(i) gives
\[
\ip{x}{\nabla U(x)}\ge c\|x\|^{2r+2}-C,
\]
and integration along rays yields $U(x)\ge c\|x\|^{2r+2}-C$ outside a
large ball.  Hence $Z_\beta<\infty$ and $\pi_\beta$ has finite moments of
all polynomial orders.  Since the diffusion has generator
$L_\beta=\beta^{-1}\Delta-\nabla U\cdot\nabla$ and is nonexplosive by the
same Lyapunov coercivity, this Gibbs measure is invariant; by the uniqueness
established for the target diffusion, it is the law denoted by $\pi_\beta$
throughout Section~\ref{sec:wasserstein}.

The assumptions also imply the curvature lower bound needed for the
regularization step.  Since $h=\nabla U$,
\[
\nabla^2U(x)
=
\nabla g(x)
+
\eta\bigl(\|x\|^{2r}I+2r\|x\|^{2r-2}xx^\top\bigr).
\]
The radial matrix is bounded below by $\|x\|^{2r}I$, while
Assumption~\ref{ass:w1-accuracy}(i) gives
\[
\nabla^2U(x)
\succeq
\{\eta\|x\|^{2r}-C(1+\|x\|^{\ell_1})\}I.
\]
Because $\ell_1<2r$, the right-hand side is bounded below on
$\R^d$; hence $\nabla^2U\succeq-\kappa I$ for some finite $\kappa$.
Equivalently, the drift $b=-\nabla U$ satisfies the one-sided monotonicity
condition
\[
\ip{b(x)-b(y)}{x-y}\le \kappa\|x-y\|^2.
\]

We next verify the Lyapunov form of the Cattiaux--Guillin--Wu
criterion~\cite[Theorem~1.2]{cattiauxguillinwu2010}.  For $W_a(x)=e^{a\|x\|^2}$,
\[
\frac{L_\beta W_a(x)}{W_a(x)}
=
\beta^{-1}(2ad+4a^2\|x\|^2)
-2a\ip{x}{\nabla U(x)}.
\]
Choosing $a>0$ fixed and small, the superquadratic radial term implies
\[
L_\beta W_a\le -c_\beta\|x\|^2W_a+C_\beta\mathbf 1_{B_R}
\]
for a large ball $B_R$.  Together with the curvature lower bound
$\nabla^2(\beta U)\succeq-\beta\kappa I$, this Lyapunov criterion yields an
LSI for $\pi_\beta$.

Let $C_{\rm LS,\beta}$ be an LSI constant and set
$\rho_\beta=(\beta C_{\rm LS,\beta})^{-1}$ in the above generator
normalization.  Then entropy decays as
\[
\operatorname{KL}(\nu P_t\mid\pi_\beta)
\le
 e^{-2\rho_\beta t}
\operatorname{KL}(\nu\mid\pi_\beta).
\]
The same LSI implies Talagrand's $T_2$ inequality by
Otto--Villani~\cite{otto2000talagrand}.  Finally, the one-sided monotonicity
condition and nonexplosion give Wang's log-Harnack inequality for
\eqref{eq:target-diffusion-new}; see R\H{o}ckner--Wang~\cite{rocknerwang2010logharnack}.
The corresponding entropy-cost estimate follows from the pointwise
log-Harnack inequality by a standard integration argument, which we recall for
completeness.  Applying log-Harnack to the density ratio and using duality
gives
\[
\operatorname{KL}(\delta_xP_s\mid\delta_yP_s)
\le c_{\beta,\kappa}(s)\|x-y\|^2,
\qquad s>0.
\]
Integrating this inequality against an optimal coupling of $\nu$ and
$\pi_\beta$, and using joint convexity of relative entropy, yields
\begin{equation}\label{eq:entropy-cost-from-logharnack}
\operatorname{KL}(\nu P_s\mid\pi_\beta)
\le
C_{\beta,\kappa}(s)W_2^2(\nu,\pi_\beta),
\qquad s>0.
\end{equation}
This log-Harnack--entropy-cost--Talagrand assembly is classical; closely
related versions appear, for example, in Ren--Wang~\cite{renwang2021entropy}.

Combining entropy-cost regularization at time $s$, LSI entropy decay
from $s$ to $t$, and Talagrand's inequality gives
\[
W_2^2(\nu P_t,\pi_\beta)
\le
A_{\beta,s}e^{-2\rho_\beta(t-s)}W_2^2(\nu,\pi_\beta),
\qquad t\ge s,
\]
for a finite constant $A_{\beta,s}$.  Choosing $T_\beta>s$ so that the
coefficient is smaller than $q_\beta^2<1$ proves
\eqref{eq:potential-one-sided-w2}.  The constants may deteriorate with
$\beta$, but they are finite for each fixed temperature.
\end{proof}

\begin{theorem}[First-order stationary \(W_2\) in the potential case]
\label{thm:potential-first-order-w2}
Let \(\pi_\lambda^{\rm SG}\) be the invariant law supplied by
Section~\ref{sec:stability}.  Under Assumption~\ref{ass:w1-accuracy}, if
\(h=\nabla U\), then for all sufficiently small \(\lambda\),
\begin{equation}\label{eq:potential-w2-first-order}
W_2(\pi_\lambda^{\rm SG},\pi_\beta)
\le C_\beta\lambda .
\end{equation}
\end{theorem}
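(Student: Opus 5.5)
The plan is to mimic the stationary transfer argument of Theorem~\ref{thm:sg-w1-first-order}, but to work directly in the genuine metric $W_2$ rather than in $\mathcal W_{\rho_2}$. The weighted cost is no longer needed, because Lemma~\ref{lem:potential-one-sided-w2} supplies a one-sided contraction toward $\pi_\beta$ in $W_2$ itself. Since $W_2$ is a true metric, the triangle inequality holds with constant one. The contraction in \eqref{eq:potential-one-sided-w2} is only one-sided, so the decomposition must be arranged so that the contracted term always has $\pi_\beta$ as one of its arguments.

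Fix $T_\beta$ and $q_\beta\in(0,1)$ from Lemma~\ref{lem:potential-one-sided-w2}. Set $n_\lambda^\beta=\lceil T_\beta/\lambda\rceil$ and $T_\lambda^\beta=n_\lambda^\beta\lambda\ge T_\beta$. Using invariance of $\pi_\lambda^{\rm SG}$ under $R_\lambda$ and of $\pi_\beta$ under $P_t$, write
\[
W_2(\pi_\lambda^{\rm SG},\pi_\beta)
\le
W_2\bigl(\pi_\lambda^{\rm SG}R_\lambda^{n_\lambda^\beta},\pi_\lambda^{\rm SG}P_{T_\lambda^\beta}\bigr)
+W_2\bigl(\pi_\lambda^{\rm SG}P_{T_\lambda^\beta},\pi_\beta\bigr).
\]
The second term is bounded by $q_\beta W_2(\pi_\lambda^{\rm SG},\pi_\beta)$ by \eqref{eq:potential-one-sided-w2}. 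This is legitimate because $\pi_\lambda^{\rm SG}\in\mathcal P_2$ by Corollary~\ref{cor:invariant-law}, and it also shows that the left-hand side is finite, given that $\pi_\beta$ has all moments. The first term is the fixed-block estimate \eqref{eq:full-block-w2-new} with horizon $S=T_\beta$ and initial law $\mu=\pi_\lambda^{\rm SG}$, which gives $C_{T_\beta}\lambda$. Rearranging yields
\[
W_2(\pi_\lambda^{\rm SG},\pi_\beta)\le \frac{C_{T_\beta}}{1-q_\beta}\lambda,
\]
and the $\beta$-dependence of the constants is absorbed into $C_\beta$.

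The main obstacle is to confirm that \eqref{eq:full-block-w2-new} applies with $\mu=\pi_\lambda^{\rm SG}$, with a constant uniform in small $\lambda$. Propositions~\ref{prop:block-sg-w2} and~\ref{prop:mean-block-w2-new} are stated uniformly over the moment class of Theorem~\ref{thm:polynomial-foster-lyapunov}. The stationary moment bound \eqref{eq:stationary-moment-main} places $\pi_\lambda^{\rm SG}$ in this class with $\pi_\lambda^{\rm SG}(V_{2\ell_0})\le b_{\ell_0}/a_{\ell_0}$, and this bound is independent of $\lambda\le\lambda_0$. Along the block, the moments of the SG chain stay bounded because it is stationary; the mean-oracle chain is controlled by Corollary~\ref{cor:mean-oracle-chain-moments}; the diffusion is controlled by its finite-time Lyapunov estimate. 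These are exactly the polynomial weights consumed in the Hölder interpolation steps of the two block propositions, so $C_{T_\beta}$ depends only on $T_\beta$, $\beta$ and the model constants.

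One further check is needed: the diffusion started from $\pi_\lambda^{\rm SG}$ and driven over $[0,T_\lambda^\beta]$ must be the same object that is contracted. Since $T_\lambda^\beta\in[T_\beta,T_\beta+\lambda)$ and the contraction holds for every $t\ge T_\beta$, no additional argument is required there.
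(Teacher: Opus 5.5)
Your proposal is correct and follows essentially the same route as the paper. Both arguments use the triangle inequality in $W_2$ together with invariance, the fixed-block estimate with horizon $S=T_\beta$ applied from $\mu=\pi_\lambda^{\rm SG}$, the one-sided contraction of Lemma~\ref{lem:potential-one-sided-w2}, and absorption after first checking that $W_2(\pi_\lambda^{\rm SG},\pi_\beta)<\infty$; your extra remarks on the moment class and on $T_\lambda^\beta\ge T_\beta$ simply make explicit what the paper uses implicitly.
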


\begin{proof}
By Corollary~\ref{cor:uniform-moments}, the invariant law
\(\pi_\lambda^{\rm SG}\) has a finite second moment uniformly for small
\(\lambda\).  The Gibbs law \(\pi_\beta\) has finite second moment by the
coercivity verified in Lemma~\ref{lem:potential-one-sided-w2}.  Hence
\(W_2(\pi_\lambda^{\rm SG},\pi_\beta)<\infty\), so the absorption argument
below is legitimate.

Choose \(T_\beta\) and \(q_\beta<1\) from
Lemma~\ref{lem:potential-one-sided-w2}, and set
\(n_\lambda^\beta=\lceil T_\beta/\lambda\rceil\) and
\(T_\lambda^\beta=n_\lambda^\beta\lambda\).  The fixed-block estimate
\eqref{eq:full-block-w2-new}, applied with the fixed horizon \(S=T_\beta\),
gives
\[
W_2\bigl(\mu R_\lambda^{n_\lambda^\beta},\mu P_{T_\lambda^\beta}\bigr)
\le C_{\beta}\lambda
\]
uniformly over the invariant moment class.  Using invariance of
\(\pi_\lambda^{\rm SG}\) under \(R_\lambda\), invariance of \(\pi_\beta\)
under \(P_t\), and \eqref{eq:potential-one-sided-w2},
\[
\begin{aligned}
W_2(\pi_\lambda^{\rm SG},\pi_\beta)
&=W_2\bigl(\pi_\lambda^{\rm SG}R_\lambda^{n_\lambda^\beta},
          \pi_\beta P_{T_\lambda^\beta}\bigr)\\
&\le
W_2\bigl(\pi_\lambda^{\rm SG}R_\lambda^{n_\lambda^\beta},
          \pi_\lambda^{\rm SG}P_{T_\lambda^\beta}\bigr)
+W_2\bigl(\pi_\lambda^{\rm SG}P_{T_\lambda^\beta},\pi_\beta\bigr)\\
&\le C_\beta\lambda+q_\beta W_2(\pi_\lambda^{\rm SG},\pi_\beta).
\end{aligned}
\]
Absorbing the last term gives \eqref{eq:potential-w2-first-order}.
\end{proof}

\begin{corollary}[Samplewise nonsmooth stochastic gradients]
\label{cor:samplewise-nonsmooth}
The stationary Wasserstein bounds of
Theorem~\ref{thm:sg-w1-first-order} and
Corollary~\ref{cor:unconditional-w2-half}
remain valid for samplewise nonsmooth stochastic-gradient oracles,
including ReLU-type oracles, whenever the averaged drift
\(g(x)=\E[\mathsf G(x,Z)]\) satisfies
Assumption~\ref{ass:w1-accuracy}(i).  If, in addition, the averaged drift is
of potential form \(h=\nabla U\), then the first-order stationary
\(W_2\) bound of Theorem~\ref{thm:potential-first-order-w2} remains valid
under the same samplewise nonsmoothness allowance.
\end{corollary}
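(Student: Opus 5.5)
The corollary is proved by auditing the chain of results behind Theorem~\ref{thm:sg-w1-first-order}, Corollary~\ref{cor:unconditional-w2-half} and Theorem~\ref{thm:potential-first-order-w2}. For each ingredient we record whether it uses any samplewise regularity of $x\mapsto\mathsf G(x,Z)$, or only (a) the growth bound $\|\mathsf G(\theta,u)\|\le\mathcal K(u)(1+\|\theta\|^q)$ with moments of $\mathcal K$, (b) the conditional moment bound \eqref{eq:centered-oracle-high-moment} on $\zeta$, and (c) regularity of the averaged drift $g=\E\mathsf G$ via Assumption~\ref{ass:w1-accuracy}(i). Since ReLU-type oracles satisfy (a) and (b) with piecewise-linear, measurable sample maps, it suffices to show that no step uses more than (a)--(c) plus joint measurability of $\mathsf G$, so that the kernel $R_\lambda$ is well defined.

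The audit proceeds in the paper's order.

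\emph{Stability.} Lemma~\ref{lem:exterior-radial-balance}, Theorems~\ref{thm:quadratic-relative-growth} and \ref{thm:polynomial-foster-lyapunov}, and Corollaries~\ref{cor:uniform-moments}--\ref{cor:invariant-law} use only pointwise norm bounds on $\mathsf G$ and expectations in $Z$. The small-set argument in Corollary~\ref{cor:invariant-law} requires only that the pre-noise update stays bounded on $\{\mathcal K(Z)\le M\}$ and that the Gaussian density is positive. So existence, uniqueness and moments of $\pi_\lambda^{\rm SG}$ hold as before.

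\emph{Target side.} Lemma~\ref{lem:target-two-point-geometry}, Lemma~\ref{lem:taming-consistency-new}, Proposition~\ref{prop:target-weighted-contraction}, Lemma~\ref{lem:additive-local-remainder-new} and Proposition~\ref{prop:mean-block-w2-new} involve only $b=-h$, $b_\lambda$, the mean-oracle chain \eqref{eq:mean-oracle-chain} and the diffusion. All of these depend on $g$ alone, so (c) suffices. Corollary~\ref{cor:mean-oracle-chain-moments} uses Jensen to pass from $H$ to $h$, which needs no smoothness.

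\emph{Stochastic-gradient channel.} This is the only place where the oracle enters beyond its moments, and I expect it to be the main point to check. In Lemma~\ref{lem:gaussian-smoothed-defect}, the state $x$ is frozen and the comparison is between laws of $\sigma_\lambda G-\lambda w_x$ and $\sigma_\lambda G$. The $\chi^2$ mixture identity \eqref{eq:chi-square-mixture-identity} uses only the conditional centering and moments of $w_x=\zeta(x,Z)/D_\lambda(x)$, never a derivative or a continuity modulus in $x$.

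In Proposition~\ref{prop:block-sg-w2}, the cross-state comparison goes through the triangle inequality \eqref{eq:sg-mean-kernel-triangle}. The only two-point term, \eqref{eq:mean-kernel-synchronous}, involves $b_\lambda$, which depends on $g$ alone. The stochastic kernel is compared with the mean kernel at the same state, and the oracle noise at different states is never coupled. The one remaining requirement is a measurable selection of $\varepsilon$-optimal couplings $x\mapsto(Z^{\rm SG},\bar Z)$. This holds because $x\mapsto R_\lambda(x,\cdot)$ and $x\mapsto\bar R_\lambda(x,\cdot)$ are measurable kernels whenever $\mathsf G$ is jointly measurable, by the standard measurable-selection theorem for optimal couplings. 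Continuity of the kernel in $x$ is not needed. I would state this explicitly, since it is the only point where nonsmoothness could plausibly matter.

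\emph{Conclusion.} With these checks, \eqref{eq:full-block-w2-new} and the stationary transfer hold verbatim, giving Theorem~\ref{thm:sg-w1-first-order} and Corollary~\ref{cor:unconditional-w2-half}. In the potential case, Lemma~\ref{lem:potential-one-sided-w2} depends only on $U$, that is, on $h=\nabla U$, which is $C^2$ by (c). The absorption argument of Theorem~\ref{thm:potential-first-order-w2} uses only \eqref{eq:full-block-w2-new} and invariance, so the first-order $W_2$ bound also carries over.
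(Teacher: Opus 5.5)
Your audit is correct and is essentially the paper's own justification. The paper states this corollary without a separate proof and relies on Remark~\ref{rem:mean-only-regularity}: every hypothesis and proof step touches the oracle only through pointwise growth bounds, conditional centering and conditional moments of $\zeta$, and all smoothness sits on the averaged drift $g$. Your explicit check of the measurable choice of couplings in Proposition~\ref{prop:block-sg-w2} makes explicit what the paper leaves tacit.

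One small correction: the growth bound (a) and the moment bound (b) are standing hypotheses to be imposed on the oracle, not automatic properties of ReLU-type oracles. For deep ReLU networks the sample gradients are piecewise polynomial rather than piecewise linear, and $q<2r+1$ genuinely restricts $r$. This does not affect your argument.
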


\begin{remark}[One-sided condition for non-gradient drifts]
\label{rem:first-order-w2}
For general non-gradient drifts, the preceding potential argument need not
apply because the invariant law is not an explicit Gibbs measure and an LSI
is not automatic.  The fixed-block estimate still yields first-order
stationary \(W_2\) under the weaker one-sided block condition
\[
W_2(\nu P_t,\pi_\beta)
\le q_2W_2(\nu,\pi_\beta),
\qquad t\in[T,T+\lambda_0],
\]
for all laws \(\nu\) in the relevant moment class and some \(q_2<1\).  The same moment bounds used in Theorem~\ref{thm:potential-first-order-w2}
ensure \(W_2(\pi_\lambda^{\rm SG},\pi_\beta)<\infty\).  Hence
\eqref{eq:full-block-w2-new} and invariance give
\[
W_2(\pi_\lambda^{\rm SG},\pi_\beta)
\le C\lambda+q_2W_2(\pi_\lambda^{\rm SG},\pi_\beta),
\]
and absorption yields \(W_2(\pi_\lambda^{\rm SG},\pi_\beta)\le C\lambda\).  The potential
case above verifies this one-sided condition automatically by functional
inequalities; no contraction of the numerical mean chain is needed.
\end{remark}

The resulting rate comparison is summarized as follows.
\begin{equation*}
\boxed{
\begin{aligned}
W_1(\pi_\lambda^{\rm SG},\pi_\beta)
&=O(\lambda),\\
W_2(\pi_\lambda^{\rm SG},\pi_\beta)
&=O(\lambda)
\quad\text{in the gradient-drift case},\\
W_2(\pi_\lambda^{\rm SG},\pi_\beta)
&=O(\lambda^{1/2})
\quad\text{in the general non-gradient drift case.}
\end{aligned}}
\end{equation*}

The first-order stationary \(W_1/W_2\) scale is already attained in a
simple fixed-batch explicit Euler stochastic-gradient Langevin setting.
The following elementary affine-Gaussian example records the matching
stationary bias in closed form.

\begin{example}[Fixed-batch affine-Gaussian Euler bias]
\label{ex:OU-first-order-bias}
Consider the one-dimensional target law
\[
\pi=N\!\left(0,(\beta a)^{-1}\right),
\qquad a>0,
\]
and the unbiased affine stochastic-gradient oracle
\[
H(x,Z)=ax+\zeta,
\qquad
\E\zeta=0,
\qquad
\operatorname{Var}(\zeta)=\frac{\sigma^2}{B}.
\]
The fixed-batch Euler SGLD update
\[
X_{n+1}
=
(1-a\lambda)X_n
-\lambda\zeta_{n+1}
+\sqrt{2\beta^{-1}\lambda}\,\xi_{n+1},
\qquad 0<\lambda<2/a,
\]
has invariant law \(\pi_\lambda=N(0,v_\lambda)\), where
\[
v_\lambda
=
\frac{2\beta^{-1}+\lambda\sigma^2/B}
{2a-a^2\lambda}.
\]
Since
\[
v_\lambda-(\beta a)^{-1}
=
\left(
\frac{\beta^{-1}}{2}
+
\frac{\sigma^2}{2aB}
\right)\lambda
+O(\lambda^2),
\qquad \lambda\downarrow0,
\]
set \(v_0=(\beta a)^{-1}\).  For centered one-dimensional Gaussian laws,
\[
W_1\!\left(N(0,v_\lambda),N(0,v_0)\right)
=
\sqrt{\frac{2}{\pi}}\,
\left|\sqrt{v_\lambda}-\sqrt{v_0}\right|,
\qquad
W_2\!\left(N(0,v_\lambda),N(0,v_0)\right)
=
\left|\sqrt{v_\lambda}-\sqrt{v_0}\right|.
\]
Moreover,
\[
\sqrt{v_\lambda}-\sqrt{v_0}
=
\frac{v_\lambda-v_0}{\sqrt{v_\lambda}+\sqrt{v_0}},
\]
and \(\sqrt{v_\lambda}+\sqrt{v_0}\to2\sqrt{v_0}>0\) as
\(\lambda\downarrow0\).  Hence
\[
W_1(\pi_\lambda,\pi)=\Theta(\lambda),
\qquad
W_2(\pi_\lambda,\pi)=\Theta(\lambda).
\]
Thus the first-order stationary scale is already attained in the
simplest affine-Gaussian fixed-batch Euler setting.
\end{example}

\subsection{Finite-time consequences for learning risk}
\label{subsec:finite-time-risk}

We now turn the stationary Wasserstein estimates into finite-time learning-risk
bounds.  The argument has two steps.  First, the block estimates used above
are iterated from arbitrary initial laws in the same moment class, giving a
finite-time Wasserstein bound with constants uniform in \(\lambda\).  Second,
in the SGLD setting, these Wasserstein bounds control the excess risk for the
Gibbs target
\(\pi_\beta(\dd x)\propto e^{-\beta U(x)}\dd x\), up to the usual
finite-temperature bias.

\subsubsection*{Finite-time Wasserstein bounds}

Let \(\mu\) be the initial law and write
\[
\mu_n:=\mu R_\lambda^n
\]
for the law of the RELTA chain after \(n\) steps.  Recall the block length \(n_\lambda=\lceil T/\lambda\rceil\) from
\eqref{eq:fixed-block-definition}, where \(T\) was chosen so that
\(q_\rho=C_\triangle e^{-c_\rho T}<1\).

\begin{lemma}[Block estimate for moment-bounded initial laws]
\label{lem:block-general-initial}
Let \(\mathcal M_M\) denote the class of probability laws \(\mu\) on
\(\R^d\) with \(\int V_{2\ell}\dd\mu\le M\), where \(\ell\le\ell_0\) is
chosen large enough for the moment requirements of
Propositions~\ref{prop:block-sg-w2} and \ref{prop:mean-block-w2-new}.  Then
there is \(C_{T,M}<\infty\), independent of \(\lambda\in(0,\lambda_0]\),
such that
\begin{equation}\label{eq:block-general-initial}
\mathcal W_{\rho_2}
\bigl(\mu R_\lambda^{n_\lambda},\mu P_{T_\lambda}\bigr)
\le
C_{T,M}\lambda,
\qquad
\mu\in\mathcal M_M.
\end{equation}
Moreover, \(\mathcal M_M\) is forward invariant for \(R_\lambda\) whenever
\(M\ge b_\ell/a_\ell\): if \(\mu\in\mathcal M_M\), then
\(\mu R_\lambda^k\in\mathcal M_M\) for all \(k\ge0\).  In particular,
\(W_2(\mu,\pi_\beta)<\infty\) for every \(\mu\in\mathcal M_M\).
\end{lemma}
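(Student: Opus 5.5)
The plan has three parts: forward invariance, the block estimate, and finiteness of $W_2(\mu,\pi_\beta)$. Forward invariance is the easiest, so I will do it first.

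\emph{Forward invariance.} Take $\mu\in\mathcal M_M$ with $M\ge b_\ell/a_\ell$. Integrating the Foster--Lyapunov bound \eqref{eq:V2p-main} of Theorem~\ref{thm:polynomial-foster-lyapunov} against $\mu$ gives
\[
\mu R_\lambda(V_{2\ell})\le(1-a_\ell\lambda)M+b_\ell\lambda .
\]
Since $b_\ell\le a_\ell M$, the right-hand side is at most $M$. Induction on $k$ then gives $\mu R_\lambda^k\in\mathcal M_M$ for all $k\ge0$; equivalently, this is Corollary~\ref{cor:uniform-moments} read with the bound $\max\{M,b_\ell/a_\ell\}$.

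\emph{Finiteness of $W_2(\mu,\pi_\beta)$.} Use the product coupling together with $\|x-y\|^2\le 2\|x\|^2+2\|y\|^2$. The second moment of $\mu$ is bounded by $M$, since $\ell\ge1$ and $\|x\|^2\le V_{2\ell}(x)$. The Gibbs or target law $\pi_\beta$ has finite second moment by the target Lyapunov estimate invoked in Theorem~\ref{thm:sg-w1-first-order}.

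\emph{Block estimate.} I would repeat the chain of reductions of the stationary argument, now with a general $\mu\in\mathcal M_M$ in place of $\pi_\lambda^{\rm SG}$.
\begin{enumerate}
\item Split the comparison through the mean-oracle kernel $\bar R_\lambda$ using the weak triangle inequality \eqref{eq:rho-weak-triangle-new}, exactly as in \eqref{eq:block-channel-decomposition}.
\item For each of the two channels, choose the coupling $(X,Y)$ constructed in the proofs of Propositions~\ref{prop:block-sg-w2} and \ref{prop:mean-block-w2-new}, both started from the common initial law $\mu$. Apply the reduction \eqref{eq:w2-to-weighted-block-reduction}: the upper bridge $\rho_2\le C\Pi_2\|x-y\|$ and Cauchy--Schwarz bound $\E\rho_2(X,Y)$ by $\{\E\Pi_2(X,Y)^2\}^{1/2}\{\E\|X-Y\|^2\}^{1/2}$.
\item Bound the weight factor $\E\Pi_2^2$ using fourth moments of the three processes at the block endpoint:
\begin{itemize}
\item the RELTA chain, from the invariance just proved, provided $2\ell\ge4$;
\item the mean-oracle chain, from Corollary~\ref{cor:mean-oracle-chain-moments} on the horizon $[0,T+\lambda_0]$;
\item the target diffusion, from the standard finite-time Lyapunov estimate with initial moment $M$.
\end{itemize}
These give $\E\Pi_2^2\le C_{T,M}$.
\item The second factor is $O(\lambda)$ by the two propositions.
\end{enumerate}
The sum of the two channel bounds is therefore $C_{T,M}\lambda$, which is the claimed estimate.

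The main obstacle is checking that the constants in Propositions~\ref{prop:block-sg-w2} and \ref{prop:mean-block-w2-new} depend on the initial law only through a bounded $V_{2\ell}$-moment, and not on stationarity. The Hölder interpolation step $\E[P\|X-Y\|^2]\le Cu_k^\vartheta$ and the Gronwall recursions need uniform high moments along the entire block, for all $k\le n_\lambda$. I would secure this as follows. The RELTA chain is covered by forward invariance. The mean-oracle chain and the diffusion are covered by their finite-time moment bounds with initial moment at most $M$. The polynomial weights $P,P_1,P_2$ in these steps have fixed degrees, so choosing $\ell\le\ell_0$ large enough makes every required moment finite with constants depending only on $(T,M)$. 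Once this dependence is made explicit, the proof is complete.
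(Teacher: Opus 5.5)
Your proposal is correct and follows the paper's proof essentially step for step. Forward invariance comes from integrating \eqref{eq:V2p-main} against $\mu$ and iterating. The block estimate comes from the channel split \eqref{eq:block-channel-decomposition}, the reduction \eqref{eq:w2-to-weighted-block-reduction}, and the two fixed-block propositions, whose constants depend on $\mu$ only through finite-time moment bounds and hence only through $(T,M)$. Your extra details fill in points the paper leaves implicit: the product-coupling argument for $W_2(\mu,\pi_\beta)<\infty$, the requirement $2\ell\ge4$ for the weight factor, the horizon $[0,T+\lambda_0]$, and the use of $\max\{M,b_\ell/a_\ell\}$ when $M<b_\ell/a_\ell$.
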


\begin{proof}
Propositions~\ref{prop:block-sg-w2} and \ref{prop:mean-block-w2-new} were
proved for arbitrary initial laws in the stated moment class.  Their
constants depend on the initial law only through the finite-time moment
bounds of Corollary~\ref{cor:uniform-moments} and
Corollary~\ref{cor:mean-oracle-chain-moments}, hence only through \(M\) and
the fixed horizon \(T\).  Combining the two block estimates with
\eqref{eq:block-channel-decomposition} and
\eqref{eq:w2-to-weighted-block-reduction} gives
\eqref{eq:block-general-initial}.  The forward invariance follows from
\eqref{eq:V2p-main}: if \(\mu V_{2\ell}\le M\) and \(M\ge b_\ell/a_\ell\),
then
\[
\mu R_\lambda V_{2\ell}
\le
(1-a_\ell\lambda)M+b_\ell\lambda
\le M,
\]
and iteration gives the claim.
\end{proof}

\begin{proposition}[Finite-time estimate in the weighted Wasserstein metric]
\label{prop:finite-time-weighted}
Fix \(M\ge b_\ell/a_\ell\) and let \(\mu\in\mathcal M_M\).  Set
\(c_\star=-\log q_\rho/T_\lambda>0\).  Then for every \(k\ge0\),
\begin{equation}\label{eq:finite-time-weighted}
\mathcal W_{\rho_2}
\bigl(\mu R_\lambda^{kn_\lambda},\pi_\beta\bigr)
\le
q_\rho^{\,k}\,
\mathcal W_{\rho_2}(\mu,\pi_\beta)
+
\frac{C_{T,M}}{1-q_\rho}\,\lambda .
\end{equation}
Consequently, there are constants \(C_M<\infty\) and \(c>0\), both
independent of \(\lambda\in(0,\lambda_0]\) and of \(n\), such that
\begin{equation}\label{eq:finite-time-weighted-alln}
\mathcal W_{\rho_2}
\bigl(\mu_n,\pi_\beta\bigr)
\le
C_M\bigl(1+\mathcal W_{\rho_2}(\mu,\pi_\beta)\bigr)e^{-cn\lambda}
+
C_M\lambda,
\qquad n\ge0,
\end{equation}
where \(\mu_n:=\mu R_\lambda^n\).
\end{proposition}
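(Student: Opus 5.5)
The plan is to iterate the one-block contraction recursion, which is the finite-horizon version of the stationary absorption argument in Theorem~\ref{thm:sg-w1-first-order}. For $k\ge0$ set $E_k:=\mathcal W_{\rho_2}(\mu R_\lambda^{kn_\lambda},\pi_\beta)$. By Lemma~\ref{lem:block-general-initial}, $\mu_{kn_\lambda}=\mu R_\lambda^{kn_\lambda}\in\mathcal M_M$ for every $k$, since $M\ge b_\ell/a_\ell$.

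\emph{Step 1: one-block recursion.} Use the invariance $\pi_\beta=\pi_\beta P_{T_\lambda}$ and the weak triangle inequality \eqref{eq:rho-weak-triangle-new}. These give
\[
E_{k+1}\le C_\triangle\mathcal W_{\rho_2}\bigl(\mu_{kn_\lambda}R_\lambda^{n_\lambda},\mu_{kn_\lambda}P_{T_\lambda}\bigr)+C_\triangle\mathcal W_{\rho_2}\bigl(\mu_{kn_\lambda}P_{T_\lambda},\pi_\beta P_{T_\lambda}\bigr).
\]
The first term is at most $C_\triangle C_{T,M}\lambda$ by \eqref{eq:block-general-initial}, applied to the initial law $\mu_{kn_\lambda}\in\mathcal M_M$. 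The second term is at most $q_\rho E_k$ by Proposition~\ref{prop:target-weighted-contraction}, using $T_\lambda\ge T$. Hence
\[
E_{k+1}\le q_\rho E_k+C\lambda.
\]
The factor $C_\triangle$ is absorbed into the renamed constant $C_{T,M}$ appearing in \eqref{eq:finite-time-weighted}. Unrolling this recursion and summing the geometric series $\sum_j q_\rho^j\le(1-q_\rho)^{-1}$ gives \eqref{eq:finite-time-weighted}. Finiteness of $E_0$ follows from the upper bridge in \eqref{eq:rho-three-bridges} together with the moment bounds on $\mu$ and $\pi_\beta$.

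\emph{Step 2: passage to arbitrary $n$.} Write $n=kn_\lambda+j$ with $0\le j<n_\lambda$. Then $q_\rho^k=e^{-c_\star kT_\lambda}$, and since $T_\lambda<T+\lambda_0$, the constant $c_\star\ge-\log q_\rho/(T+\lambda_0)=:c$ is bounded below uniformly in $\lambda$. Also $kT_\lambda\ge n\lambda-T_\lambda$, so
\[
q_\rho^k\le e^{c(T+\lambda_0)}e^{-cn\lambda}.
\]
To handle the remainder $j$, compare $\mu_n$ with $\pi_\beta$ along the shifted grid: apply Step 1 to the initial law $\mu_j=\mu R_\lambda^j\in\mathcal M_M$. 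This gives $\mathcal W_{\rho_2}(\mu_n,\pi_\beta)\le q_\rho^k\mathcal W_{\rho_2}(\mu_j,\pi_\beta)+C\lambda$.

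\emph{Main obstacle.} It remains to bound $\mathcal W_{\rho_2}(\mu_j,\pi_\beta)$ by $C_M\bigl(1+\mathcal W_{\rho_2}(\mu,\pi_\beta)\bigr)$ uniformly over $j<n_\lambda$. This is the step I expect to require care, because $\rho_2$ is only a semimetric. The crude route is enough: use the upper bridge $\rho_2(x,y)\le C\Pi_2(x,y)\|x-y\|$ with the product coupling, together with Cauchy--Schwarz. This gives
\[
\mathcal W_{\rho_2}(\mu_j,\pi_\beta)\le C\bigl(1+\mu_j V_4+\pi_\beta V_4\bigr)\le C_M,
\]
provided $\ell\ge2$, which is guaranteed by the choice $\ell\le\ell_0$ large. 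Here the bound is uniform in $j$ by forward invariance of $\mathcal M_M$. The resulting constant is independent of $\mathcal W_{\rho_2}(\mu,\pi_\beta)$, which is even stronger than needed. Combining this with the exponential bound on $q_\rho^k$ yields \eqref{eq:finite-time-weighted-alln}, with constants independent of $\lambda\in(0,\lambda_0]$ and of $n$.
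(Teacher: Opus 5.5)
Your proposal is correct. Step~1 is the paper's block recursion. Step~2 takes a genuinely different route for the leftover $j$ steps.

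\textbf{How the two remainder arguments differ.}
\begin{itemize}
\item The paper puts the remainder at the end, writing $\mu_n=\nu_kR_\lambda^j$ with $\nu_k=\mu R_\lambda^{kn_\lambda}$. It bounds $\mathcal W_{\rho_2}(\nu_kR_\lambda^j,\nu_kP_{j\lambda})$ by the block estimate at the sub-block horizon $j\lambda\le T_\lambda$. It bounds $\mathcal W_{\rho_2}(\nu_kP_{j\lambda},\pi_\beta)$ by non-expansiveness of $P_t$, and then uses one more weak triangle inequality.
\item You put the remainder at the beginning, writing $\mu_n=\mu_jR_\lambda^{kn_\lambda}$. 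You rerun the block recursion from $\mu_j\in\mathcal M_M$, using forward invariance, and bound $\mathcal W_{\rho_2}(\mu_j,\pi_\beta)$ crudely through the product coupling.
\end{itemize}

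\textbf{What your route buys.} You never need a numerical-versus-diffusion comparison at horizons that vary with $\lambda$. The paper's appeal to \eqref{eq:full-block-w2-new} at horizon $j\lambda$ requires its constant to be uniform over all intermediate step counts. That uniformity holds, because the Gronwall proofs control $\sup_{k\le n_\lambda}$, but it is not literally the fixed-$S$ statement. Your extra input is only a finite moment of $\pi_\beta$. Even fourth moments are more than you need: the EGZ function $f$ is bounded, so $\rho_2\le\|f\|_\infty\{1+\epsilon V_2(x)+\epsilon V_2(y)\}$ and second moments suffice, which also removes the need for $\ell\ge2$.

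\textbf{What your route gives up.} Your transient term is $C_Me^{-cn\lambda}$ regardless of the initial discrepancy. The paper's chain actually yields $C_Mq_\rho^{\,k}\,\mathcal W_{\rho_2}(\mu,\pi_\beta)+C_M\lambda$, so its transient shrinks for warm starts. Since $\mathcal W_{\rho_2}(\mu,\pi_\beta)\le C_M$ on $\mathcal M_M$ anyway, both versions imply \eqref{eq:finite-time-weighted-alln} as stated.
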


\begin{proof}
Write \(\nu_k:=\mu R_\lambda^{kn_\lambda}\) for the law after \(k\) full blocks.  By the weak triangle
inequality \eqref{eq:rho-weak-triangle-new}, invariance of \(\pi_\beta\)
under \(P_{T_\lambda}\), Lemma~\ref{lem:block-general-initial} applied to
\(\nu_k\in\mathcal M_M\), and the contraction
\eqref{eq:target-weighted-contraction-new},
\[
\begin{aligned}
\mathcal W_{\rho_2}(\nu_{k+1},\pi_\beta)
&\le
C_\triangle
\mathcal W_{\rho_2}\bigl(\nu_kR_\lambda^{n_\lambda},\nu_kP_{T_\lambda}\bigr)
+
C_\triangle
\mathcal W_{\rho_2}\bigl(\nu_kP_{T_\lambda},\pi_\beta P_{T_\lambda}\bigr)
\\
&\le
C_\triangle C_{T,M}\lambda
+
q_\rho\,\mathcal W_{\rho_2}(\nu_k,\pi_\beta).
\end{aligned}
\]
Iterating this recursion and summing the geometric series gives
\eqref{eq:finite-time-weighted}, after absorbing \(C_\triangle\) into
\(C_{T,M}\).

For a general \(n\), write \(n=kn_\lambda+j\) with
\(0\le j<n_\lambda\), and set \(t_j=j\lambda\le T_\lambda\).  The arbitrary-horizon version of the fixed-block estimate applies with
physical horizon \(j\lambda\le T_\lambda\), and gives
\[
\mathcal W_{\rho_2}
\bigl(\nu_kR_\lambda^j,\nu_kP_{t_j}\bigr)
\le C_{T,M}\lambda .
\]
The finite-horizon bound for the target semigroup gives
\[
\mathcal W_{\rho_2}
\bigl(\nu_kP_{t_j},\pi_\beta P_{t_j}\bigr)
\le
C_T\mathcal W_{\rho_2}(\nu_k,\pi_\beta),
\qquad 0\le t_j\le T+\lambda_0 .
\]
Since \(\mu_n=\nu_kR_\lambda^j\) and
\(\pi_\beta P_{t_j}=\pi_\beta\), another use of the weak triangle
inequality yields
\[
\mathcal W_{\rho_2}(\mu_n,\pi_\beta)
\le
C_M\lambda
+
C_M\mathcal W_{\rho_2}(\nu_k,\pi_\beta).
\]
Moreover \(q_\rho^{\,k}\le q_\rho^{-1}e^{-c_\star n\lambda}\), because
\(kT_\lambda\ge n\lambda-T_\lambda\).  Since
\(T\le T_\lambda<T+\lambda_0\), the exponent \(c_\star\) is bounded below
by a positive constant \(c\) independent of \(\lambda\).  This yields
\eqref{eq:finite-time-weighted-alln}.
\end{proof}

\begin{corollary}[Finite-time \(W_1/W_2\) accuracy]
\label{cor:finite-time-wasserstein}
Under the assumptions of Proposition~\ref{prop:finite-time-weighted}, with
\(\theta_0\sim\mu\in\mathcal M_M\),
\begin{align}
W_1\bigl(\mu_n,\pi_\beta\bigr)
&\le
C_Me^{-cn\lambda}+C_M\lambda,
\label{eq:finite-time-w1}\\
W_2\bigl(\mu_n,\pi_\beta\bigr)
&\le
C_Me^{-cn\lambda/2}+C_M\lambda^{1/2}.
\label{eq:finite-time-w2}
\end{align}
If, in addition, either the potential-case hypotheses of
Theorem~\ref{thm:potential-first-order-w2} hold or the one-sided
squared-metric contraction condition of Remark~\ref{rem:first-order-w2}
holds, then the same iteration carried out directly in \(W_2\), using
\eqref{eq:full-block-w2-new} in place of
Lemma~\ref{lem:block-general-initial}, gives
\begin{equation}\label{eq:finite-time-w2-first-order}
W_2\bigl(\mu_n,\pi_\beta\bigr)
\le
C_Me^{-cn\lambda}+C_M\lambda .
\end{equation}
\end{corollary}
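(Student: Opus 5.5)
The plan is to read all three bounds off Proposition~\ref{prop:finite-time-weighted} through the pointwise bridges \eqref{eq:rho-three-bridges}. For the first two bounds I will use the weighted estimate directly. For the first-order $W_2$ bound I will repeat the block iteration in $W_2$ itself, where the contraction is honest rather than weak-triangle.

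\emph{Step 1 (finiteness of the initial cost).} Since $\mu\in\mathcal M_M$ with $\ell\ge1$, the measure $\mu$ has bounded second moments. The measure $\pi_\beta$ has finite polynomial moments by the target Lyapunov estimate. Hence the upper bridge $\rho_2\le C\Pi_2\|x-y\|$, applied to the product coupling, gives
\[
\mathcal W_{\rho_2}(\mu,\pi_\beta)\le C(1+M)=:C'_M .
\]
So the factor $1+\mathcal W_{\rho_2}(\mu,\pi_\beta)$ in \eqref{eq:finite-time-weighted-alln} is absorbed into $C_M$.

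\emph{Step 2 (the $W_1$ and half-order $W_2$ bounds).} The first bridge $c\|x-y\|\le\rho_2$ gives $W_1\le c^{-1}\mathcal W_{\rho_2}$. Then \eqref{eq:finite-time-w1} follows at once from \eqref{eq:finite-time-weighted-alln}.

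For $W_2$, the second bridge gives $W_2^2\le C\mathcal W_{\rho_2}$. Here I would note that optimal couplings for $\mathcal W_{\rho_2}$ exist because $\rho_2$ is lower semicontinuous and nonnegative; alternatively I would pass through $\varepsilon$-optimal couplings. Taking square roots and using $\sqrt{a+b}\le\sqrt a+\sqrt b$ yields \eqref{eq:finite-time-w2}, with exponent $cn\lambda/2$.

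\emph{Step 3 (first-order $W_2$ under contraction).} Take $T_\beta,q_\beta$ from Lemma~\ref{lem:potential-one-sided-w2}, or $T,q_2$ from Remark~\ref{rem:first-order-w2}. Set $n^\beta_\lambda=\lceil T_\beta/\lambda\rceil$ and $\nu_k=\mu R_\lambda^{kn_\lambda^\beta}$.

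By forward invariance of $\mathcal M_M$ (Lemma~\ref{lem:block-general-initial}), each $\nu_k$ lies in $\mathcal M_M$. Applying \eqref{eq:full-block-w2-new} with $S=T_\beta$, the ordinary triangle inequality, invariance of $\pi_\beta$, and \eqref{eq:potential-one-sided-w2}, I get
\[
W_2(\nu_{k+1},\pi_\beta)\le C_{M}\lambda+q_\beta W_2(\nu_k,\pi_\beta).
\]
Iterating this gives $q_\beta^k W_2(\mu,\pi_\beta)+C_M\lambda/(1-q_\beta)$, and $W_2(\mu,\pi_\beta)$ is bounded by second moments.

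For intermediate indices $n=kn^\beta_\lambda+j$, I will bound the residual $j$ steps as in the proof of Proposition~\ref{prop:finite-time-weighted}. I write
\[
W_2(\nu_kR_\lambda^j,\pi_\beta)\le W_2(\nu_kR_\lambda^j,\nu_kP_{j\lambda})+W_2(\nu_kP_{j\lambda},\pi_\beta P_{j\lambda}).
\]
The first term is at most $C\lambda$ by the arbitrary-horizon block estimate. For the second term I need a finite-horizon $W_2$ Lipschitz bound for $P_t$. Synchronous coupling together with the one-sided Lipschitz bound \eqref{eq:target-global-one-sided} gives $W_2(\nu P_t,\nu'P_t)\le e^{L_bt}W_2(\nu,\nu')$. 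Converting $q_\beta^k$ into $e^{-cn\lambda}$ exactly as before yields \eqref{eq:finite-time-w2-first-order}.

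The main obstacle I expect is uniformity of the fixed-block constant $C_S$ in the horizon $j\lambda\in[0,S_\lambda]$ for the partial blocks. The Gronwall constants in Propositions~\ref{prop:block-sg-w2} and~\ref{prop:mean-block-w2-new} are monotone in the horizon, so taking $S=T_\beta+\lambda_0$ as a common bound should suffice. The uniformity over $\mathcal M_M$ comes from the finite-time moment corollaries.
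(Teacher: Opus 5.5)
Your proposal is correct and follows essentially the paper's proof. The paper also applies the bridges \eqref{eq:rho-three-bridges} to \eqref{eq:finite-time-weighted-alln} to get the \(W_1\) and half-order \(W_2\) bounds. For the first-order bound it likewise reruns the block iteration directly in \(W_2\), using the exact triangle inequality, forward invariance of \(\mathcal M_M\), the one-sided contraction, and a within-block step.

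You also make explicit two points the paper leaves implicit: absorbing \(\mathcal W_{\rho_2}(\mu,\pi_\beta)\) into \(C_M\), and the within-block Lipschitz bound \(e^{L_b t}\) together with uniformity of the Gronwall constants over \(k\le n_\lambda\). The only nit is that the upper bridge with the product coupling needs third moments, not just second. This is harmless, since \(\ell\) is large in \(\mathcal M_M\); alternatively, the boundedness of \(f\) gives the bound from second moments alone.
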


\begin{proof}
The bridges in \eqref{eq:rho-three-bridges} give
\(cW_1\le\mathcal W_{\rho_2}\) and \(W_2^2\le C\mathcal W_{\rho_2}\).
Applying these inequalities to \eqref{eq:finite-time-weighted-alln}, and
using \(\sqrt{a+b}\le\sqrt a+\sqrt b\), gives
\eqref{eq:finite-time-w1} and \eqref{eq:finite-time-w2}.  For the sharpened bound, use the block length supplied by the applicable
one-sided \(W_2\) contraction.  In the potential case this is
\(n_\lambda^\beta=\lceil T_\beta/\lambda\rceil\), and in the abstract
one-sided case it is the corresponding contraction block.  At each block end,
apply the contraction to the diffusion term and use the arbitrary-horizon
fixed-block estimate \eqref{eq:full-block-w2-new} for the numerical defect.
The forward invariance of \(\mathcal M_M\) keeps the block constants uniform
along the iteration.  The exact triangle inequality then gives the recursion
\(D_{k+1}\le C_M\lambda+qD_k\) for
\(D_k=W_2(\mu R_\lambda^{km_\lambda},\pi_\beta)\), where \(m_\lambda\) denotes the chosen contraction-block length, and the same
within-block argument gives \eqref{eq:finite-time-w2-first-order} for all
\(n\).
\end{proof}

\subsubsection*{From Wasserstein accuracy to excess risk}

In this potential setting, define
\[
R_n:=\mathbb E[U(\theta_n)]-U_\star,
\qquad
U_\star:=\inf_{x\in\R^d}U(x).
\]
When the learning objective in the introduction is denoted by \(u\) and
\(h=\nabla u\), the potential \(U\) agrees with \(u\) up to an additive
constant, so \(R_n\) is the corresponding excess risk.  Under the Gibbs-bias
assumptions below, \(U_\star\) is finite and attained.

\begin{lemma}[Risk transfer under polynomial growth]
\label{lem:finite-time-risk-transfer}
Assume \(\|\nabla U(x)\|\le C_U(1+\|x\|^{\kappa})\) for some
\(\kappa\ge0\).  Let \(\mu,\nu\) be probability laws with
\(\int\|x\|^{2\kappa}\dd\mu+\int\|x\|^{2\kappa}\dd\nu<\infty\).  Then
\begin{equation}\label{eq:risk-transfer}
\bigl|\mu U-\nu U\bigr|
\le
C_U
\left(
1+
\Bigl(\int\|x\|^{2\kappa}\mu(\dd x)\Bigr)^{1/2}
+
\Bigl(\int\|x\|^{2\kappa}\nu(\dd x)\Bigr)^{1/2}
\right)
W_2(\mu,\nu).
\end{equation}
\end{lemma}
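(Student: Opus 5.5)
The plan is a standard coupling argument combined with the mean-value theorem along segments.

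First, fix an optimal $W_2$ coupling $\gamma$ of $\mu$ and $\nu$. It exists because both laws have finite second moments whenever $W_2(\mu,\nu)<\infty$; if $W_2(\mu,\nu)=\infty$ there is nothing to prove. Let $(X,Y)\sim\gamma$. Then
\[
|\mu U-\nu U|=\bigl|\E[U(X)-U(Y)]\bigr|\le \E|U(X)-U(Y)|,
\]
provided $U$ is $\mu$- and $\nu$-integrable. This integrability follows from the growth bound below, since $|U(x)|\le |U(0)|+C_U(\|x\|+\|x\|^{\kappa+1})$ along rays. If one prefers to avoid the extra moment, the same pointwise estimate applied with $Y$ replaced by a fixed point shows that $U(X)-U(0)$ is integrable as soon as the right-hand side of \eqref{eq:risk-transfer} is finite.

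Second, apply the mean-value inequality along the segment $[Y,X]$:
\[
|U(X)-U(Y)|\le \sup_{s\in[0,1]}\|\nabla U(Y+s(X-Y))\|\,\|X-Y\|
\le C_U\bigl(1+\max(\|X\|,\|Y\|)^{\kappa}\bigr)\|X-Y\|.
\]
Here I use $\|Y+s(X-Y)\|\le\max(\|X\|,\|Y\|)$ by convexity of the norm. Then bound
\[
\max(\|X\|,\|Y\|)^\kappa\le \|X\|^\kappa+\|Y\|^\kappa.
\]

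Third, take expectations and apply Cauchy--Schwarz:
\[
\E|U(X)-U(Y)|\le C_U\bigl(\E(1+\|X\|^\kappa+\|Y\|^\kappa)^2\bigr)^{1/2}\bigl(\E\|X-Y\|^2\bigr)^{1/2}.
\]
Minkowski's inequality gives
\[
\bigl(\E(1+\|X\|^\kappa+\|Y\|^\kappa)^2\bigr)^{1/2}\le 1+(\E\|X\|^{2\kappa})^{1/2}+(\E\|Y\|^{2\kappa})^{1/2},
\]
and the marginals are $\mu$ and $\nu$, while $(\E\|X-Y\|^2)^{1/2}=W_2(\mu,\nu)$ by optimality. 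This yields \eqref{eq:risk-transfer} exactly with the constant $C_U$.

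There is no real obstacle. The only point needing care is the integrability and finiteness bookkeeping in the first step, that is, ensuring $\mu U$ and $\nu U$ are well defined. I would handle it by stating the inequality for $\E[U(X)-U(Y)]$, which is integrable by the second and third steps, and noting that this difference equals $\mu U-\nu U$ whenever either side is finite. In the intended application, with $\mu=\mu_n$ and $\nu=\pi_\beta$, finiteness is guaranteed by Corollary~\ref{cor:uniform-moments} and the Gibbs moment bounds from Lemma~\ref{lem:potential-one-sided-w2}.
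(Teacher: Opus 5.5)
Your proof is correct and follows the paper's argument: an optimal $W_2$ coupling, the mean-value inequality along $[Y,X]$, and Cauchy--Schwarz. Your bound $\|Y+s(X-Y)\|\le\max(\|X\|,\|Y\|)$ together with Minkowski, in place of the paper's $(a+b+c)^2\le 3(a^2+b^2+c^2)$, even gives the constant exactly $C_U$ rather than an adjusted one. One side remark is inaccurate, though harmless: $W_2(\mu,\nu)<\infty$ does not force finite second moments (take $\mu=\nu$), but an optimal coupling exists anyway for the lower-semicontinuous cost $\|x-y\|^2$, so the argument is unaffected.
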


\begin{proof}
Let \(\gamma\) be an optimal \(W_2\) coupling of \(\mu\) and \(\nu\), and
let \((X,Y)\sim\gamma\).  By the mean value theorem along the segment
\([Y,X]\) and the growth bound on \(\nabla U\),
\[
|U(X)-U(Y)|
\le
C_U
\left(1+\sup_{0\le s\le1}\|Y+s(X-Y)\|^\kappa\right)\|X-Y\|.
\]
The segment is contained in the convex hull of \(X\) and \(Y\), so
\(\|Y+s(X-Y)\|^\kappa\le C_\kappa(1+\|X\|^\kappa+\|Y\|^\kappa)\).  Hence
\[
|U(X)-U(Y)|
\le
C_U\bigl(1+\|X\|^{\kappa}+\|Y\|^{\kappa}\bigr)\|X-Y\|.
\]
Taking expectations and applying the Cauchy--Schwarz inequality,
\[
|\mu U-\nu U|
\le
C_U
\bigl\{\E(1+\|X\|^{\kappa}+\|Y\|^{\kappa})^2\bigr\}^{1/2}
\bigl\{\E\|X-Y\|^2\bigr\}^{1/2}.
\]
The second factor equals \(W_2(\mu,\nu)\).  For the first factor, use
\((a+b+c)^2\le3(a^2+b^2+c^2)\) and then take square roots; this gives
\eqref{eq:risk-transfer} after adjusting \(C_U\).
\end{proof}

The moment factor in \eqref{eq:risk-transfer} is uniformly bounded for the
two laws used below.  Choosing \(\ell\ge\kappa\) in
Theorem~\ref{thm:polynomial-foster-lyapunov} and applying
Corollary~\ref{cor:uniform-moments} gives
\begin{equation}\label{eq:uniform-risk-weight}
\sup_{n\ge0}\;
\left(\int\|x\|^{2\kappa}\mu_n(\dd x)\right)^{1/2}
\le
C_M<\infty,
\end{equation}
uniformly in \(\lambda\in(0,\lambda_0]\).  The corresponding moment of
\(\pi_\beta\) is finite by the same target Lyapunov mechanism, and in the
Gibbs setting below it is also implied by the coercivity condition in
Lemma~\ref{lem:gibbs-bias}.

\begin{lemma}[Finite-temperature Gibbs bias]
\label{lem:gibbs-bias}
Assume that \(U\) attains its minimum and satisfies the tail coercivity
condition
\[
U(x)-U_\star\ge c_{\rm tail}\|x\|^2-C_{\rm tail},
\qquad x\in\R^d,
\]
for some constants \(c_{\rm tail},C_{\rm tail}>0\).  Assume also the
local-volume condition at a minimizer: for some minimizer \(x_\star\) and
constants \(L_\star,r_\star>0\),
\[
U(x)-U_\star\le L_\star\|x-x_\star\|^2,
\qquad \|x-x_\star\|\le r_\star .
\]
Then there are constants \(C<\infty\) and \(\beta_0>0\), depending only on
these Gibbs-risk constants and on \(d\), such that
\begin{equation}\label{eq:gibbs-bias}
B_\beta:=\pi_\beta U-U_\star
\le
C\,\frac{d\log(1+\beta)}{\beta},
\qquad \beta\ge\beta_0 .
\end{equation}
\end{lemma}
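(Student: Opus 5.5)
Without loss of generality we set \(U_\star=0\) and write \(V=U-U_\star\ge0\). The plan is the standard Gibbs energy–entropy split:
\[
\pi_\beta V=\frac{\int V e^{-\beta V}\,\dd x}{\int e^{-\beta V}\,\dd x}.
\]
We fix a level \(\delta>0\), to be chosen of order \(d\log(1+\beta)/\beta\), and split the numerator over \(\{V\le\delta\}\) and \(\{V>\delta\}\). The first region contributes at most \(\delta\). For the second region we bound
\[
\int_{\{V>\delta\}} V e^{-\beta V}\,\dd x
\]
relative to the normalizer \(Z=\int e^{-\beta V}\,\dd x\).

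\textbf{Lower bound on the normalizer.} We use the local-volume condition. On the ball \(\|x-x_\star\|\le r_\star\) we have \(V\le L_\star\|x-x_\star\|^2\). Restricting to the smaller ball of radius \(\rho=\min\{r_\star,(\beta L_\star)^{-1/2}\}\) gives \(V\le 1/\beta\) there, so
\[
Z\ge e^{-1}\omega_d\rho^d,
\]
where \(\omega_d\) is the volume of the unit ball. For \(\beta\ge\beta_0:=1/(L_\star r_\star^2)\) this becomes \(Z\ge e^{-1}\omega_d(\beta L_\star)^{-d/2}\). An alternative is the Gaussian integral over the ball, which yields \(Z\gtrsim(\pi/(\beta L_\star))^{d/2}\) up to a tail correction. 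The ball-volume version is simpler and is enough for the logarithmic rate.

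\textbf{Upper bound on the tail numerator.} We use \(Ve^{-\beta V}\le \frac{2}{\beta}e^{-\beta V/2}\) on all of \(\R^d\), which gives
\[
\int_{\{V>\delta\}}Ve^{-\beta V}\,\dd x\le \frac{2}{\beta}e^{-\beta\delta/4}\int e^{-\beta V/4}\,\dd x .
\]
The remaining integral is controlled by the tail coercivity condition:
\[
\int e^{-\beta V/4}\,\dd x\le e^{\beta C_{\rm tail}/4}\int e^{-\beta c_{\rm tail}\|x\|^2/4}\,\dd x=e^{\beta C_{\rm tail}/4}\bigl(4\pi/(\beta c_{\rm tail})\bigr)^{d/2}.
\]

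\textbf{Main obstacle and choice of \(\delta\).} The factor \(e^{\beta C_{\rm tail}/4}\) grows exponentially in \(\beta\), which would destroy the rate. The fix is to apply coercivity only far out and use the level-set cut near the minimizer. Split the tail region further into \(\{\|x\|\le R\}\) and \(\{\|x\|>R\}\), where \(R\) is chosen so that \(c_{\rm tail}R^2\ge 2C_{\rm tail}+1\); then \(V\ge\frac{c_{\rm tail}}2\|x\|^2\) for \(\|x\|>R\).
\begin{itemize}
\item On \(\{\|x\|>R\}\), the Gaussian integral gives a contribution \(\lesssim\beta^{-1}e^{-\beta c_{\rm tail}R^2/8}(C/\beta)^{d/2}\), which is harmless.
\item On \(\{\|x\|\le R,\ V>\delta\}\), the contribution is at most \(\frac{2}{\beta}e^{-\beta\delta/2}\,\omega_dR^d\).
\end{itemize}
Dividing by the normalizer lower bound, the tail part of \(\pi_\beta V\) is at most
\[
\frac{C}{\beta}\Bigl(e^{-\beta\delta/2}(C\beta)^{d/2}+e^{-c\beta}\Bigr).
\]
Choosing \(\delta=\frac{d\log(1+\beta)}{\beta}\) makes \(e^{-\beta\delta/2}\beta^{d/2}\le1\), up to constants absorbed into \(C^{d}\) factors and handled by the \(\log\). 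Summing the contributions then gives
\[
\pi_\beta V\le C\,\frac{d\log(1+\beta)}{\beta}.
\]

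\textbf{Constants.} The point requiring care is that constants of the form \(C^d\) (from \(\omega_dR^d\) and \(L_\star^{d/2}\)) must be absorbed as \(d\log C/\beta\). This is possible after enlarging \(\beta_0\) so that \(\log(1+\beta)\ge1\), since \(d\log C/\beta\le C'\,d\log(1+\beta)/\beta\).
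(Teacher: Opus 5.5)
Your proof is correct, and it reaches the bound by a different route than the paper. The paper invokes the finite-temperature estimate of Raginsky--Rakhlin--Telgarsky (their Proposition~11), which is an entropy argument. One writes $\pi_\beta U-U_\star=\beta^{-1}\bigl(h(\pi_\beta)-\log\int e^{-\beta(U-U_\star)}\,dx\bigr)$ with $h$ the differential entropy. Then $h(\pi_\beta)$ is bounded by the Gaussian maximum-entropy inequality through a second-moment bound for $\pi_\beta$, and the partition function is bounded below via the local-volume condition on $B(x_\star,\beta^{-1/2})$.

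That last step is the same as your normalizer estimate. The difference is that you replace the entropy/second-moment half by a direct level-set split. The set $\{V\le\delta\}$ costs at most $\delta$. The compact part of $\{V>\delta\}$ is paid for by $e^{-\beta\delta/2}$ against the $\beta^{d/2}$ coming from $1/Z$. The cut at radius $R$ with $c_{\rm tail}R^2\ge 2C_{\rm tail}+1$ correctly removes the $e^{\beta C_{\rm tail}}$ blow-up and leaves an exponentially small far-tail term.

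Your route is self-contained and uses the two stated hypotheses literally. Proposition~11 of Raginsky et al.\ is formulated under global smoothness and $(m,b)$-dissipativity. The paper's one-line reduction therefore tacitly requires re-deriving a quantitative second-moment bound for $\pi_\beta$ from tail coercivity, which you never need.

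The entropy route does give cleaner dimension bookkeeping. The $\log\omega_d$ from the normalizer is matched by the Gaussian entropy, giving the familiar $\frac{d}{2\beta}\log(\cdot)$ form. Your constants instead contain $(R^2L_\star)^{d/2}$ on the compact term and $\omega_d^{-1}(8\pi L_\star/c_{\rm tail})^{d/2}e^{-\beta c_{\rm tail}R^2/8}$ on the far tail. These are harmless only because the lemma lets $C$ depend on $d$. For the compact term, shifting $\delta$ by $d\log C/\beta$, as your last paragraph indicates, also removes the factor.
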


\begin{proof}
We use the standard finite-temperature Gibbs-bias estimate
\cite[Proposition~11]{raginsky2017nonconvex}.  The two assumptions above
give the two inputs needed for that estimate.  The tail coercivity gives
integrability of the Gibbs density and finite polynomial moments.  The
local-volume condition gives a lower bound on the partition function: for
\(\beta\) large enough, the ball \(B(x_\star,\beta^{-1/2})\) is contained in
the local neighbourhood and
\[
\int e^{-\beta(U(x)-U_\star)}\dd x
\ge
\int_{B(x_\star,\beta^{-1/2})} e^{-\beta L_\star\|x-x_\star\|^2}\dd x
\ge
c\,\beta^{-d/2}.
\]
These are exactly the tail and local-mass inputs used in the cited
Gibbs-bias bound.  The local-volume condition is automatic, for example,
when \(\nabla U\) is locally Lipschitz near a global minimizer.
\end{proof}

\begin{theorem}[Finite-time excess risk]
\label{thm:finite-time-risk}
In the potential case described above, under the assumptions of
Corollary~\ref{cor:finite-time-wasserstein} and
Lemmas~\ref{lem:finite-time-risk-transfer} and \ref{lem:gibbs-bias}, and
for \(\beta\ge\beta_0\), there are constants \(C_M<\infty\) and \(c>0\),
independent of \(\lambda\in(0,\lambda_0]\) and \(n\), such that
\begin{equation}\label{eq:finite-time-risk}
R_n
\le
C_Me^{-cn\lambda}
+
C_M\lambda
+
C\,\frac{d\log(1+\beta)}{\beta}.
\end{equation}
\end{theorem}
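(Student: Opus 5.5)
The plan is to split the excess risk into a sampling part and a Gibbs-bias part:
\[
R_n=\bigl(\mu_nU-\pi_\beta U\bigr)+\bigl(\pi_\beta U-U_\star\bigr),
\]
and to bound the two terms by separate earlier results. The second term is exactly $B_\beta$. For $\beta\ge\beta_0$, Lemma~\ref{lem:gibbs-bias} bounds it by $C\,d\log(1+\beta)/\beta$, which is the last term in \eqref{eq:finite-time-risk}.

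For the first term I apply Lemma~\ref{lem:finite-time-risk-transfer} with $\mu=\mu_n$ and $\nu=\pi_\beta$. The moment prefactor has to be bounded uniformly in $n$ and $\lambda$.
\begin{itemize}
\item For $\mu_n$, this is \eqref{eq:uniform-risk-weight}. It holds once $\ell\ge\kappa$ in Theorem~\ref{thm:polynomial-foster-lyapunov}, and the forward invariance of $\mathcal M_M$ in Lemma~\ref{lem:block-general-initial} keeps the constant tied to $M$.
\item For $\pi_\beta$, the $2\kappa$-moment is finite by the coercivity $U\ge c\|x\|^{2r+2}-C$ from Lemma~\ref{lem:potential-one-sided-w2}, or alternatively by the quadratic tail assumption of Lemma~\ref{lem:gibbs-bias}.
\end{itemize}
This gives $|\mu_nU-\pi_\beta U|\le C_M W_2(\mu_n,\pi_\beta)$.

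Since we are in the potential case $h=\nabla U$, the sharpened estimate \eqref{eq:finite-time-w2-first-order} of Corollary~\ref{cor:finite-time-wasserstein} applies and yields $W_2(\mu_n,\pi_\beta)\le C_Me^{-cn\lambda}+C_M\lambda$. Multiplying by the uniform prefactor gives the first two terms of \eqref{eq:finite-time-risk}. The only bookkeeping is to absorb the prefactor into $C_M$ and to check that $c$ is the contraction-block rate, which does not depend on $\lambda$.

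The step most likely to need care is consistency of hypotheses rather than any new estimate. The growth exponent $\kappa$ for $\nabla U=h$ is $2r+1$, and this has to be compatible with $\ell_0$, i.e.\ $\ell_0$ must be chosen with $\ell_0\ge\kappa$. Also, the constant in \eqref{eq:finite-time-w2-first-order} depends on $\beta$ through the LSI contraction of Lemma~\ref{lem:potential-one-sided-w2}. I would record that $C_M$ and $c$ are allowed to depend on the fixed $\beta$, so that only the Gibbs term carries explicit $\beta$-dependence, matching the form of \eqref{eq:finite-time-risk}.
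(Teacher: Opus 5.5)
Your proposal is correct and matches the paper's proof. It uses the same split $R_n=(\mu_nU-\pi_\beta U)+(\pi_\beta U-U_\star)$, handles the first term with Lemma~\ref{lem:finite-time-risk-transfer}, the uniform moment bound \eqref{eq:uniform-risk-weight} and the first-order potential-case estimate \eqref{eq:finite-time-w2-first-order}, and handles the second with Lemma~\ref{lem:gibbs-bias}. Your extra bookkeeping ($\kappa=2r+1$ with $\ell\ge\kappa$, finiteness of the $2\kappa$-moment of $\pi_\beta$, and letting $C_M,c$ depend on the fixed $\beta$) is consistent with what the paper leaves implicit.
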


\begin{proof}
Decompose
\[
R_n
=
\bigl(\mu_nU-\pi_\beta U\bigr)
+
\bigl(\pi_\beta U-U_\star\bigr).
\]
By Lemma~\ref{lem:finite-time-risk-transfer} and
\eqref{eq:uniform-risk-weight}, the first term is bounded by
\(C_MW_2(\mu_n,\pi_\beta)\).  In the potential case, the first-order
finite-time estimate \eqref{eq:finite-time-w2-first-order} applies by
Theorem~\ref{thm:potential-first-order-w2}; Lemma~\ref{lem:gibbs-bias}
controls the second term.
\end{proof}

\begin{corollary}[Iteration complexity up to temperature bias]
\label{cor:iteration-complexity}
Fix \(\beta\) and let \(\varepsilon\in(0,1)\). Let
\(N_\varepsilon\) denote the number of iterations needed to make the
finite-time numerical contribution to the excess risk at most
\(\varepsilon\), with the temperature bias \(B_\beta\) kept separate.
Under the assumptions of Theorem~\ref{thm:finite-time-risk}, there is a choice of \(\lambda\) such
that
\[
R_n\le \varepsilon+B_\beta
\]
after
\[
N_\varepsilon=\widetilde O(\varepsilon^{-1})
\]
iterations.
If \(\beta\) is chosen separately so that \(B_\beta\le\varepsilon\), then
the same bound gives total excess risk \(O(\varepsilon)\), with the usual
dependence of the constants on the chosen temperature.
\end{corollary}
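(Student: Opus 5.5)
We will read off the iteration count directly from the finite-time risk bound of Theorem~\ref{thm:finite-time-risk}. With \(\beta\) fixed, that theorem gives
\[
R_n - B_\beta \le C_M e^{-cn\lambda} + C_M\lambda
\]
for all \(n\ge0\) and \(\lambda\in(0,\lambda_0]\). Here \(B_\beta=\pi_\beta U-U_\star\) is exactly the temperature bias controlled in Lemma~\ref{lem:gibbs-bias}. The decomposition in the proof of Theorem~\ref{thm:finite-time-risk} shows that the numerical contribution is precisely \(\mu_nU-\pi_\beta U\le C_MW_2(\mu_n,\pi_\beta)\). It is therefore bounded by the right-hand side of \eqref{eq:finite-time-w2-first-order}, and the whole task is to make both terms at most \(\varepsilon/2\).

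The first step is to choose the step size. Set \(\lambda=\lambda_\varepsilon:=\min\{\lambda_0,\varepsilon/(2C_M)\}\). Then the discretization term satisfies \(C_M\lambda\le\varepsilon/2\). Crucially, the constants \(C_M\) and \(c\) in \eqref{eq:finite-time-risk} are independent of \(\lambda\in(0,\lambda_0]\) and of \(n\). This uniformity is inherited from the forward invariance of \(\mathcal M_M\) in Lemma~\ref{lem:block-general-initial} and from the uniform moment bound \eqref{eq:uniform-risk-weight}. It guarantees that shrinking \(\lambda\) does not inflate the constants.

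The second step is to choose the iteration count. We need \(C_Me^{-cn\lambda}\le\varepsilon/2\), which holds once
\[
n\ge \frac{1}{c\lambda}\log\frac{2C_M}{\varepsilon}.
\]
We set \(N_\varepsilon\) to be the ceiling of this quantity. For \(\varepsilon\) small, \(\lambda_\varepsilon=\varepsilon/(2C_M)\), so
\[
N_\varepsilon\le 1+\frac{2C_M}{c\,\varepsilon}\log\frac{2C_M}{\varepsilon}=\widetilde O(\varepsilon^{-1}).
\]
For larger \(\varepsilon\) the cap \(\lambda_0\) only changes constants. Then \(R_n\le\varepsilon+B_\beta\) for all \(n\ge N_\varepsilon\).

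For the final statement, we choose \(\beta\ge\beta_0\) with \(Cd\log(1+\beta)/\beta\le\varepsilon\), which is possible since this quantity tends to zero. By \eqref{eq:gibbs-bias} this gives \(B_\beta\le\varepsilon\), and hence \(R_n\le2\varepsilon\). The constants \(C_M,c\) now depend on this chosen \(\beta\), which is the stated caveat.

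The only delicate point is that uniformity claim. The exponential rate \(c\) and the constant \(C_M\) must not degrade as \(\lambda\to0\). This rests on \(c_\star=-\log q_\rho/T_\lambda\) being bounded below, as shown in Proposition~\ref{prop:finite-time-weighted}. In the potential case, the analogous fact for the \(W_2\) contraction block \(T_\lambda^\beta\) plays the same role, and we will simply cite it.
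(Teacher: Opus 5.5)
Your proposal is correct and follows the paper's proof: start from $R_n-B_\beta\le C_Me^{-cn\lambda}+C_M\lambda$, take $\lambda\asymp\varepsilon$ and $n\lambda\gtrsim\log(1/\varepsilon)$, and handle $B_\beta$ separately through the choice of $\beta$, whose constants then depend on that temperature. Your explicit choices $\lambda=\min\{\lambda_0,\varepsilon/(2C_M)\}$ and $N_\varepsilon=\lceil (c\lambda)^{-1}\log(2C_M/\varepsilon)\rceil$ make the paper's $\asymp$ and $\gtrsim$ quantitative.
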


\begin{proof}
Theorem~\ref{thm:finite-time-risk} gives
\[
R_n-B_\beta
\le
C_Me^{-cn\lambda}+C_M\lambda.
\]
Choose \(\lambda\asymp\varepsilon\).  The discretization term is then
\(O(\varepsilon)\).  The transient term is also \(O(\varepsilon)\) whenever
\(n\lambda\gtrsim \log(1/\varepsilon)\), hence
\[
n=\widetilde O(\varepsilon^{-1}).
\]
The temperature bias \(B_\beta\) is
not reduced by decreasing \(\lambda\) or increasing \(n\); it must either be
kept as a fixed residual or controlled by choosing \(\beta\), in which case
the constants should be read with their dependence on the chosen temperature.
\end{proof}

\section{Learning performance and mechanism verification}
\label{sec:experiments}

The experiments separate practical learning performance from asymptotic mechanism verification. We first consider two full-network Fashion-MNIST regimes. In the stabilization-active regime, untamed SGLD develops large-norm failures, so the method must provide sufficient stabilization without erasing the learning update. In the ordinary-training regime, untamed SGLD remains stable, so a successful taming rule should preserve the original learning dynamics rather than suppress them unnecessarily. These two experiments test whether RELTA adapts its intervention to the actual stability pressure. We then use a controlled one-dimensional model, where the target law is accurately available, to isolate the stationary Wasserstein rates and the error channels predicted by the theory.

The step sizes are therefore chosen for different purposes. The first experiment uses a shared larger step size, \(\lambda=0.4\), to place all methods under stabilization stress and compare the strength and learning cost of their interventions. The second uses a realistic training-scale step size to compare how TUSLA and RELTA affect ordinary learning dynamics when the untamed method is already stable. The third uses smaller step sizes to test the asymptotic stationary Wasserstein rates.

All RELTA hyperparameters were fixed before the reported evaluation and shared across seeds. The threshold was obtained from the prescribed train-only pilot procedure. No RELTA hyperparameter was tuned per seed or adjusted post hoc.

\subsection{Stabilization-active learning: sufficient control with lighter intervention}
\label{subsec:fmnist}

The first Fashion-MNIST experiment considers a practical learning regime with active stabilization pressure.  RELTA, TUSLA, and untamed SGLD are run at the common aggressive step size \(\lambda=0.4\), using the same ReLU multilayer perceptron
\[
784\!\to\!256\!\to\!128\!\to\!10,
\]
the same mini-batch size, update budget, data split, and paired randomness across \(10\) seeds.  RELTA and TUSLA share the same regularized numerator and differ only in the taming denominator; RELTA uses the quadratic denominator in~\eqref{eq:quadratic-denom}, with \(K=2\) and a train-only \(0.8\)-quantile pilot threshold.  Untamed SGLD is the unstabilized reference, and validation-tuned AdamW~\cite{loshchilov2019adamw} is included as an external optimization baseline.  We report final metrics and normalized area-under-learning-curve (nAULC) summaries as mean (SD); the supplementary reproducibility package contains the run scripts, fixed seeds, pilot-threshold protocol, nAULC convention, and per-seed output files used to generate the tables.

\begin{table}[H]
\centering
\caption{Fashion-MNIST learning performance under active stabilization pressure. Lower loss metrics and higher accuracy metrics are better.}
\label{tab:fmnist-confirmatory}
\small
\begin{tabular}{lccccc}
\toprule
method
& loss nAULC
& accuracy nAULC
& final loss
& final accuracy
& mean \(D\) \\
\midrule
RELTA
& \(0.4615\,(0.0042)\)
& \(0.8355\,(0.0016)\)
& \(0.4250\,(0.0140)\)
& \(0.8579\,(0.0049)\)
& \(2.386\,(0.020)\) \\
TUSLA
& \(0.5711\,(0.0041)\)
& \(0.7977\,(0.0016)\)
& \(0.4590\,(0.0103)\)
& \(0.8393\,(0.0043)\)
& \(9.389\,(0.011)\) \\
untamed
& \(0.7940\,(0.9128)\)
& \(0.8171\,(0.0172)\)
& \(0.4727\,(0.0401)\)
& \(0.8401\,(0.0153)\)
& \(1.000\,(0.000)\) \\
AdamW
& \(0.4703\,(0.0024)\)
& \(0.8365\,(0.0009)\)
& \(0.4273\,(0.0122)\)
& \(0.8558\,(0.0045)\)
& -- \\
\bottomrule
\end{tabular}
\end{table}

Table~\ref{tab:fmnist-confirmatory} reports mean (SD).  For untamed SGLD, the loss nAULC distribution is strongly right-tailed: its median (IQR) is \(0.497\,(0.489,0.515)\), while the mean \(0.794\) is driven by two large-norm tail runs.  These same two seeds reach final parameter norms of approximately \(115.5\) and \(170.0\), compared with roughly \(24\)--\(34\) for the other untamed runs. TUSLA prevents these tail failures, but its mean denominator of \(9.389\) reveals substantial suppression of the learning update. RELTA uses a much lighter mean denominator of \(2.386\) and outperforms TUSLA on all four learning metrics in every paired seed. Compared with untamed SGLD, RELTA improves the mean metrics by eliminating the heavy tail; the robust median confirms that nonfailure untamed trajectories can learn comparably fast, but without stabilization they are unreliable at this step size.

AdamW provides a strong external optimization reference. It has a small advantage in accuracy nAULC, whereas RELTA achieves lower loss nAULC in all \(10\) paired runs and comparable final performance.

\paragraph{Finite-time hitting times.}
As a finite-time learning diagnostic, we also report the median number of steps needed to hit several prescribed loss
thresholds, together with the number
of seeds that reached the threshold.   For a threshold \(L\), define
\[
\tau(L)=\min\{k:\mathrm{loss}_k\le L\},
\]

\begin{table}[H]
\centering
\small
\caption{Loss-threshold hitting steps on Fashion-MNIST under the active-stabilization setting.}
\label{tab:fmnist-hitting-time}
\begin{tabular}{lcccc}
\toprule
Method & \(\tau(0.8)\) & \(\tau(0.6)\) & \(\tau(0.5)\) & \(\tau(0.4)\) \\
\midrule
Untamed SGLD & \(100\) (10/10) & \(250\) (10/10) & \(500\) (10/10) & -- (0/10) \\
TUSLA        & \(275\) (10/10) & \(725\) (10/10) & \(1600\) (10/10) & -- (0/10) \\
RELTA        & \(100\) (10/10) & \(250\) (10/10) & \(475\) (10/10) & \(2400\) (4/10) \\
AdamW reference & \(100\) (10/10) & \(250\) (10/10) & \(600\) (10/10) & -- (0/10) \\
\bottomrule
\end{tabular}
\end{table}
Entries report median hitting steps over successful seeds; the number of successful seeds is shown in parentheses.

RELTA reaches the moderate thresholds as quickly as untamed SGLD and the AdamW reference, while being substantially faster than TUSLA.  At the lowest loss threshold \(0.4\), RELTA is the only method that reaches the threshold in any seed.  This supports the finite-time interpretation of the learning experiments: RELTA preserves enough drift for rapid progress while retaining the stabilization needed at an aggressive step size.

\subsection{Ordinary learning: preservation without unnecessary suppression}
\label{subsec:fmnist-horizon}

We next isolate the dynamical cost of taming in a conventional full-network Langevin experiment, using the same Fashion-MNIST MLP architecture and paired-seed protocol as above.  The methods share the empirical-risk oracle, radial regularizer, mini-batch size, and update budget; only the taming denominator differs.  RELTA again uses the quadratic denominator in~\eqref{eq:quadratic-denom}, with \(K=2\) and a train-only \(0.8\)-quantile pilot threshold.  We run at \(\lambda=2\times10^{-4}\) to physical time \(T=4.8\).

For a denominator sequence \((D_k)\), define the retained effective drift horizon by
\[
T_{\mathrm{eff}}(n)
=
\sum_{k=0}^{n-1}\frac{\lambda}{D_k}.
\]
For untamed SGLD, $T_{\mathrm{eff}}/T=1$; smaller values indicate stronger suppression of the drift update.  Table~\ref{tab:fmnist-horizon} reports this mechanism diagnostic together with the learning metrics, all summarized as mean (SD) over \(10\) paired seeds.  The supplementary reproducibility package records the precise split, normalization, checkpointing, nAULC convention, fixed seeds, and per-seed outputs.

\begin{table}[H]
\centering
\caption{Full-network Fashion-MNIST SGLD horizon experiment. Lower loss nAULC and higher accuracy and effective-horizon metrics are better.}
\label{tab:fmnist-horizon}
\small
\begin{tabular}{lccccc}
\toprule
method & loss nAULC & accuracy nAULC & \(T_{\mathrm{eff}}/T\) & final loss & final accuracy\\
\midrule
TUSLA & \(2.110\,(0.079)\) & \(0.252\,(0.029)\) & \(0.078\,(0.0001)\) & \(2.185\,(0.206)\) & \(0.310\,(0.054)\)\\
RELTA & \(1.016\,(0.026)\) & \(0.663\,(0.011)\) & \(0.905\,(0.0003)\) & \(0.740\,(0.023)\) & \(0.739\,(0.010)\)\\
untamed SGLD & \(1.001\,(0.025)\) & \(0.668\,(0.010)\) & \(1.000\,(0.0000)\) & \(0.703\,(0.021)\) & \(0.751\,(0.008)\)\\
\bottomrule
\end{tabular}
\end{table}

Untamed SGLD and RELTA have nearly identical finite-horizon learning dynamics, whereas TUSLA loses most of its effective drift horizon.  RELTA improves both validation-loss and validation-accuracy nAULC relative to TUSLA in all 10 paired seeds.  Untamed SGLD is slightly better than RELTA, but the gap is small: accuracy nAULC differs by only \(0.0048\), and RELTA retains \(90.5\%\) of the untamed effective horizon at \(T=4.8\).  TUSLA retains only \(7.8\%\) and underperforms both methods by a wide margin.

Over this finite horizon, taming acts mainly as a safeguard rather than an accelerator.  At this training-scale step size, untamed SGLD remains well behaved, and RELTA accordingly preserves nearly untamed learning dynamics while retaining an exterior stability mechanism; TUSLA instead applies the correction throughout the learning region.

\subsection{Mechanism verification: stationary accuracy and error channels}
\label{subsec:stationary-rate}

We use the one-dimensional normalized-design phase-retrieval risk
\[
\ell_n(x)
=
\frac1{4n}\sum_{i=1}^n (x^2-y_i)^2+\frac{\rho}{2}x^2,
\qquad
U(x):=\mathbb E[\ell_n(x)]
=
\frac{x^4}{4}-\frac{1.75}{2}x^2+\text{constant}.
\]

The target is nonconvex but only mildly metastable: its wells are at $\pm1.32288$, its barrier height is $0.765625$. The target density is proportional to \(e^{-U(x)}\), corresponding to \(\beta=1\).

We use a conditionally unbiased Gaussian stochastic-gradient oracle whose conditional variance matches that of a batch-one component gradient. Since \(\nabla U_i(x)=x^3+(\rho-y_i)x,\) the gradient noise grows at a lower order than the leading cubic drift and therefore does not affect the dominant tail stability mechanism, as required by the theory.

We compare the invariant laws of the mean-oracle RELTA chain,
the stochastic-gradient RELTA chain, and a
\(\sqrt{\lambda}\)-scale TUSLA-form baseline over
\[
\lambda\in\{2^{-4},2^{-5},2^{-6},2^{-7},2^{-8}\}.
\]
For each \(\lambda\), we use 10 seeds and 700 antithetic pairs per seed,
with burn-in physical time \(60\) and sampling physical time \(300\).
Wasserstein errors are computed by exact one-dimensional discrete optimal
transport against a high-accuracy quadrature approximation of the Gibbs target.

\begin{figure}[H]
    \centering
    \includegraphics[width=0.9\textwidth]{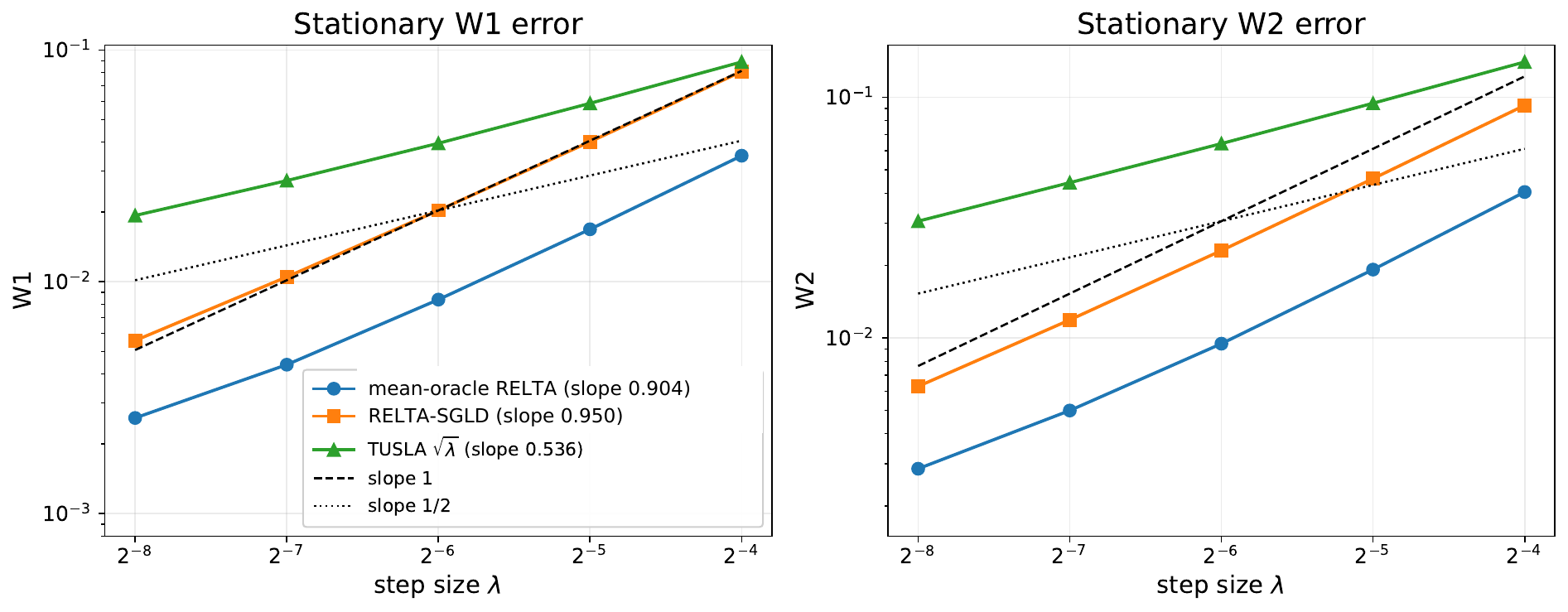}
    \caption{Stationary Wasserstein errors in the smooth quartic model.  The $\lambda$-scale chains display near-first-order $W_1$ and $W_2$ slopes, whereas the $\sqrt\lambda$ TUSLA-form baseline displays slopes close to one half.}
    \label{fig:stationary-rates}
\end{figure}

\begin{table}[H]
\centering
\caption{Pooled stationary log--log slopes from the four smallest step sizes.}
\label{tab:stationary-rate-slopes}
\small
\begin{tabular}{lrr}
\toprule
method & $W_1$ slope & $W_2$ slope\\
\midrule
mean-oracle RELTA & 0.904 & 0.917\\
RELTA & 0.950 & 0.956\\
TUSLA $\sqrt\lambda$ & 0.536 & 0.543\\
\bottomrule
\end{tabular}
\end{table}

Table~\ref{tab:stationary-rate-slopes} summarizes the stationary fits: both the mean-oracle and stochastic-gradient RELTA chains exhibit near-first-order \(W_1\) and \(W_2\) slopes, whereas the
\(\sqrt{\lambda}\)-scale TUSLA baseline remains close to half order.  The one-dimensional quartic double-well target is included as a case covered by Theorem~\ref{thm:potential-first-order-w2}.  In the normalization used in the experiment,
\[
U''(x)=3x^2-1.75\ge -1.75,
\qquad
xU'(x)=x^4-1.75x^2,
\]
so the target is globally semiconvex and coercive in the Lyapunov sense.  The near-unit \(W_2\) slopes in Table~\ref{tab:stationary-rate-slopes} are consistent with the first-order \(W_2\) rate predicted by the theorem.

\section{Discussion}
\label{sec:discussion}

Taming should respond to the source of instability, not to the absolute size of the stochastic gradient.  RELTA uses the $Q/I$ balance to set the tail strength and a threshold to localize its activation.  This yields a $\lambda$-scale denominator that approaches one faster than the usual $\sqrt\lambda$-scale design, preserves the learning region, and enables sharper stationary Wasserstein rates.

A remaining theoretical distinction concerns the scope of first-order stationary $W_2$ accuracy.  In the SGLD setting, the present assumptions automatically supply the functional-inequality inputs needed to transfer the first-order fixed-block error to invariant laws.  For general non-gradient drifts, however, an invariant-law LSI or an equivalent one-sided squared-metric contraction input is still needed. Developing verifiable criteria for this non-gradient case, and testing nonradial calibrations on larger learning problems, are natural directions beyond the present scope.

From a practical perspective, the next step is to evaluate RELTA on
larger and more heterogeneous network architectures.  In such settings, the need for stabilization may vary across parameter blocks and over the course of training.  This raises the question of whether one can replace fixed threshold and coefficient by inexpensive online estimates of the local $Q/I$ balance, so that taming can adapt to the observed stability pressure without introducing extensive tuning.

\appendix

\section{Details for the stationary Wasserstein analysis}
\label{app:wasserstein-details}

\subsection{Weighted metric verification and stationary transfer}
\label{app:weighted-transfer-details}

We verify that the weighted semimetric used in
Proposition~\ref{prop:target-weighted-contraction} satisfies the EGZ
hypotheses and the bridge estimates needed for stationary transfer.  The
target generator is
\(\mathcal L\varphi(x)=\ip{b(x)}{\nabla\varphi(x)}+\beta^{-1}\Delta\varphi(x)\),
with \(b=-h\).  Set \(V_2(x)=1+\|x\|^2\) and
\(G_2(x,y)=1+\epsilon V_2(x)+\epsilon V_2(y)\).  Radial dissipativity gives
\begin{align*}
\mathcal LV_2(x)
&=2\ip{x}{b(x)}+2d/\beta
\le C_V-c_VV_2(x),
&
\frac{\|\nabla V_2(x)\|}{V_2(x)}
&=\frac{2\|x\|}{1+\|x\|^2}
\le 1.
\end{align*}
Together with the generalized one-sided Lipschitz and exterior
contractivity estimates in Lemma~\ref{lem:target-two-point-geometry}, these
are the drift and Lyapunov hypotheses of the multiplicative semimetric
contraction theorem of
Eberle--Guillin--Zimmer~\cite[Theorem~2.2]{eberleguillinzimmer2019}, after
rescaling the Brownian variance.  Hence the theorem constructs an increasing
concave distance function \(f\) and constants \(\epsilon,c_\rho>0\) such that
\(\rho_2(x,y)=f(\|x-y\|)G_2(x,y)\) satisfies
\eqref{eq:target-weighted-contraction-new}.

We next verify the elementary cost comparisons used in the fixed-block
argument.  The EGZ construction gives a bounded increasing concave function
\(f\), with \(f(0)=0\), \(f(r)>0\) for \(r>0\), and constants
\(r_\ast\in(0,1]\), \(0<c_f\le C_f<\infty\), such that
\begin{equation}\label{eq:app-f-properties}
 c_f r\le f(r)\le C_f r\quad(0\le r\le r_\ast),
 \qquad
 f(r)\ge c_f\quad(r\ge r_\ast),
 \qquad
 f(r)\le C_f r\quad(r\ge0).
\end{equation}
Concavity and \(f(0)=0\) also imply subadditivity:
\(f(a+b)\le f(a)+f(b)\) for \(a,b\ge0\).

The linear bridge follows directly when \(r=\|x-y\|\le r_\ast\).  If
\(r>r_\ast\), then
\(r^2\le2\|x\|^2+2\|y\|^2\le C\{V_2(x)+V_2(y)\}\), and
\eqref{eq:app-f-properties} yields
\(\rho_2(x,y)\ge c_fG_2(x,y)\ge c\max\{r,r^2\}\ge cr\).  Thus
\(c\|x-y\|\le\rho_2(x,y)\).  The same split gives the quadratic bridge:
for \(r\le r_\ast\), \(r^2\le Cf(r)\le C\rho_2(x,y)\), while for
\(r>r_\ast\) the preceding tail argument gives \(r^2\le C\rho_2(x,y)\).
Finally, the upper bound in \eqref{eq:app-f-properties} and the definition
of \(G_2\) imply
\(\rho_2(x,y)\le C\Pi_2(x,y)\|x-y\|\), where
\(\Pi_2(x,y)=1+\|x\|^2+\|y\|^2\).  This proves the three bridge inequalities
in \eqref{eq:rho-three-bridges}.

It remains to check the weak triangle inequality.  Write
\(G_2(u,v)=1+\epsilon V_2(u)+\epsilon V_2(v)\) and choose
\(\delta\in(0,r_\ast]\).  If \(\|u-v\|\le\delta\), then
\begin{equation}\label{eq:app-local-weight-comparable}
G_2(u,w)\le C_\delta G_2(v,w),
\qquad w\in\mathbb R^d,
\end{equation}
because \(V_2(u)\le C_\delta V_2(v)\).  Let
\(a=\|x-y\|\) and \(b=\|y-z\|\).  If both are at most \(\delta\), then
local comparability and subadditivity give
\(\rho_2(x,z)\le C\{\rho_2(x,y)+\rho_2(y,z)\}\).  If, say,
\(a>\delta\) and \(b\le\delta\), then
\(G_2(x,z)\le CG_2(x,y)\), while boundedness of \(f\) and
\(f(a)\ge f(\delta)>0\) give \(\rho_2(x,z)\le C\rho_2(x,y)\); the case
\(b>\delta\), \(a\le\delta\) is symmetric.  If both \(a,b>\delta\), then
\(G_2(x,z)\le C\{G_2(x,y)+G_2(y,z)\}\), and the same boundedness/lower-bound
argument gives the two-term estimate.  Therefore
\[
\rho_2(x,z)
\le
C_\triangle\{\rho_2(x,y)+\rho_2(y,z)\},
\]
which is \eqref{eq:rho-weak-triangle-new}.  The same constant also gives
the corresponding weak triangle inequality at the transport-cost level used in
the main text: take two \(\varepsilon\)-optimal couplings for
\(\mathcal W_{\rho_2}(\mu,\nu)\) and \(\mathcal W_{\rho_2}(\nu,\omega)\),
glue them along the common middle marginal \(\nu\), integrate the preceding
pointwise estimate, and then let \(\varepsilon\downarrow0\).

\subsection{Gaussian smoothing of the centered oracle noise}

\label{app:gaussian-smoothing-details}

We prove Lemma~\ref{lem:gaussian-smoothed-defect}.  Let
$L_\lambda=\delta\lambda^{-1/2}$ and define the centered truncation
\[
w_x^R
=
w_x\mathbf 1_{\{\|w_x\|\le L_\lambda\}}
-
\E[w_x\mathbf 1_{\{\|w_x\|\le L_\lambda\}}\mid x].
\]
For $\nu_x^R=\mathcal L(\sigma_\lambda G-\lambda w_x^R)$, a
synchronous coupling and the conditional $q_0$-moment bound give
\begin{align}
W_2^2(\nu_x,\nu_x^R)
&\le
C\lambda^2
\E[\|w_x\|^2\mathbf 1_{\{\|w_x\|>L_\lambda\}}\mid x]
\notag\\
&\le
C\lambda^{(q_0+2)/2}(1+\|x\|^M)
\le
C\lambda^3(1+\|x\|^M).
\label{eq:truncation-transport-app}
\end{align}
Let $\widetilde{w}_x^R$ be an independent conditional copy.  Direct
integration of the Gaussian likelihood ratios gives
\eqref{eq:chi-square-mixture-identity}.  Since
$\|w_x^R\|\le2L_\lambda$, the exponent is uniformly bounded.  Its
linear term has conditional expectation zero, and therefore
\[
\chi^2(\nu_x^R\Vert\gamma_\lambda)
\le
C\lambda^2
\bigl(\E[\|w_x^R\|^2\mid x]\bigr)^2
\le
C\lambda^2(1+\|x\|^M).
\]
Using ${\rm KL}\le\log(1+\chi^2)\le\chi^2$ and Gaussian $T_2$,
\[
W_2^2(\nu_x^R,\gamma_\lambda)
\le
2\sigma_\lambda^2{\rm KL}(\nu_x^R\Vert\gamma_\lambda)
\le
C\lambda^3(1+\|x\|^M).
\]
Together with \eqref{eq:truncation-transport-app}, this proves
\eqref{eq:gaussian-smoothed-defect-new}.

\subsection{Accumulation of the stochastic-gradient block defect}
\label{app:block-accumulation-details}

For Proposition~\ref{prop:block-sg-w2}, write
\(e_k=X_k-Y_k\) and \(u_k=\E\|e_k\|^2\).  Under the conditional coupling used
in the proof, the one-step defect \(\Delta_k\) is centered and satisfies
\[
\E[\Delta_k\mid X_k,Y_k]=0,
\qquad
\E[\|\Delta_k\|^2\mid X_k]
\le C\lambda^3(1+\|X_k\|^M).
\]
Let
\[
R_k=e_k+\lambda\{b_\lambda(X_k)-b_\lambda(Y_k)\}.
\]
Then \(R_k\) is measurable with respect to the current state pair, and the
centered square expansion gives
\begin{equation}\label{eq:app-centered-square-compressed}
\E\|e_{k+1}\|^2
=
\E\|R_k\|^2+
\E\|\Delta_k\|^2.
\end{equation}
Writing \(b_\lambda=b+\tau_\lambda\), we split
\[
R_k=e_k+\lambda\{b(X_k)-b(Y_k)\}
     +\lambda\{\tau_\lambda(X_k)-\tau_\lambda(Y_k)\}.
\]
The one-sided bound \eqref{eq:target-global-one-sided} gives
\[
2\lambda\ip{e_k}{b(X_k)-b(Y_k)}
\le C\lambda\|e_k\|^2,
\]
while polynomial local Lipschitzness gives
\[
\lambda^2\|b(X_k)-b(Y_k)\|^2
\le
C\lambda^2 P(X_k,Y_k)\|e_k\|^2.
\]
The two-point taming consistency
\eqref{eq:taming-consistency-two-point} implies
\[
\lambda\|\tau_\lambda(X_k)-\tau_\lambda(Y_k)\|
\le C\lambda^3P(X_k,Y_k)\|e_k\|,
\]
and the resulting cross terms are absorbed into
\(C\lambda\|e_k\|^2+C\lambda^3P(X_k,Y_k)\) by Young's inequality and the
uniform moment bounds.  Consequently,
\[
\E\|R_k\|^2
\le
(1+C\lambda)u_k
+C\lambda^2\E[P(X_k,Y_k)\|e_k\|^2]
+C\lambda^3.
\]
The weighted term is controlled by H\"older and interpolation.  Fix a
high-moment order \(s>1\) and choose \(p>2\) so that, with
\(p'=p/(p-1)\), one has \(p'<2s/(s+1)\).  Interpolating between
\(L^2\) and \(L^{2s}\) then gives an exponent
\(\vartheta\in(1/2,1)\) such that
\[
\E[P(X_k,Y_k)\|e_k\|^2]
\le
\|P(X_k,Y_k)\|_{L^p}\|e_k\|_{L^{2p'}}^2
\le C u_k^\vartheta,
\]
where the uniform high-moment bounds absorb the \(L^{2s}\) factor.  Together with \eqref{eq:app-centered-square-compressed}
this yields
\[
u_{k+1}
\le
(1+C\lambda)u_k+C\lambda^2u_k^\vartheta+C\lambda^3.
\]
Setting \(v_k=u_k/\lambda^2\),
\[
v_{k+1}
\le
(1+C\lambda)v_k+C\lambda^{2\vartheta}v_k^\vartheta+C\lambda.
\]
Since \(\vartheta>1/2\), \(\lambda^{2\vartheta}\le\lambda\) for
\(\lambda\le1\), and \(v^\vartheta\le1+v\).  Discrete Gronwall on
\(k\lambda\le T\) yields \(\sup_{k\le n_\lambda}v_k\le C_T\) and hence
\(u_{n_\lambda}\le C_T\lambda^2\).

\subsection{Mean-oracle chain versus the target diffusion}
\label{app:mean-diffusion-details}

Let
\[
\rho_k
=
\int_{k\lambda}^{(k+1)\lambda}
\{b(Z_s)-b(Z_{k\lambda})\}\,\dd s.
\]
Applying It\^o's formula componentwise to \(b(Z_s)\) gives
\[
b(Z_s)-b(Z_{k\lambda})
=
\int_{k\lambda}^s\mathcal Lb(Z_u)\,\dd u
+
\sqrt{2\beta^{-1}}
\int_{k\lambda}^s\nabla b(Z_u)\,\dd B_u.
\]
The polynomial derivative bounds, finite-time diffusion moments, and
conditional It\^o isometry imply
\[
\|\E[\rho_k\mid\mathcal F_{k\lambda}]\|
\le C\lambda^2P(Z_{k\lambda}),
\qquad
\E[\|\rho_k-\E_k\rho_k\|^2\mid\mathcal F_{k\lambda}]
\le C\lambda^3P(Z_{k\lambda}).
\]
For the variance bound, the stochastic part is written by stochastic
Fubini as
\[
\sqrt{2\beta^{-1}}
\int_{k\lambda}^{(k+1)\lambda}
((k+1)\lambda-u)\nabla b(Z_u)\,\dd B_u,
\]
whose conditional second moment is bounded by
\(C\int_{k\lambda}^{(k+1)\lambda}((k+1)\lambda-u)^2
\E_kP(Z_u)\,\dd u\le C\lambda^3P(Z_{k\lambda})\).

Couple the mean-oracle chain and diffusion synchronously and write
\(e_k=Z_{k\lambda}-Y_k\).  With
\(m_k=\E[\rho_k\mid\mathcal F_{k\lambda}]\) and
\(\widehat\rho_k=\rho_k-m_k\),
\[
e_{k+1}
=
e_k+\lambda\{b(Z_{k\lambda})-b(Y_k)\}
-\lambda\tau_\lambda(Y_k)+m_k+\widehat\rho_k.
\]
The centered part has zero conditional cross term with the preceding
measurable summand.  Set
\(\delta_k=-\lambda\tau_\lambda(Y_k)+m_k\).  By
\eqref{eq:taming-consistency-point} and the local mean bound,
\(\|\delta_k\|\le C\lambda^2P(Z_{k\lambda},Y_k)\).  Hence
\[
2\ip{e_k}{\delta_k}
\le
\lambda\|e_k\|^2+C\lambda^{-1}\|\delta_k\|^2
\le
\lambda\|e_k\|^2+C\lambda^3P(Z_{k\lambda},Y_k),
\]
and the remaining cross term with
\(\lambda\{b(Z_{k\lambda})-b(Y_k)\}\) is bounded in the same way using
polynomial local Lipschitzness.  Combining these estimates with
\eqref{eq:target-global-one-sided} and
\eqref{eq:target-polynomial-local-lipschitz} gives
\[
w_{k+1}
\le
(1+C\lambda)w_k+C\lambda^2w_k^\vartheta+C\lambda^3,
\qquad
w_k=\E\|e_k\|^2,
\]
for some \(\vartheta\in(1/2,1)\).  The same scaled Gronwall argument as in
Appendix~\ref{app:block-accumulation-details} proves
\(w_{n_\lambda}\le C_T\lambda^2\) and hence
\eqref{eq:mean-block-w2-new}.

\section{Details for the polynomial moment bound}
\label{app:high-moment-details}

This appendix supplies the standard estimates used in the proof of
Theorem~\ref{thm:polynomial-foster-lyapunov}.  Throughout, \(m=2\ell\).

\subsection{Compact-region estimate}

Fix \(R<\infty\).  For \(t=\|\theta\|\le R\), write
\[
Y
=
\theta-\lambda H_{\lambda,R_0}(\theta,Z),
\]
where
\[
H_{\lambda,R_0}(\theta,Z)
=
\frac{\mathsf G(\theta,Z)+\eta\theta\|\theta\|^{2r}}
{D_{\lambda,R_0}(\|\theta\|)}.
\]
Since \(D_{\lambda,R_0}\ge1\) and
\(\|\mathsf G(\theta,Z)\|\le\mathcal K(Z)(1+\|\theta\|^q)\),
\[
\sup_{\|\theta\|\le R}
\E\|H_{\lambda,R_0}(\theta,Z)\|^m
\le
K_{m,R}
\]
uniformly for \(0<\lambda\le1\).  The polynomial increment inequality
\[
\|u+v\|^m
\le
\|u\|^m
+
C_m'
\sum_{j=1}^m
\|u\|^{m-j}\|v\|^j
\]
therefore gives
\[
\E_U\|Y\|^m
\le
t^m+B_{m,R}\lambda,
\qquad
t\le R,
\]
because \(\lambda^j\le\lambda\) for \(1\le j\le m\) and
\(0<\lambda\le1\).  After increasing \(B_{m,R}\),
\[
\E_U\|Y\|^m
\le
(1-c\lambda)t^m+B_{m,R}\lambda
\]
holds on the same ball for any fixed \(c>0\).

\subsection{Expansion of the stochastic-gradient remainder}

For \(t\ge R\), recall that
\[
Y
=
\bigl(1-\eta a_\lambda(t)\bigr)\theta+E
\]
and
\[
\|E\|
\le
a_\lambda(t)\mathcal K(Z)\delta(t)t.
\]
Applying the polynomial increment inequality directly to these two terms,
without introducing additional notation, gives
\[
\begin{aligned}
\|Y\|^m
&\le
|1-\eta a_\lambda(t)|^m t^m
+
C_m'
\sum_{j=1}^m
|1-\eta a_\lambda(t)|^{m-j}
a_\lambda(t)^j
\mathcal K(Z)^j
\delta(t)^j
t^m.
\end{aligned}
\]
Let \(M_A=\max_{0\le a\le A}|1-\eta a|.\) After enlarging \(R\), we may assume \(\delta(t)\le1\).  Since
\(a_\lambda(t)\le A\),
\[
a_\lambda(t)^j\delta(t)^j
\le
A^{j-1}a_\lambda(t)\delta(t).
\]
Taking expectation in \(Z\), define
\[
C_m
=
C_m'
\sum_{j=1}^m
M_A^{m-j}A^{j-1}\E[\mathcal K(Z)^j].
\]
The assumption \(\E\mathcal K(Z)^m<\infty\) ensures \(C_m<\infty\).
Consequently,
\[
\E_U\|Y\|^m
\le
|1-\eta a_\lambda(t)|^m t^m
+
C_m a_\lambda(t)\delta(t)t^m,
\]
which is \eqref{eq:pre-noise-tail-main}.

\subsection{Gaussian increment}

Let \(\sigma=\sqrt{2\beta^{-1}}\) and \(\xi\sim N(0,I_d)\).  For
\(f(z)=\|z\|^m\),
\[
\|\nabla^2 f(z)\|_{\mathrm{op}}
\le
m(m-1)\|z\|^{m-2}.
\]
Applying the second-order Taylor formula to
\(f(y+\sigma\sqrt\lambda\,\xi)\), the first-order term vanishes after
expectation because \(\E\xi=0\).  Hence, for a constant
\(C_m^{\mathrm G}<\infty\),
\[
\E_\xi
\left[
\|y+\sigma\sqrt\lambda\,\xi\|^m
\right]
\le
\|y\|^m
+
C_m^{\mathrm G}
\left(
\lambda\|y\|^{m-2}
+
\lambda^{m/2}
\right).
\]
Since \(m\ge2\) and \(0<\lambda\le1\),
\[
\lambda^{m/2}\le\lambda,
\]
and therefore
\begin{equation}\label{eq:gaussian-moment-app}
\E_\xi
\left[
\|y+\sigma\sqrt\lambda\,\xi\|^m
\right]
\le
\|y\|^m
+
C_m^{\mathrm G}\lambda
\bigl(1+\|y\|^{m-2}\bigr).
\end{equation}

For every \(\varepsilon>0\), Young's inequality gives
\[
\|y\|^{m-2}
\le
\varepsilon\|y\|^m+C_{m,\varepsilon}.
\]
Applying \eqref{eq:gaussian-moment-app} conditionally on \(Y\), taking
expectation in \(Z\), and using \eqref{eq:pre-noise-global-main}, we obtain
\[
\begin{aligned}
R_\lambda V_m(\theta)
&\le
1+
\bigl(1+C_m^{\mathrm G}\varepsilon\lambda\bigr)
\bigl[(1-c_m\lambda)t^m+B_m\lambda\bigr]
+
C_{m,\varepsilon}'\lambda.
\end{aligned}
\]
Choose \(\varepsilon>0\) so that
\(C_m^{\mathrm G}\varepsilon<c_m/2\), and then decrease \(\lambda_0\) if
necessary.  The preceding display becomes
\[
R_\lambda V_m(\theta)
\le
(1-a_m\lambda)V_m(\theta)+b_m\lambda
\]
for suitable \(a_m>0\) and \(b_m<\infty\), proving
\eqref{eq:V2p-main}.

\end{document}